\title{Multi-Scale Zero-Order Optimization of Smooth Functions in an RKHS}
\author{Shubhanshu Shekhar   \\  \ttt{shshekha@eng.ucsd.edu}
\and 
Tara Javidi  \\  \ttt{tjavidi@eng.ucsd.edu}
} 
\date{}
\g@addto@macro\normalsize{%
\setlength\abovedisplayskip{6pt} 
\setlength\belowdisplayskip{6pt}
\setlength\abovedisplayshortskip{6pt} 
\setlength\belowdisplayshortskip{6pt}
}
\newtheorem{theorem}{Theorem}
\newtheorem{lemma}{Lemma}
\newtheorem{proposition}{Proposition}
\newtheorem{corollary}{Corollary}
\newtheorem{assumption}{Assumption}
\theoremstyle{definition}
\newtheorem{definition}{Definition}
\newtheorem{remark}{Remark}
\newcommand{\lp}{\left (} 
\newcommand{\rp}{\right )} 
\newcommand{\mc}[1]{\mathcal{#1}}
\newcommand{\mbb}[1]{\mathbb{#1}}
\newcommand{\reals}{\mathbb{R}}
\newcommand{\X}{\mathcal{X}}
\DeclareMathOperator*{\argmax}{arg\,max}
\DeclareMathOperator*{\argmin}{arg\,min}
\newcommand{\tbf}[1]{\textbf{#1}}
\newcommand{\pushright}[1]{\ifmeasuring@#1\else\omit\hfill$\displaystyle#1$\fi\ignorespaces}
\newcommand{\pushleft}[1]{\ifmeasuring@#1\else\omit$\displaystyle#1$\hfill\fi\ignorespaces}
\newenvironment{proofoutline}
 {\proof[Proof outline]}
 {\endproof}
\newcommand*{\lpgpucb}{\texttt{LP-GP-UCB}\xspace}
\newcommand*{\heuristic}{\texttt{Heuristic}\xspace}
\newcommand{\f}{f}
\newcommand{\xstar}{x^*}
\newcommand{\partition}{\texttt{Partition}\xspace}
\newcommand{\expand}{\texttt{ExpandAndBound}\xspace}
\newcommand{\recommend}{\texttt{Recommend}\xspace}
\newcommand{\localpoly}{\texttt{LocalPoly}\xspace}
\newcommand{\maxerr}{\texttt{MaxErr}\xspace}
\newcommand{\ttt}[1]{\texttt{#1}}
\newcommand{\utone}{u_{t,E}^{(1)}}
\newcommand{\uttwo}{u_{t,E}^{(2)}}
\newcommand{\DYE}{\mc{D}_{\mc{Y}}^{(E)}} 
\newcommand{\DXE}{\mc{D}_{\domain}^{(E)}}
\newcommand{\domain}{\mathcal{X}} 
\newcommand{\noise}{\eta}
\newcommand{\algo}[1][]{\mc{A}_{#1}}
\newcommand{\recc}[1][n]{z_{#1}}
\newcommand{\creg}[1][n]{\mathcal{R}_{#1}} 
\newcommand{\sreg}[1][n]{\mathcal{S}_{#1}} 
\newcommand{\ireg}[1][t]{\texttt{reg}_{#1}}
\newcommand{\rkhs}[1][K]{\mc{H}_{#1}}
\newcommand{\holder}{H\"older } 
\newcommand{\holdersp}[2][k]{\mc{C}^{#1, #2}}
\newcommand{\Tau}{\mathrm{T}}
\newcommand{\kse}{K_{\text{SE}}}
\newcommand{\kmat}[1][\nu]{K_{#1}}
\newcommand{\matern}{Mat\'ern }
\newcommand{\krq}{K_{\text{RQ}}} 
\newcommand{\kgexp}{K_{\gamma\text{-exp}}} 
\newcommand{\kpp}{K_{\text{pp},q}}
\newcommand{\sobolev}{W^{\nu+D/2, \nu}}
\newcommand{\holderspk}{\mc{C}^{k,\alpha}}
\newcommand{\igain}[1][n]{\gamma_{#1}}
\newcommand{\poly}{\mc{P}_D^k}
\newcommand{\Oh}[1]{\mathcal{O}\lp #1 \rp}
\newcommand{\tOh}[1]{\tilde{\mathcal{O}} \lp #1 \rp }
\newcommand{\propref}[1]{Proposition~\ref{#1}}
\newcommand{\algoref}[1]{Algorithm~\ref{#1}}
\newcommand{\thmref}[1]{Theorem~\ref{#1}}
\begin{document}

\maketitle
\begin{abstract}
We aim to optimize a black-box function $f:\domain \mapsto \reals$ under the assumption that $f$ is \holder smooth and has bounded norm in the Reproducing Kernel Hilbert Space (RKHS) associated with a given kernel $K$. This problem is known to have an agnostic Gaussian Process (GP) bandit interpretation in which an appropriately constructed GP surrogate model with kernel $K$ is used to obtain an upper confidence bound (UCB) algorithm. 
In this paper, we propose a new algorithm (\lpgpucb) where the usual GP surrogate model is augmented with Local Polynomial (LP) estimators of the \holder smooth function $f$ to construct a multi-scale upper confidence bound guiding the search for the optimizer. We analyze this algorithm and derive high probability bounds on its simple and cumulative regret. 
We then prove that the elements of many common reproducing kernel Hilbert spaces are \holder smooth and obtain the corresponding \holder smoothness parameters, and hence, specialize our regret bounds for several  commonly used and practically relevant kernels. When specialized to the Squared Exponential (SE) kernel, \lpgpucb matches the optimal performance, while for the case of \matern kernels~$(\kmat)_{\nu>0}$, it results in uniformly tighter regret bounds for all values of the smoothness parameter $\nu>0$. Most notably, for certain ranges of $\nu$, the algorithm achieves near-optimal bounds on simple and cumulative regrets, matching the algorithm-independent lower bounds up to poly-logarithmic factors, and thus closing the large gap between the existing upper and lower bounds for these values of $\nu$. Additionally, our analysis provides the first explicit regret bounds, in terms of the budget $n$, for the Rational-Quadratic~(RQ) and Gamma-Exponential~(GE). Finally, experiments with synthetic functions as well as a Convolutional Neural Network
hyperparameter tuning task demonstrate the practical benefits of our multi-scale partitioning approach over some existing algorithms numerically. 

\end{abstract}

\section{Introduction}
\label{sec:introduction}
Consider the problem  of 
maximizing a black-box
objective function $\f: \domain \mapsto \reals$ which can only be accessed through a \emph{noisy} \emph{zero-order oracle}
which upon querying the objective function $\f$ at an arbitrary point $x \in \domain$ provides an  observation $y_x = \f(x) + \noise_x$.
Our goal is to design a query-point selection strategy $\algo$, which can efficiently learn about a maximizer $\xstar$  of $\f$ given a finite query (or evaluation) budget of $n$.
After the evaluation budget is exhausted, the algorithm $\algo$ must recommend a point, denoted by $z_n$. 
Two commonly used measures for the performance of a sampling strategy are its \emph{simple regret} $\sreg$ and its \emph{cumulative regret} $\creg$ defined as $\sreg=\f(\xstar) - f(z_n)$ and $\creg = \sum_{t=1}^n f(\xstar) - f(x_t)$. 
Both $\sreg$ and $\creg$ are  random variables, and  we will derive bounds on these quantities which  hold with probability at least $1-\delta$, for some confidence parameter $\delta \in (0,1)$. 

This problem is intractable without any regularity assumptions on the objective function $f$; in this paper, we assume that $f$ has a bounded norm  in the RKHS associated with a kernel $K$ (denoted by $\rkhs$). This formulation is referred to as the \emph{agnostic} Gaussian Process (GP) bandit problem in literature~\citep{srinivas2012information} where a GP surrogate can be used to derive an upper confidence bound (UCB) on the function which then is utilized to guide the search for the optimizer. 
In this paper, we propose a new algorithm  which exploits  smoothness properties of functions lying in the $\rkhs$ of commonly used kernels, to obtain tighter bounds on both $\sreg$ and $\creg$. 
 Our work is  motivated by  a large gap, in the agnostic GP bandits setting, between the best known upper bounds  and the algorithm-independent lower bounds on the regret for the \matern~family, the most commonly used family of kernels~\citep{scarlett2017lower}. 
Next, we present the notations used in this paper, and then conclude the section with an overview of our contributions in Sec.~\ref{subsec:overview} and discussion of related work in Sec.~\ref{subsec:prior}. 

\tbf{Notations.} Recall that $f$ is the objective function mapping $\mc{X} = [0,1]^D$ to $\mc{Y} = \mbb{R}$. The function $f$ can be accessed through a noisy evaluations of the form $y = f(x) + \eta$, where the additive noise $\eta$ is assumed to be $\sigma$ sub-gaussian. We will use the term \emph{cell} to refer to subsets $E$ of $\domain$ of the form $E = \{x \in \domain \;:\; \|x - x_E\|_\infty \leq r_E/2\}$. The terms $x_E$ and $r_E$ shall be referred to as the \emph{center} and the \emph{side-length} of $E$. 
Given a kernel  $K$, we shall use the term $\rkhs$ and $\|\cdot\|_K$ to denote the associated RKHS and the RKHS norm respectively~(see App.~\ref{appendix:preliminaries} for definitions).  For $k \in \mbb{N}$ and $0<\alpha\leq 1$, we use $\mc{C}^{k,\alpha}$ to denote the \holder space with parameters $k$ and $\alpha$ (see App.~\ref{appendix:preliminaries} for definition). 
Let $\mc{D} = \{ (x_i, y_i) \; : \; 1 \leq i \leq m\} \subset \mc{X} \times \mc{Y}$ denote a labelled data set, and introduce $\mc{D}_\mc{X}\coloneqq \{x \; : \exists y \in \mc{Y}, (x, y) \in \mc{D}\}$ and $\mc{D}_\mc{Y} \coloneqq \{ y \;:\; \exists x \in \mc{X}, (x,y) \in \mc{D}\}$. For a cell $E \subset \mc{X}$, we use $\mc{D}^{(E)}$ to denote the subset of $\mc{D}$ with $x_i \in E$. The sets $\mc{D}_{\X}^{(E)}$ and $\mc{D}_{\mc{Y}}^{(E)}$ are also defined in an analogous manner. For  positive integers $k$ and $D$, we use $\mc{P}_D^k$ to denote the set of all polynomials in $D$ variables of degree $k$. Given $f:\domain \mapsto \mbb{R}$ and $E \subset \domain$, we define the quantity $\Phi_k(f, E) = \inf_{p \in \mc{P}_D^k}\sup_{x \in E} |f(x) - p(x)|$ as the smallest uniform approximation error of $f$ with $p$ in $\mc{P}_D^k$.



\subsection{Overview of Results}
\label{subsec:overview}
We first formally state the assumptions on objective function $f$ and the observation noise. 
\begin{assumption}
\label{assump:main} 
We make the following  assumptions: 
\tbf{(A1.1)} $\f \in \rkhs$ for some known kernel $K$ and furthermore $\|\f\|_K \leq B$  for some known constant $B>0$. 
\tbf{(A1.2)} $\f \in \holdersp{\alpha}$
for $k \in \mbb{N}\cup\{0\}$ and $\alpha \in (0,1]$ with $\|f\|_{\holdersp{\alpha}} \leq L$ for some known $L>0$. 
\tbf{(A1.3)} the observation noise $(\eta_t)_{t \geq 0}$ are $i.i.d$ and  $\sigma^2$-sub-Gaussian for some known constant $\sigma^2>0$. 
\end{assumption}
\begin{remark}
\label{remark:assumptions}
Assumption \tbf{(A1.1)} means that $f$ has \emph{low complexity} as measured through the RKHS norm, while \tbf{(A1.3)} requires the observation noise to have \emph{light tails}. These two assumptions are standard, and have been employed in most prior works in the agnostic GP bandits literature. The assumption~\tbf{(A1.2)} requires $f$ to satisfy further \holder smoothness property. In Propositions~\ref{prop:embed1}~and~\ref{prop:embed2}, we show that 
\tbf{(A1.2)} is superfluous 
for all common and practically relevant kernels as it can be derived as a consequence of~\tbf{(A1.1)}. 
\end{remark} 

\noindent Next, we list the main contributions of this paper:
\begin{itemize}[leftmargin=*]
\item We propose a new algorithm (\lpgpucb) for agnostic GP bandits, which combines the  usual GP surrogate model (as in GP-UCB algorithm of \cite{srinivas2012information}) with Local Polynomial (LP) estimators to guide the search for the optimizer of $\f$.~(Sec.~\ref{sec:LPGPUCB})
\item We analyze \lpgpucb algorithm under Assumption~\ref{assump:main}, and show (Theorem~\ref{theorem:regret1} in Sec.~\ref{subsec:analysis}) that the $\sreg$ and $\creg$ can be bounded by the minimum of two terms: 
(i) one depending on the maximum \emph{inforamtion gain} $\igain$ (Eq.~\ref{eq:igain0}) of kernel $K$ from Assump.~\tbf{A1.1} similar to previous results, and
(ii) second depending on the smoothness parameters from Assumption~\tbf{A1.2}. 
\item As a result of Propositions~\ref{prop:embed1}~and~\ref{prop:embed2}, the analysis 
in Theorem~\ref{theorem:regret1} can be specialized. \lpgpucb algorithm matches the performance of existing algorithms, such as GP-UCB, for the case of $K=\kse$ for which the $\igain$ dependent bound is known to be near-optimal. 
Furthermore, for the practically relevant case of \matern kernels
$(\kmat)_{\nu>0}$, our results reduce the gap between the existing upper and lower bounds on both  $\sreg$ and $\creg$ for all values of $\nu>0$.  In particular,  for $\kmat$ with $\nu\leq D(D+1)/2$ (resp. $\nu\leq 1$) we show that the \lpgpucb algorithm achieves near-optimal bounds on  $\sreg$~(resp. $\creg$) thus closing the large gap to the algorithm-independent lower bounds of \cite{scarlett2017lower}. (Sec.~\ref{subsec:analysis}).
\item  Besides $\kse$ and $(\kmat)_{\nu>0}$, our approach also allows us to derive the first explicit
(in $n$)  bounds  on $\sreg$ and $\creg$ for  other important class of kernels such as the rational quadratic (RQ), gamma exponential~(GE) and piecewise-polynomial (PP) kernels.~(Sec.~\ref{subsec:lp0}).
\end{itemize}


At a high level, \lpgpucb algorithm adaptively constructs and updates a non-uniform partition $(\mc{P}_t)$ of the domain $\domain = [0,1]^D$. Exploiting the smoothness properties of
the objective function $f$, the \lpgpucb algorithm augments the usual GP surrogate with local polynomial estimators over this non-uniform multi-scale partition which allows us to focus sampling in the near-optimal regions of the domain $\domain$.

%
%
In \thmref{theorem:regret1}, we show  that the regret of LP-GP-UCB can be upper bounded by the minimum of two terms. The first is the usual informational bound as a function of \emph{maximum information gain} denoted by $\igain$:
\begin{equation}
\label{eq:igain0}
\igain \coloneqq \max_{S \subset \domain, |S|=n} I \lp y_S; \f \rp
\end{equation}
where $I(y_S;f)$ denotes the mutual information between $f$ (considered as a sample from a GP) and the noisy observation vector $y_S$. The second term in our upper bound analysis depends on the smoothness parameters $(k,\alpha)$. Together, these two bounds   imply that   LP-GP-UCB achieves near-optimal regret bounds for  SE kernel, and reduces the gap between the upper and lower bounds for  \matern~kernels for all $\nu>0$. Furthermore, for certain parameter ranges, our algorithm achieves near optimal performance~(see Section~\ref{subsec:improved} for details).

Besides SE and \matern~kernels, there also exist some other important kernels such as rational-quadratic~{\small(RQ)}, gamma-exponential~{\small (GQ)} and piecewise-polynomial~{\small (PP)} kernels. Further details on these kernels can be found in \citep[Ch.~4]{williams2006gaussian}. For these  kernels, 
prior work and 
analysis 
fails to provide analytic bounds on their regret since the requisite bounds on $\igain$ for these kernels are unknown.
However, our alternative approach, relying on the embedding result of Proposition~\ref{prop:embed2}, allows us to derive the first explicit in $n$ regret bounds for these kernels. The details are in Sec.~\ref{subsec:lp0}. 


\subsection{Related Work}
\label{subsec:prior}
\citep{srinivas2012information} proposed the GP-UCB algorithm for this problem motivated by the UCB strategy for multi-armed bandits (MABs)~\citep{auer2002finite}. They also derived the following generic upper bounds on $\creg$ in terms of  the \emph{maximum information gain} ($\igain$) 
\begin{equation}
\mc{R}_n = \tilde{\mc{O}}\lp \sqrt{n} \igain \rp. 
\label{eq:igain}
\end{equation}
where $\igain$ is defined in~\eqref{eq:igain0}. Following \citep{srinivas2012information}, researchers have analyzed several extensions of the basic GP bandit problem. These extensions study the GP bandit problem with \emph{parallel } observations~\citep{desautels2014parallelizing, contal2013parallel}, contextual information \citep{krause2011contextual}, additive assumption~\citep{kandasamy2015high}, robust-optimization~\citep{bogunovic2018adversarially} and multi-fidelity observations~\citep{kandasamy2016gaussian}. \citep{chowdhury2017kernelized} proposed an improved version of the GP-UCB algorithm by deriving a self-normalized concentration inequality. 
All these results, suitably modify the general analytical approach of \citep{srinivas2012information} to derive $\igain$-dependent regret bounds. An alternative approach was taken in the recent work by \citep{janz2020bandit}, who designed an algorithm which adaptively partitions the input space and fits independent GP models in each element of the partition. This structured approach to sampling yields an improved bound on $\igain$,  results in tighter regret bounds.

For the case of SE kernel, the analysis of \citep{srinivas2012information} provides an upper bound of ${\mc{O}}( \sqrt{ n (\log n)^{2D}})$ on $\creg$, which matches the lower bound of $\Omega ( \sqrt{ n (\log n)^{D/2} })$ derived by \cite{scarlett2017lower} up to poly-logarithmic factors. However, for the \matern~family of kernels, \citep{srinivas2012information}  obtained an upper bound $\creg  = \tilde{\mc{O}}(n^{a_\nu})$ for $\nu>1$, where $a_\nu = \frac{1}{2} + \frac{D(D+1)}{2\nu + D(D+1)}$ which is always larger than the lower bound of $\Omega(n^{b_\nu})$ with $b_\nu = \frac{\nu+D}{2\nu+D}$ derived by \cite{scarlett2017lower}. Moreover, for $\nu, D$ such that $D(D+1) \geq 2\nu$ the upper bound on $\creg$ is not even sublinear in $n$. This issue was fixed by the adaptive partitioning approach of \citep{janz2020bandit}, who obtained a bound of $\creg = \tOh{n^{e_\nu}}$ for $\nu>1$, where $e_\nu = \frac{D(2D+3) + 2\nu}{D(2D+4) + 4\nu}$. Unlike the bounds of \citep{srinivas2012information, chowdhury2017kernelized}, this is sublinear for  $\nu>1$ and $D\geq 1$. Finally, we note the above-mentioned bounds on $\creg$ also imply corresponding bounds on $\sreg$ of the order $(1/n)\creg$ by employing a point-recommendation rule which returns the evaluated point with minimum posterior variance.


\section{\lpgpucb Algorithm}
\label{sec:LPGPUCB}
We now describe the steps of our proposed algorithm, \lpgpucb (pseudo-code in \algoref{algo:algo1}). Recall that the algorithm assumes that the unknown objective function $f \in \rkhs$ with $\|f\|_K \leq B$  and $f \in \mc{C}^{k, \alpha}$ with $\|f\|_{\mc{C}^{k,\alpha}} \leq L$ for some $L, B>0$. More specifically, the algorithm requires the following inputs.

\begin{wrapfigure}{L}{0.45\textwidth}
\begin{minipage}{0.45\textwidth}
\begin{small}
\begin{algorithm}[H]
\SetAlgoLined
\SetKwInOut{Input}{Input}\SetKwInOut{Output}{Output}
\SetKwFunction{LocalPoly}{LocalPoly}
\SetKwFunction{Expand}{Expand}
\newcommand\mycommfont[1]{\ttfamily\textcolor{blue}{#1}}
\SetCommentSty{mycommfont}

\KwIn{$n$, $K$, $B$, $(k,\alpha)$, $L$,  $\rho_0$.}
 \tbf{Initialize:}~ $t=1$, $n_e=0$, $\mc{P}_t = \{ \domain \}$, $u^{(0)}_{\domain}=+\infty$ $\mc{D}_t = \emptyset$\;
 \While{$n_e < n$}{
\For{$E \in \mc{P}_t$}
{  
Draw $x_{t,E} \sim \texttt{Unif}(E)$ \\
$U_{t,E} = \min\{u^{(0)}_E,\; u_{t,E}^{(1)},\; u_{t,E}^{(2)}\}$\\
} 
$E_t \in \argmax_{E \in \mc{P}_t}\; U_{t,E}$\;  \label{algoline:candidate} 
$x_t = x_{t,E_t}$
\BlankLine
$\mc{P}_1  = \mc{P}_2 =  \{E_t\}$ \\
 \uIf{$\beta_n \sigma_t(x_t) < L (\sqrt{D}r_E)^{\alpha_1}$ \texttt{AND} $r_{E_t} \geq \rho_0$}{\label{algline:cond1}
 $\ttt{val} = u_{t,E_t}^{(1)}, \quad$ $\ttt{flag}=1$\\
 $ \mc{P}_2 = \expand(E_t, \texttt{flag}, \ttt{val}, \mc{D}_t, \ldots)$ \label{algline:flag1} \\
  }
  \BlankLine
  %
  \uElseIf{$b_t(E_t) \leq L(\sqrt{D}r_E)^{\alpha_1}$ \texttt{AND} $r_{E_t} \geq \rho_0$} { \label{algline:cond2}
  $\ttt{val} = u_{t,E_t}^{(2)}, \quad$ $\ttt{flag}=1$ \\
 $\mc{P}_2 = \expand(E_t, \texttt{flag}, \ttt{val}, \mc{D}_t, \ldots)$ \label{algline:flag2}\\
 \BlankLine
  }
    \uElseIf{$b_t(E_t) \leq L(\sqrt{D}r_E)^{k+\alpha}$ \texttt{AND} $ r_{E_t} \in [\frac{1}{n}, \rho_0)$} { \label{algline:cond3}
  $\ttt{val} = +\infty, \quad$ $\ttt{flag}=2$ \\
 $\mc{P}_2 = \expand(E_t, \texttt{flag}, \ttt{val}, \mc{D}_t, \ldots)$ \label{algline:flag3}\\
 \BlankLine
  }
  \Else{
 {Observe} $y_t = f(x_t) + \eta_t$\; 
{Update} $\mu_t, \sigma_t$\; 
$n_e \leftarrow n_e + 1, \quad$  $\mc{D}_t \leftarrow \mc{D}_t \cup \{(x_t, y_t)\}$ \\
  }
$\mc{P}_t \leftarrow \lp \mc{P}_t \setminus \mc{P}_1 \rp \cup \mc{P}_2, \quad$  $t \leftarrow t+1$ \\
 }
\KwOut{ $z_n$ using \recommend~function (Def.\ref{def:recommend}).}
 \caption{\lpgpucb Algorithm}
 \label{algo:algo1}
\end{algorithm}
\end{small}
\end{minipage}
\end{wrapfigure}



\noindent\tbf{Inputs.} \algoref{algo:algo1} takes in as inputs the query (or evaluation) budget $n$, the kernel $K$, the parameter $B$ which is a  bound on $\|f\|_{K}$, the noise parameter $\sigma$,  an integer $k$ and an $\alpha \in (0,1]$ for the local polynomial estimator, the parameter $L$ which is an upper bound on $\|f\|_{\holdersp{\alpha}}$ and a real-number $\rho_0 \geq (\igain/\sqrt{LnD^{\alpha_1}})^{1/\alpha_1}$ for $\alpha_1 \coloneqq \max \{\alpha, \min\{1, k\}\}$. 

\begin{remark}
\label{remark:rho0}
 For kernels such as $\kse$ and $\kmat$, explicit upper bounds on $\igain$   derived in \citep[Theorem~5]{srinivas2012information} can be used  directly for the input term $\rho_0$ to \algoref{algo:algo1}. For kernels such as RQ and GE, for which we don't have explicit bounds on $\igain$, we can still  get a numerical bound on $\igain$ to implement \algoref{algo:algo1} by using the fact that $\igain \leq (1-1/e)^{-1} I(y^{G}_{[1:n]}; f)$ where $y_{[1:n]}^{G}$ denotes the $n$ observations according to the \emph{greedy} selection rule, i.e., the strategy which at time $t$ selects the point $x_t \in \argmax_{x \in\domain} \sigma_t(x)$.  Thus before initializing Algorithm~\ref{algo:algo1}, we can run $n$ steps of the \emph{greedy} rule to compute the required upper bound on $\igain$. 
\end{remark}

\tbf{Steps of the Algorithm.} The \lpgpucb algorithm proceeds in the following steps: 

   \;$\bullet$\; 
    The algorithm maintains a partition, $\mc{P}_t$ of the domain $\domain$ at any time $t$, and to each cell $E$ in the partition, it assigns a term $u_E^{(0)}$ which is an upper bound on the maximum $f$ value in the cell, calculated  using local estimates based on  prior observations. At $t=1$, $\mc{P}_t$ is initialized as $\{\domain\}$ and $u_{\domain}^{(0)}$ is set to $+\infty$. As new cells are added to $\mc{P}_t$, the values of $u_E^{(0)}$ are decided by the \expand~algorithm~(pseudo-code in \algoref{algo:expand}). 

   \;$\bullet$\; 
    For every $t \geq 1$, the algorithm loops through all the cells in $\mc{P}_t$, and constructs a UCB denoted by $U_{t,E}$, by taking the minimum of three terms: $u_E^{(0)}$, $u_{t,E}^{(1)}$ and $u_{t,E}^{(2)}$. The terms $\utone$ and $\uttwo$ are defined as  
    \begin{align*}
\utone &= \mu_t(x_{t,E}) + \beta_n \sigma_t(x_{t,E}) + L (\sqrt{D}r_E)^{\alpha_1}, \\ 
         \uttwo &= \hat{\mu}_t(E) + b_t(E) + L (\sqrt{D}r_E)^{\alpha_1}. \label{eq:uttwo}
    \end{align*}
    In the above display, $x_{t,E}$ is a point drawn uniformly from $E$ and $\mu_t$ and $\sigma_t$ are the posterior mean and variance of the surrogate GP model.  The term $\hat{\mu}_t(E)$ is the empirical estimate of the average $f$ value in the cell, i.e., $\hat{\mu}_t(E) = \frac{1}{|\DYE|} \sum_{y \in \mc{D}^{(E)}_{\mc{Y}}} y$, and $b_t(E) = \sqrt{ 2 \log(n/\delta)/n_E}$ is the length of the confidence interval of the average $\f$ value in $E$.  
   
   \;$\bullet$\; 
    Next, the algorithm selects a candidate cell $E_t$ and the corresponding point $x_t$ with the largest value of $U_{t,E}$. 
    
   
   \;$\bullet$\; 
    Having chosen the cell $E_t$, it decides whether we need to update the partition $\mc{P}_t$ by locally expanding the cell $E_t$ (Lines~\ref{algline:cond1},~\ref{algline:cond2}~and~\ref{algline:cond3} of Alg.~\ref{algo:algo1}). The pseudo-code of the \expand~algorithm  in  Algorithm~\ref{algo:expand}.

   \;$\bullet$\; 
 If the conditions for expanding $E_t$ are not satisfied, the algorithm evaluates the function $f$ at $x_t$, and updates $\mu_t, \sigma_t, n_e$ and $\mc{D}_t$.  
     
   \;$\bullet$\; 
   Finally,  when the budget $n$ is exhausted,  it recommends a point $z_n$ according to the rule \recommend~described in Definition~\ref{def:recommend} below. 


\begin{definition}[\recommend]
\label{def:recommend}
Suppose the algorithm stops in round $t_n$ and let $\Tau \subset \{1,2,\ldots, t_n\}$ denote the set of times at which \algoref{algo:algo1} performed function evaluations (note that $|\Tau|=n)$. Define $E_n = \argmin_{E \in \mc{P}_{t_n}} r_E$, and $\xi \coloneqq \min_{t \in \Tau} \beta_t \sigma_t(x_t)$. If $L (\sqrt{D}r_{E_n})^{\alpha_1} \leq \xi$, then return $\recc = x_{E_n}$, where $x_{E_n}$ is the center of the cell $E_n$. Else, return $\recc= x_{\tau}$ where $\tau \coloneqq \argmin_{t \in \Tau} \beta_t \sigma_t(x_t)$. 
\end{definition}

We next describe the details of the \expand~algorithm.

\subsection{\expand Algorithm}
\label{subsec:expand}

 The \expand algorithm (pseudo-code in Algorithm~\ref{algo:expand}) takes in as inputs a cell $E$, data $\mc{D}$, an integer $k$, confidence parameter $\delta>0$, a variable \texttt{flag} taking values in $\{1,2\}$,  a positive quantity \ttt{val} along with the terms $k, \alpha$,  $B$ and $L$.  It outputs   $\mc{P}_2$,  which is a partition of the cell $E$ constructed in accordance with the other input parameters, and assigns an upper bound on the value of $f$ in every element $F$ of $\mc{P}_2$. 
First, we   introduce the \partition~operation which takes a cell $E \subset \domain$ and a real number $r<r_E$ as inputs, 
and returns a partition of $E$ consisting of cells of side-length $r$. 

\begin{definition}[\partition] Given a cell $E = \bigtimes_{i=1}^D [a_i, b_i] \subset \domain$, the function call $\partition(E, r)$ for an some $r <r_E$ returns a partition of $E$ of cardinality $\lceil r_E/r\rceil^D$, consisting of sets of the form $F = \bigtimes_{i=1}^D [\tilde{a}_i, \min\{\tilde{a}_i + r, b_i\}]$, where   $\tilde{a}_i = a_i + lr$ for $l \in \{0,1,\ldots, \lfloor r_E/r\rfloor\}$. 
\label{def:partition}
\end{definition}

Note that when $r$ doesn't divide $r_E$, the partition returned by a call to \partition  contains some cells with sides strictly smaller than $r$. This does not affect our analysis as it only requires upper bounds on the cell side-lengths. 

\noindent We now describe the working of \expand~algorithm:

\;$\bullet$\;
\ttt{flag}=$1$ indicates that the call to \expand~was triggered either by the condition $\beta_n \sigma_t(x_t) \leq L(\sqrt{D}r_E)^{\alpha_1}$ in Line~\ref{algline:cond1} or by the condition $b_t(E_t) \leq L(\sqrt{D}r_E)^{\alpha_1}$ in Line~\ref{algline:cond2} of   Algorithm~\ref{algo:algo1}. In this case, the \expand~algorithm returns $\mc{P}_2$, the partition of  $E$ into cells of side $r_E/2$, and  sets $u_F^{(0)}$ equal to $u_{t,E}^{(1)}$ (Line~\ref{algline:cond1}) or $u_{t,E}^{(2)}$ (Line~\ref{algline:cond2}) for elements $F$ of $\mc{P}_2$. 



\;$\bullet$\;  \ttt{flag}=$2$ indicates that the call to \expand~was triggered by the condition on Line~\ref{algline:cond3} of \algoref{algo:algo1}. Under this condition,
 the algorithm first calls the \maxerr~function~(described in Def.~\ref{def:maxerr}), which itself calls the \localpoly~function~(Def.~\ref{def:localpoly}),  to get the uniform approximation error (\ttt{err}) using an appropriate local polynomial estimator for $f$ in the cell $E$. The algorithm then computes a value $\tilde{r}$ in Line~\ref{algline:length}, and then partitions $E$ into smaller cells of radius $\tilde{r}$ in Line~\ref{algline:partition} to construct $\mc{P}_2$. The length $\tilde{r}$ is chosen to ensure that for the cells of the partition, \ttt{err} is greater than the bound on the function variation $L(\sqrt{D}\tilde{r})^{\alpha_1}$. 
 Finally, for all cells $F$ in $\mc{P}_2$, the algorithm sets $u_F^{(0)}$ equal to the upper bound by calling the \localpoly~function~(described in Def.~\ref{def:localpoly}) and using the error bound \ttt{err}. 


To complete the description of the \expand algorithm, we next describe the contruction of local polynomial estimators, and introduce the functions \localpoly~(in Def.~\ref{def:localpoly}) and \maxerr~(in Def.~\ref{def:maxerr}) which are called by \expand. 

\begin{wrapfigure}{R}{0.45\textwidth}
\begin{minipage}{0.45\textwidth}
\begin{small}
\begin{algorithm}[H]
\SetAlgoLined
\SetKwInOut{Input}{Input}\SetKwInOut{Output}{Output}
\SetKwFunction{LocalPoly}{LocalPoly}
\SetKwFunction{Expand}{Expand}
\SetKwFunction{Partition}{Partition}
\newcommand\mycommfont[1]{\ttfamily\textcolor{blue}{#1}}
\SetCommentSty{mycommfont}

\KwIn{$E$, \texttt{flag},  \texttt{val}, $\mc{D}$, $k$, $\alpha$,  $\delta$}
 $n_E \leftarrow |\mc{D}^{(E)}|$ $\;$ \\
 \BlankLine
 \uIf{\texttt{flag}==1 
 }{
$\mc{P}_2 = \partition(E, r_E/2)$ \\
\For{$F \in \mc{P}_2$}{
$u_F^{(0)} = \texttt{val}$}
 }
 \Else{ 
 $ \texttt{err} = \maxerr(E, \mc{D}, K, B, k, \delta, \sigma) $\\ 
$\tilde{r} = \min\left \{ \frac{r_E}{2}, \,  \frac{1}{\sqrt{D}}\lp \frac{\ttt{err}}{L}\rp^{1/\alpha_1} \right \} \label{algoline:expand2} 
$ \label{algline:length}\\
$\mc{P}_2 = \partition(E, \tilde{r})$ \label{algline:partition}\\
\For{$F \in \mc{P}_2$}{
$\hat{f}_F = \localpoly(F, \mc{D}, x_F)$ \label{algoline:expand1} \\
$u_F^{(0)} = \hat{f}_F + 2 \texttt{err}$
}
 }
 \KwOut{$\mc{P}_2$}
 \caption{\expand~algorithm}
 \label{algo:expand}
\end{algorithm}
\end{small}
\end{minipage}
\end{wrapfigure}

\subsubsection{Local Polynomial~(LP) Estimators}
\label{sub2sec:localpoly}

Given a cell $E \subset \domain$ and a point $z \in E$, we define the LP estimator at $z$ as 
$
\hat{f}_E (z, \vec{w}) = \sum_{x  \in \mc{D}^{(E)}} w_{x} y_x, 
$
where the `interpolation weights' $\vec{w} = \{w_{x}\;:\;x \in \mc{D}_{\mc{X}}^{(E)}\}$ are defined as the solution of the following problem~\citep[Eq.~(1.36)]{nemirovski2000topics}:
\begin{small}
\begin{equation}
\begin{aligned}
    \min_{\vec{v} = \{v_x \,:\, x \in \mc{D}_{\domain}^{(E)}\} } \; \sum_{x \in \mc{D}^{(E)}_{\mc{X}}} |v_x|^2 \;\; 
    \text{s.t.} \; p(z) = \sum_{x \in \mc{D}_{\mc{X}}^{(E)}} v_x p(x)\quad \forall p \in \mc{P}_{D}^k. 
\end{aligned}
\tag{\tbf{LP}}
\label{LP_opt}
\end{equation}
\end{small}
If the number of data points in the cell $E$, i.e., $|\mc{D}_{\mc{X}}^{(E)}|$, is larger than $(k+2)^D$, then a unique solution to the problem~\eqref{LP_opt} is guaranteed to exist. We next present the definition of a function that will be used by our proposed algorithm. 

\begin{definition}[\localpoly]
\label{def:localpoly}
Given a cell $E$, a point $x \in E$, the function \localpoly~  returns the estimated function value $\hat{f}_E(x, \vec{w})$  at $x$, calculated according to the formula stated above. If $n_E \coloneqq |\mc{D}^{(E)}| > (k+2)^D$,  the weights $\vec{w}$ are the solution to~\eqref{LP_opt}, while if $n_E < (k+2)^D$,  the weights are set as $w_x = 1/n_E$ for all $x \in \DXE$.  
\end{definition}

\noindent We next state a result which bounds the estimation error between $\hat{f}_E(x, \vec{w})$ and $f(x)$.
\begin{lemma}{\citep[Prop.~1.3.1]{nemirovski2000topics}}
\label{lemma:LP1}
Given a labelled dataset $\mc{D}$, a cell $E \subset \domain$, and a point $x_E \in E$,  let $\vec{w}_E$ represent the  unique solution of the problem~\eqref{LP_opt} under the assumption that $n_E \coloneqq |\mc{D}^{(E)}| \geq (k+2)^D$.
Then, assuming that the observation noise is $\sigma-$subgaussian, for any $\delta>0$, we have  with probability at least $1-\delta$:
\begin{align}
   \vert \hat{f}_E(x_E, \vec{w}_E) - f(x_E) \vert &\;\leq\; \lp 1 + \|\vec{w}_E\|_1 \rp \Phi_k(f, E)  
   \; +\; \sigma \|\vec{w}_E\|_2 \sqrt{2 \log(2/\delta)}. \label{eq:bias1} 
\end{align}
\end{lemma}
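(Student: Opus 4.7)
The plan is to decompose the estimation error into an approximation (bias) term plus a noise term and bound each separately, using the interpolation constraint in \eqref{LP_opt} crucially for the bias part and a sub-Gaussian tail bound for the noise part.

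First I would introduce $p^* \in \mc{P}_D^k$ as a polynomial that achieves (or comes within arbitrarily small $\epsilon$ of) the infimum defining $\Phi_k(f, E)$, so that $\sup_{x \in E}|f(x) - p^*(x)| \leq \Phi_k(f, E)$. Writing the observations as $y_x = f(x) + \eta_x$ for $x \in \DXE$ and exploiting the constraint in~\eqref{LP_opt}, which guarantees $\sum_{x \in \DXE} w_x p^*(x) = p^*(x_E)$, I would decompose
\begin{align*}
\hat{f}_E(x_E, \vec{w}_E) - f(x_E)
&= \sum_{x \in \DXE} w_x\bigl(f(x) - p^*(x)\bigr) + \bigl(p^*(x_E) - f(x_E)\bigr) + \sum_{x \in \DXE} w_x \eta_x.
\end{align*}
This is the key algebraic step and it is where the reproduction property of the weights pays off.

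Next I would bound the three terms. The first term is at most $\|\vec{w}_E\|_1 \cdot \sup_{x \in E}|f(x)-p^*(x)| \leq \|\vec{w}_E\|_1 \Phi_k(f, E)$ since the data points in $\DXE$ lie in $E$. The second term is at most $\Phi_k(f, E)$ by definition of $p^*$, since $x_E \in E$. Summed, these two deterministic contributions give the $(1 + \|\vec{w}_E\|_1)\Phi_k(f, E)$ piece. For the third term, because the $\eta_x$ are independent $\sigma$-sub-Gaussian random variables and $\vec{w}_E$ is determined by the design points (independent of the noise), the linear combination $\sum_x w_x \eta_x$ is sub-Gaussian with variance proxy $\sigma^2\|\vec{w}_E\|_2^2$. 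A standard two-sided sub-Gaussian tail bound then gives $|\sum_x w_x \eta_x| \leq \sigma \|\vec{w}_E\|_2 \sqrt{2\log(2/\delta)}$ with probability at least $1-\delta$.

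Combining the deterministic bias bound with the high-probability noise bound via the triangle inequality and finally letting $\epsilon \to 0$ in the choice of near-optimal $p^*$ yields the claimed inequality. I do not anticipate a serious obstacle; the only subtle points are (i) ensuring that the interpolation identity holds exactly (which requires $n_E \geq (k+2)^D$ for the problem~\eqref{LP_opt} to admit a solution satisfying all polynomial reproduction constraints) and (ii) making sure the weights $\vec{w}_E$ are measurable with respect to the design rather than the noisy labels, so that the sub-Gaussian concentration for $\sum_x w_x \eta_x$ applies cleanly; both are built into the setup.
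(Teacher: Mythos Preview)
The paper does not supply its own proof of this lemma; it simply cites \citep[Prop.~1.3.1]{nemirovski2000topics}. Your argument is correct and is precisely the standard derivation: split the error into a deterministic approximation part (controlled via the polynomial-reproduction constraint in~\eqref{LP_opt} and a near-best $p^*\in\poly$) and a stochastic part (handled by sub-Gaussian concentration of the fixed linear combination $\sum_x w_x\eta_x$), then combine by the triangle inequality.
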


\begin{remark}
\label{remark:lp_error_bounds}
Recall that term $\Phi_k(f, E)\coloneqq \inf_{f \in \mc{P}_D^k}\sup_{x \in E} |f(x)-p(x)|$ depends on how well elements of $\rkhs$ can be approximated by polynomials in $\poly$. For functions $f$ with $\|f\|_{\mc{C}^{k,\alpha}} \leq L$, it is known that we have $\Phi_k(f,E) \leq L (\sqrt{D}r_E)^{k+\alpha}$. 
Using this, we can construct an upper bound on the first term on the RHS of~\eqref{eq:bias1}, denoted by $e_D(E, \vec{w}_E, K, B) \coloneqq (1 + \|\vec{w}_E\|_1)L(\sqrt{D}r_E)^{k+\alpha}$, from the information available to the algorithm. The second term in the RHS of~\eqref{eq:bias1}, which we shall denote by $e_S(\vec{w}_E, \sigma, \delta)$,  depends only on the data and known terms~($\sigma$ and $\delta$) and thus can be computed as well.  
\end{remark}


The last definition required is the \maxerr~operation, which computes the maximum estimation error when a LP estimator is used to estimate $f$ in a cell $E$. 

\begin{definition}[\maxerr]  The function \maxerr takes $E$, $\mc{D}$, $K$, $B$,  $k$, $\delta$ and $\sigma$ as inputs, and returns \ttt{err} defined as 
\begin{equation*}
   \ttt{err} \coloneqq \max_{x \in E}\; e_D(E, \vec{w}_{E, x}, K, B) + e_S(\vec{w}_{E, x}, \sigma, \delta), 
   \label{eq:maxerr}
\end{equation*}
where $e_S$ and $e_D$ were introduced in Remark~\ref{remark:lp_error_bounds}, and $\vec{w}_{E,x}$ denotes the solution to~\eqref{LP_opt} at $x\in E$. 
\label{def:maxerr}
\end{definition}
Note that the objective function for the $\max$ operation in the above definition varies continuously with $x$, since it is known that the mapping $x \mapsto \vec{w}_{E,x}$ is continuous. This ensures that the maximum in the definition of \ttt{err} is achieved at some point in $E$. 

\subsection{Practical Issues and Heuristic}
\label{subsec:practical}

While the \lpgpucb algorithm achieves improved theoretical guarantees as we show next in Section~\ref{sec:regret_analysis}, they may not be reflected in practice through a naive implementation of \lpgpucb.
The reasons for this are \tbf{(1)}  our theoretical bounds hold only for $n$ \emph{large enough} (see Remark~\ref{remark:n_large_enough} in Appendix~\ref{appendix:regret}), \tbf{(2)} every partition update by the \expand algorithm adds at least $2^D-1$ new cells which may result in prohibitive memory costs, and \tbf{(3)} the LP estimators (Lines~8~to~13 of \expand) come into play only if $n_E>(k+2)^D$ which may not be satisfied in practice. These issues are further exacerbated by the fact that most practical applications of GP bandits are in the small $n$ regime due to the high cost associated with function evaluation. 

To mitigate these issues, we  propose a computationally efficient heuristic version of \lpgpucb, which fits a regression tree  to the data observed at the beginning of each round instead of employing the \expand algorithm. More specifically, the \ttt{Heuristic} algorithm:\\
\tbf{(1)} proceeds as the \lpgpucb algorithm with $k=0$ and $\alpha=1$,  and \\
\tbf{(2)} updates the partition $\mc{P}_t$ every step by fitting a regression tree with MSE criteria at the beginning of every round instead of employing the \expand algorithm. \\
While the theoretical analysis of the \heuristic algorithm~(pseudo-code in Appendix~\ref{appendix:heuristic}) is beyond the scope of this work, it is empirically shown to outperform several baseline methods in some experiments~(Sec.~\ref{sec:empirical}). 

\section{Regret Analysis of \lpgpucb}
\label{sec:regret_analysis}

We now state the main result of this section which provides high probability regret bounds for \lpgpucb algorithm. 
\begin{theorem} 
\label{theorem:regret1} 
Suppose Assumption~\ref{assump:main} holds, and Algorithm~\ref{algo:algo1} is run with a budget $n$, and other inputs as described in Section~\ref{sec:LPGPUCB}. Then the following statements are true with probability at least $1-\delta$, for a given $\delta \in (0,1)$. 
\begin{itemize}
\item We recover the following $\igain$ dependent bounds. 
\begin{align}
    \creg &= \tOh{ \igain \sqrt{n} } \quad \text{and} \quad \sreg = \tOh{ \igain/\sqrt{n}}. \label{eq:info_type1}
\end{align}
\item In addition, we also get the following smoothness dependent bounds for $n$ large enough\footnote{ the precise meaning of \emph{$n$ large enough} is described in Remark~\ref{remark:n_large_enough} in Appendix~\ref{appendix:regret}.}:
\begin{align}
    \sreg & = \tOh{n^{-\frac{(k+\alpha)}{D + 2(k+\alpha)}}}, \; \text{and } \; \creg = \tOh{ n^{\frac{2(k+\alpha)-\alpha_1 + D}{2(k+\alpha)+D}}}, \; \text{if } \igain =\Omega(\sqrt{n}), \label{eq:smooth1}\\
 \sreg & = \tOh{ n^{-\frac{\alpha_1}{D + 2\alpha_1}}}, \; \text{and} \; \creg = \tOh{n^{\frac{\alpha_1+D}{2\alpha_1 + D}}}, \; \text{otherwise.} \label{eq:smooth2}
\end{align}
\end{itemize}
In the above display, $\tilde{\mc{O}}$ hides the poly-logarithmic factors, and recall that $\alpha_1 = \max\{\alpha, \, \min\{1, k\}\}$. 
\end{theorem}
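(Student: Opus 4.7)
The plan is to establish that with probability at least $1-\delta$, the quantity $U_{t,E}$ is a valid upper bound on $\max_{x \in E} f(x)$ for every round $t$ and every cell $E \in \mc{P}_t$, and then run a standard optimism argument. Validity decomposes according to the three components of $U_{t,E}$. For $\utone$, the kernelized self-normalized concentration inequality of Chowdhury and Gopalan gives $|f(x_{t,E}) - \mu_t(x_{t,E})| \leq \beta_n \sigma_t(x_{t,E})$, and the \holder inequality $|f(x) - f(x_{t,E})| \leq L(\sqrt{D}r_E)^{\alpha_1}$ bridges from a single evaluation point to $\max_{x\in E} f(x)$. For $\uttwo$, Hoeffding's inequality on the noisy observations in $E$ yields the $b_t(E)$ confidence width around $\hat\mu_t(E)$, and the same \holder term bounds the gap between the average and the maximum. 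For the inherited quantities $u^{(0)}_E$, when $\texttt{flag}=2$ was used we invoke Lemma~\ref{lemma:LP1} together with the definition of $\maxerr$, while for $\texttt{flag}=1$ the value is copied from the already-validated $\utone$ or $\uttwo$ of the parent cell. A union bound over the $O(n)$ created cells (using $\rho_0 \geq 1/n$ to cap the partition depth) completes this step.

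Given UCB validity, the instantaneous regret at every evaluation round $t\in\Tau$ is bounded by optimism: picking a cell $E^*_t\in\mc{P}_t$ containing $x^*$, one has $f(x^*) \leq U_{t,E^*_t} \leq U_{t,E_t} \leq \utone$ evaluated at $x_t$, so $\ireg[t] \leq 2\beta_n\sigma_t(x_t) + 2L(\sqrt{D}r_{E_t})^{\alpha_1}$. Summing the first term over $t \in \Tau$ and using Cauchy--Schwarz together with the classical $\sum_t \sigma_t^2(x_t) \lesssim \igain$ bound of Srinivas et al.~yields $O(\sqrt{n\igain})$, which multiplied by $\beta_n = \tilde O(\sqrt{\igain})$ recovers $\creg = \tOh{\igain\sqrt{n}}$. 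The cell-size term contributes at most the same order, since $r_{E_t}\geq \rho_0 \geq (\igain/\sqrt{nLD^{\alpha_1}})^{1/\alpha_1}$ is enforced whenever the $\texttt{flag}=2$ branch does not trigger. The simple-regret bound $\sreg = \tOh{\igain/\sqrt{n}}$ then follows from the \recommend~rule of Definition~\ref{def:recommend}, which either returns a cell center with a small radius (controlled by the \holder term) or the queried point with the smallest confidence width.

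For the smoothness-dependent bounds~\eqref{eq:smooth1}~and~\eqref{eq:smooth2}, the refined LP error $O(L(\sqrt{D}r_E)^{k+\alpha})$ from Lemma~\ref{lemma:LP1} and Remark~\ref{remark:lp_error_bounds} is the new ingredient, available once $\texttt{flag}=2$ activates and $n_E \geq (k+2)^D$. I would stratify cells by depth $h$ with $r_E\asymp 2^{-h}$ and adapt a near-optimality-dimension style analysis of the kind used for HOO (Bubeck et al.): near-optimal cells at depth $h$ number at most $O(2^{hD})$ by a packing argument, and each such cell contributes per-round regret $O(L(\sqrt{D}\cdot 2^{-h})^{k+\alpha})$ once LP expansion is active. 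Balancing the deepest level $H$ against the total evaluation budget $n$ yields the simple-regret rate $n^{-(k+\alpha)/(D+2(k+\alpha))}$ and the matching cumulative rate. The two cases distinguish whether the $\texttt{flag}=2$ expansion condition is ever triggered (equivalently, whether $\rho_0$ is small enough to enter the LP regime): when $\igain$ dominates $\sqrt{n}$ the LP regime activates and the $(k+\alpha)$ exponent appears, while otherwise only the coarser $\alpha_1$-smoothness via $\utone$/$\uttwo$ can be exploited, giving the $\alpha_1$ exponent.

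The main obstacle will be tracking the interplay between the three UCB components and the three expansion conditions, particularly the recursive inheritance of $u_E^{(0)}$ through $\texttt{flag}=1$ expansions and the $n_E\geq (k+2)^D$ requirement for the LP estimator, which ties the smoothness rates to the ``$n$ large enough'' regime of Remark~\ref{remark:n_large_enough}. A secondary technical step is converting the bound on the number of visited cells at each depth into an explicit cumulative regret by carefully counting how many $\texttt{flag}=1$ vs $\texttt{flag}=2$ expansions occur at each scale before the budget is exhausted, and then verifying that the \recommend~rule transfers the resulting cumulative bound into the stated simple-regret rate.
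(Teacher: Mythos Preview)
Your high-level architecture matches the paper: establish validity of the three UCB components under a $1-\delta$ event, run the optimism argument to bound instantaneous regret, then split into the $\igain$-based and smoothness-based analyses. However, two of your key steps are stated in a way that would not go through.

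First, your control of the cell-size term $L(\sqrt{D}r_{E_t})^{\alpha_1}$ in the $\igain$ bound is backwards. Saying ``$r_{E_t}\geq \rho_0$'' gives a \emph{lower} bound on that term, not an upper bound. The paper's argument splits $\Tau$ into $\Tau_1=\{t:r_{E_t}\geq\rho_0\}$ and $\Tau_2$. For $t\in\Tau_1$, since the algorithm evaluated rather than expanded, the condition on Line~\ref{algline:cond1} \emph{failed}, so $\beta_n\sigma_t(x_t)\geq L(\sqrt{D}r_{E_t})^{\alpha_1}$ and the cell-size term is absorbed into the $\sigma_t$ sum. For $t\in\Tau_2$ one has $r_{E_t}<\rho_0$, and it is the \emph{upper} bound on $r_{E_t}$ together with the definition of $\rho_0$ that yields $L(\sqrt{D}r_{E_t})^{\alpha_1}\leq \igain/\sqrt{n}$.

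Second, in the smoothness analysis your claim that ``each such cell contributes per-round regret $O((\sqrt{D}\,2^{-h})^{k+\alpha})$'' is the wrong exponent and would give a strictly better cumulative rate than the theorem states. The per-evaluation regret at a cell $E$ is always $\tOh{r_E^{\alpha_1}}$ (optimism only uses the $\alpha_1$-H\"older variability within the cell). The $(k+\alpha)$ exponent enters in two other places: (i) the number of evaluations before a cell with $r_E<\rho_0$ is expanded is $m_E=\tOh{r_E^{-2(k+\alpha)}}$, from the Line~\ref{algline:cond3} trigger $b_t(E)\leq L(\sqrt{D}r_E)^{k+\alpha}$; and (ii) the shrinkage under \texttt{flag}=2 is $r_E\asymp r_F^{(k+\alpha)/\alpha_1}$, so the depth progression is geometric in $\zeta=(k+\alpha)/\alpha_1$. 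Summing $m_E\cdot(\text{\#cells})\cdot r_E^{\alpha_1}$ over these levels and balancing against $n$ is exactly what yields the numerator $2(k+\alpha)-\alpha_1+D$ in~\eqref{eq:smooth1}; replacing $\alpha_1$ by $k+\alpha$ in the per-round regret would not.

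A smaller omission: Lemma~\ref{lemma:LP1} alone does not give $\ttt{err}=O(r_F^{k+\alpha})$, because $\|\vec w\|_1$ and $\|\vec w\|_2$ in~\eqref{eq:bias1} could be large. The paper needs an additional concentration event (its Lemma on uniform draws forming an $\epsilon$-cover of the cell) to import the Nemirovski grid analysis and control those weight norms; you should build that into your good event.
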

The proof of this statement is given in Appendix~\ref{appendix:regret}. 
\begin{remark}
Since the bounds given in~\eqref{eq:info_type1} and one of~\eqref{eq:smooth1}~or~\eqref{eq:smooth2} hold simultaneously under the $1-\delta$ probability event, the resulting bound by taking their minimum is always as good as the existing $\igain$ dependent results. In particular, since it is known from prior work that the $\igain$ dependent bounds on $\sreg$ and $\creg$ are near-optimal for SE kernels, \lpgpucb algorithm also matches that performance due to~\eqref{eq:info_type1}. 
However, as we show later, for the \matern~family of kernels, the smoothness dependent bounds are a strict improvement over the best known existing results. 
\end{remark}

An interesting special case of \lpgpucb algorithm is with the  parameter $k$ is equal to $0$, which will be useful in obtaining improved regret bounds for some kernels (Props.~\ref{prop:matern_lp0} and~\ref{prop:regret2}). We state the regret bounds for this case, as a corollary of Theorem~\ref{theorem:regret1}. 

\begin{corollary}
\label{corollary:regret2}
Suppose Assumption~\ref{assump:main} holds with $k=0$, and Algorithm~\ref{algo:algo1} is run with a budget $n$, and other inputs as described in Sec.~\ref{sec:LPGPUCB}. Then, for $\delta \in (0,1)$,  we have with probability at least $1-\delta$:
\begin{align}
    \sreg = \tOh{ \min \lp \igain/\sqrt{n}, \; n^{-\frac{\alpha}{D + 2\alpha}} \rp }, \; \text{ and }
    \creg = \tOh{ \min \lp \sqrt{n} \gamma_n, \; n^{ \frac{\alpha+D}{2\alpha+D}} \rp}. 
\end{align}
\end{corollary}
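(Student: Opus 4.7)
The plan is to derive this statement as a direct specialization of Theorem~\ref{theorem:regret1} with the parameter $k=0$, so there is no new high-probability argument to run; I only need to verify that the exponents collapse to the claimed form and that the two bounds can be combined via a minimum.

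First I would observe that when $k=0$, the quantity $\alpha_1 = \max\{\alpha,\, \min\{1,k\}\} = \max\{\alpha, 0\} = \alpha$, since $\alpha \in (0,1]$ by Assumption~\ref{assump:main}. Plugging $k=0$ and $\alpha_1 = \alpha$ into the smoothness-dependent bound~\eqref{eq:smooth1} (valid under $\igain = \Omega(\sqrt{n})$) gives
\[
\sreg = \tOh{n^{-\frac{\alpha}{D+2\alpha}}}, \qquad \creg = \tOh{n^{\frac{2\alpha - \alpha + D}{2\alpha + D}}} = \tOh{n^{\frac{\alpha + D}{2\alpha + D}}},
\]
and plugging the same values into~\eqref{eq:smooth2} (the complementary case) gives exactly the same two expressions. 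Thus the two sub-cases of Theorem~\ref{theorem:regret1} merge into a single smoothness-dependent bound when $k=0$, which holds without any case split on the growth of $\igain$.

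Next, since the $\igain$-dependent bound~\eqref{eq:info_type1} and the smoothness-dependent bound both hold simultaneously under the same $1-\delta$ probability event asserted by Theorem~\ref{theorem:regret1}, I can take their minimum term-by-term to conclude
\[
\sreg = \tOh{\min\!\lp \igain/\sqrt{n},\; n^{-\frac{\alpha}{D+2\alpha}} \rp}, \qquad \creg = \tOh{\min\!\lp \sqrt{n}\,\igain,\; n^{\frac{\alpha+D}{2\alpha+D}} \rp},
\]
which is exactly the claim of Corollary~\ref{corollary:regret2}. The only step that requires any care is checking that the parameter $\rho_0$ fed into Algorithm~\ref{algo:algo1} remains valid with $k=0$ (it depends on $\alpha_1$, which equals $\alpha$ here), and that the $n$-large-enough threshold of Theorem~\ref{theorem:regret1} is absorbed into the $\tilde{\mc{O}}$ notation; both are routine. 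The main potential pitfall is simply an arithmetic slip in simplifying the $\creg$ exponent $\tfrac{2(k+\alpha)-\alpha_1+D}{2(k+\alpha)+D}$ at $k=0$, so I would double-check that $2\alpha - \alpha = \alpha$ in the numerator before concluding.
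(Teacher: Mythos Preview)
Your proposal is correct and matches the paper's approach exactly: the paper states Corollary~\ref{corollary:regret2} as an immediate specialization of Theorem~\ref{theorem:regret1} with $k=0$ (and gives no separate proof), and your derivation simply makes explicit the observation that $\alpha_1=\alpha$ forces the two smoothness cases~\eqref{eq:smooth1} and~\eqref{eq:smooth2} to coincide, after which taking the minimum with~\eqref{eq:info_type1} yields the claim.
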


\subsection{Regret bounds for specific kernels}
\label{subsec:analysis}
In this section, we specialize the general regret bounds presented in Theorem~\ref{theorem:regret1} and Corollary~\ref{corollary:regret2} to some kernels used in practical machine learning applications.  These commonly used kernels include SE, \matern, RQ, GE and PP kernels.

\subsubsection{\matern kernels}
We begin the analysis with a key embedding result which says that we can identify elements of the RKHS associated with kernel $K \in (\kmat)_{\nu>0}$ with elements of certain \holder~spaces. Since the RKHS associated with $\kse$ kernel is included in the RKHS of $\kmat$ for any $\nu>0$, the result also holds for $\kse$.

\begin{proposition}
\label{prop:embed1}
If $f \in \rkhs$ with $K \in \{ \kse, \kmat\}$, then there exists  constants $0<C_1, C_2 < \infty$ such that we have $\|f\|_{\holdersp{\alpha}} \leq C_1C_2 \|f\|_{K}$ where $k \in \mbb{N}$ and $\alpha \in (0,1]$ such that $k+\alpha = \nu$. 
\end{proposition}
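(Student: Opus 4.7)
The strategy is to establish a continuous embedding chain $\rkhs[K_\nu] \hookrightarrow W^{\nu+D/2,\,2}(\domain) \hookrightarrow \holdersp{\alpha}(\domain)$ and compose the associated norm bounds. The $L^2$-Sobolev space $W^{\nu+D/2,2}$ is the natural intermediate object because it both coincides (up to norm equivalence) with the Matérn RKHS via a spectral computation, and embeds into the H\"older scale via Morrey's inequality.

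The first step is to identify the Matérn RKHS with a Sobolev space. Since $\kmat$ on $\reals^D$ has spectral density proportional to $(1 + \|\omega\|^2)^{-(\nu+D/2)}$, Bochner's representation of the RKHS norm gives
\begin{equation*}
\|f\|_{K_\nu}^2 \;\asymp\; \int_{\reals^D} |\hat{f}(\omega)|^2 \, (1 + \|\omega\|^2)^{\nu+D/2}\, d\omega,
\end{equation*}
which is a constant multiple of the $W^{\nu+D/2,2}(\reals^D)$-norm squared. Passing to the compact domain $\domain = [0,1]^D$ uses a bounded extension operator (available because $\domain$ has Lipschitz boundary) and Aronszajn's theorem on restrictions of an RKHS, yielding a constant $C_2 > 0$ with $\|f\|_{W^{\nu+D/2,2}(\domain)} \leq C_2 \|f\|_{K_\nu}$ for every $f \in \rkhs[K_\nu]$.

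The second step invokes the Sobolev embedding theorem (Morrey's inequality) on $\domain$: when $s - D/2 = k + \alpha$ with $k \in \mbb{N}_0$ and $\alpha \in (0,1)$, one has a continuous embedding $W^{s,2}(\domain) \hookrightarrow \holdersp{\alpha}(\domain)$ with some constant $C_1 > 0$. Specializing to $s = \nu + D/2$ so that $k+\alpha = \nu$, and composing with the previous inequality, yields the Matérn claim $\|f\|_{\holdersp{\alpha}} \leq C_1 C_2 \|f\|_{K_\nu}$. The SE case follows as a corollary: a direct spectral comparison shows $\|f\|_{K_\nu} \leq C_3 \|f\|_{K_{\mathrm{SE}}}$ for every $\nu > 0$ (the Gaussian spectral density $e^{-\|\omega\|^2/2}$ decays faster than any $(1+\|\omega\|^2)^{-(\nu+D/2)}$), so one may invoke the Matérn bound at any admissible $(k,\alpha)$.

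The main obstacle is the endpoint case $\alpha = 1$ (i.e., $\nu$ integer), which is not covered by the standard formulation of Morrey's inequality above — at the critical exponent one only gets $\holdersp[k]{\alpha'}$ for every $\alpha' \in (0,1)$, not quite $\alpha' = 1$. I would handle this by applying the embedding one order higher: each $k$-th order weak derivative of $f$ lies in $W^{1+D/2,2}(\domain)$, which embeds into $\mathcal{C}^{0,1}(\domain)$ by the classical Lipschitz version of Morrey (or by a Fourier-multiplier argument giving $\nabla D^k f \in L^\infty$). The remaining details — making the restriction/extension to $[0,1]^D$ rigorous and tracking the explicit form of the constants $C_1, C_2$ in terms of $\nu$, $D$, and the length-scale of the kernel — are standard and do not affect the shape of the bound.
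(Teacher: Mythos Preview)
Your proposal is correct and follows essentially the same route as the paper: identify $\rkhs[\kmat]$ with $W^{\nu+D/2,2}$ via the spectral density $(1+\|\omega\|^2)^{-(\nu+D/2)}$, apply the Sobolev embedding into $\holdersp{\alpha}$, and handle $\kse$ by the spectral comparison $\mc{F}\kse(\omega)\lesssim\mc{F}\kmat(\omega)$. The only differences are cosmetic (your $C_1,C_2$ play swapped roles) or additive (you flag the integer-$\nu$ endpoint and the restriction-to-$[0,1]^D$ issue, which the paper glosses over).
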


\begin{proofoutline}
The proof relies on two key observations: (i) the norm equivalence between the RKHS associated with $\kmat$ and certain fractional Sobolev Spaces $W^{\nu+D/2, 2}$ (defined in Appendix~\ref{appendix:embedding}), and (ii) the existence of a continuous embedding from $W^{\nu+D/2, 2}$ to the space $\holdersp{\alpha}$ with $(k,\alpha)$ given in the statement. 
Finally, the result for $\kse$ follows from the inclusion of the RKHS of $\kse$ in the RKHS of $\kmat$ for $\nu>0$.  The details are in Appendix~\ref{proof:prop_embed1}.
\end{proofoutline}

\begin{remark}
\label{remark:embed1}
Note that due to the nested nature of the \holder~function spaces, if $f\in \holdersp{\alpha}$ with $\|f\|_{\holdersp{\alpha}} \leq L$ for some $L>0$ and $k\geq 1$, then $f$ is also Lipschitz continuous with constant $L$. We will use this fact to obtain improved bounds on the cumulative regret for \matern~kernels in Proposition~\ref{prop:matern_lp0}. 
\end{remark}

We note that the constant $C_1$  in the proof outline of Prop.~\ref{prop:embed1} can be computed in terms of the parameters $\nu$ and $D$. Appropriate bounds on the other term $C_2$ also exist for some cases, see for example~\citep{talenti1976best}. For simplicity however, we make the following assumption. 
\begin{assumption}
\label{assump:embed}
We assume that $n$ is large enough to ensure that $C_2 \leq \log(n)$ for   $K=\kmat$  where $C_2$ is the constant introduced in Proposition~\ref{prop:embed1}.  
\end{assumption}
We can now state the regret bounds for \matern~kernels as a special case of Theorem~\ref{theorem:regret1}. 
\begin{proposition}
\label{prop:matern_lpk}
Suppose Assumptions~\ref{assump:main}~and~\ref{assump:embed} hold, and Algorithm~\ref{algo:algo1} is run with budget $n$, $k = \lceil \nu \rceil -1$, $\alpha=\nu-k$, $L=BC_1\log n$ and other inputs as described in Sec.~\ref{sec:LPGPUCB}. Then the following statement is true with probability at least $1-\delta$, for $n$ large enough:
\begin{align}
\label{eq:reg_matern1}
    \sreg = \tOh{ \min \lp n^{\frac{1}{2} - \frac{D(D+1)}{2\nu + D(D+1)}}, \; n^{-a_{\nu}} \rp}, \quad \text{and} \quad \creg = \tOh{ \min \lp n^{\frac{1}{2} + \frac{D(D+1)}{2\nu + D(D+1)}}, \; n^{-b_{\nu}}  \rp}. 
\end{align}
 The exponent $a_{\nu}$ is $\alpha_1/(2\alpha_1+D)$ for $\nu > D(D+1)/2$ and $\nu/(2\nu+D)$ otherwise, while the exponent $b_{\nu}$ is $(D+\alpha_1)/(D+2\alpha_1)$ for $\nu>D(D+1)/2$ and $(2\nu-\alpha_1 + D)/(2\nu+D)$ otherwise. Recall that $\alpha_1 = \max \{ \alpha, \min \{1, k\}\}$. 
\end{proposition}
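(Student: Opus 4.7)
The plan is to verify that the inputs supplied to Algorithm~\ref{algo:algo1} satisfy Assumption~\ref{assump:main}, invoke Theorem~\ref{theorem:regret1} to obtain two bounds (one in terms of $\igain$ and one in terms of the smoothness parameters), plug in the known Matérn information-gain estimate, and then take a case analysis on $\nu$ to simplify the resulting minima.

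First, I would verify the Hölder hypothesis of Assumption~\ref{assump:main}(A1.2) for the chosen $k,\alpha,L$. By Proposition~\ref{prop:embed1}, with $k = \lceil \nu \rceil - 1$ and $\alpha = \nu - k \in (0,1]$ (so that $k+\alpha = \nu$), we have $\|f\|_{\mc{C}^{k,\alpha}} \leq C_1 C_2 \|f\|_K \leq BC_1C_2$. Assumption~\ref{assump:embed} then gives $C_2 \leq \log n$, so the choice $L = BC_1 \log n$ is indeed an admissible upper bound on $\|f\|_{\mc{C}^{k,\alpha}}$. Hence Theorem~\ref{theorem:regret1} applies and yields, simultaneously under the same $1-\delta$ event, the information-gain bounds in~\eqref{eq:info_type1} together with one of the smoothness bounds in~\eqref{eq:smooth1} or~\eqref{eq:smooth2}, depending on whether $\igain = \Omega(\sqrt{n})$.

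Next, I would substitute the Matérn bound $\igain = \tOh{n^{D(D+1)/(2\nu + D(D+1))}}$ from \cite{srinivas2012information} into~\eqref{eq:info_type1}. This gives the first argument of the $\min$ in~\eqref{eq:reg_matern1}: $\creg = \tOh{n^{1/2 + D(D+1)/(2\nu+D(D+1))}}$ and correspondingly $\sreg = \tOh{n^{-1/2 + D(D+1)/(2\nu+D(D+1))}}$. The threshold for switching between the two smoothness bounds is determined by whether $n^{D(D+1)/(2\nu+D(D+1))} \geq \sqrt{n}$, which is equivalent to $D(D+1)/(2\nu+D(D+1)) \geq 1/2$, i.e., $\nu \leq D(D+1)/2$. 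In the regime $\nu \leq D(D+1)/2$, the bound~\eqref{eq:smooth1} is active; substituting $k+\alpha = \nu$ yields $\sreg = \tOh{n^{-\nu/(2\nu+D)}}$ and $\creg = \tOh{n^{(2\nu - \alpha_1 + D)/(2\nu+D)}}$, matching the definitions of $a_\nu$ and $b_\nu$ in this regime. In the complementary regime $\nu > D(D+1)/2$, the bound~\eqref{eq:smooth2} is active, giving $\sreg = \tOh{n^{-\alpha_1/(2\alpha_1+D)}}$ and $\creg = \tOh{n^{(\alpha_1+D)/(2\alpha_1+D)}}$, which are exactly the stated exponents $a_\nu$ and $b_\nu$. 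Taking the minimum of the $\igain$-dependent and smoothness-dependent bounds in each case gives~\eqref{eq:reg_matern1}.

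The main obstacle is less technical than bookkeeping: one must verify that the piecewise definitions of $a_\nu$ and $b_\nu$ line up with the two cases of Theorem~\ref{theorem:regret1}, and that the definition of $\alpha_1 = \max\{\alpha,\min\{1,k\}\}$ behaves correctly across the boundary $\nu = 1$ (where $k$ jumps from $0$ to $1$ and thus $\alpha_1$ switches from $\alpha = \nu$ to $1$). A minor additional point is to check that the requirement $L = BC_1 \log n$ only contributes a polylogarithmic factor, which is absorbed in the $\tOh{\cdot}$ notation, so the stated exponents are unaffected. The ``$n$ large enough'' clause of Theorem~\ref{theorem:regret1} carries through verbatim.
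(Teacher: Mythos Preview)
Your proposal is correct and follows essentially the same approach as the paper: verify \textbf{(A1.2)} via Proposition~\ref{prop:embed1} and Assumption~\ref{assump:embed}, invoke Theorem~\ref{theorem:regret1}, substitute the Mat\'ern bound $\igain = \tOh{n^{D(D+1)/(2\nu+D(D+1))}}$ from \cite{srinivas2012information}, and split into the two cases $\nu \lessgtr D(D+1)/2$ according to whether $\igain = \Omega(\sqrt{n})$. Your write-up is in fact more explicit than the paper's one-line justification, and your remarks about the behaviour of $\alpha_1$ across $\nu=1$ and the absorption of the $\log n$ factor from $L$ into $\tOh{\cdot}$ are exactly the bookkeeping points that need to be checked.
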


The above result follows from a combination of Theorem~\ref{theorem:regret1} with the embedding result of Proposition~\ref{prop:embed1} along with Assumption~\ref{assump:embed} and employs the bounds on $\igain$ derived by \citep{srinivas2012information}. Note that under the condition $\nu \leq D(D+1)$,  the upper bound on $\igain$  is $\Omega(\sqrt{n})$. In this parameter range,  the \lpgpucb algorithm achieves  near-optimal rates for $\sreg$, thus closing the large gap between the upper and lower bounds in the literature. However, the improvement achieved for $\creg$ is not as significant as that of $\sreg$. The main reason is that for  $k \geq 1$, the \lpgpucb algorithm is more \emph{exploratory} as it performs many function evaluations before expanding cells with radius smaller than $\rho_0$ (see Lemma~\ref{lemma:smooth3} in Appendix~\ref{appendix:regret} for precise statement). While this ensures that the algorithm can find at least one \emph{good} point resulting in small $\sreg$, due to insufficient \emph{exploitation},  the bound on $\creg$ suffers. A similar trade-off between obtaining tight bounds for both $\sreg$ and $\creg$ occurs in some other bandit problems as well; see for example~\citep[\S~3]{bubeck2011pure}. 
In Proposition~\ref{prop:matern_lp0} next, we show that the by using a zeroth degree LP estimators, we can obtain tighter control over $\creg$ for \matern~kernels with $\nu>1$.

\begin{proposition}
\label{prop:matern_lp0}
For $f \in \rkhs[\kmat]$ with $\nu>1$, the \lpgpucb algorithm with $k=0$ achieves the following regret bounds:
\begin{align}
    \sreg = \tOh{\min \lp n^{-1/2 + c_\nu}, n^{-1/(D+2)} \rp}, \qquad \text{and }\quad \creg = \tOh{ \min \lp n^{1/2 + c_\nu}, n^{(D+1)/(D+2)}\rp}, \label{eq:matern_reg}
\end{align}
where $c_\nu = \frac{ D(D+3)}{4\nu + D(D+5)}$. 
\end{proposition}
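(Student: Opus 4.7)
The plan is to recognize Proposition~\ref{prop:matern_lp0} as a specialization of Corollary~\ref{corollary:regret2} in the regime $K = \kmat$, $\nu > 1$, after the appropriate smoothness parameters have been identified via the embedding result of Proposition~\ref{prop:embed1}. First, I would invoke Proposition~\ref{prop:embed1} to obtain $f \in \mc{C}^{k,\alpha}$ with $k = \lceil\nu\rceil - 1$ and $\alpha = \nu - k$, and with $\|f\|_{\mc{C}^{k,\alpha}} \leq C_1 C_2 \|f\|_K$. Because $\nu > 1$ forces $k \geq 1$, Remark~\ref{remark:embed1} gives that $f$ is in particular Lipschitz on $\domain$ with known constant $L \lesssim B C_1 \log n$ (using Assumption~\ref{assump:embed}). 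So Assumption~\ref{assump:main}~\textbf{(A1.2)} is satisfied with parameters $(0,1)$, and \lpgpucb may be instantiated with $k=0$, $\alpha=1$, yielding $\alpha_1 = 1$.

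With this instantiation I apply Corollary~\ref{corollary:regret2} directly, which immediately gives
\begin{align*}
    \sreg = \tOh{\min\!\lp \igain/\sqrt{n},\; n^{-\frac{1}{D+2}} \rp}, \qquad \creg = \tOh{\min\!\lp \sqrt{n}\,\igain,\; n^{\frac{D+1}{D+2}} \rp}.
\end{align*}
The purely smoothness-driven term $n^{-1/(D+2)}$ for $\sreg$ and $n^{(D+1)/(D+2)}$ for $\creg$ matches the second argument of the minimum in~\eqref{eq:matern_reg} verbatim, so that half of the claim is established.

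It remains to bound the information-gain term by $n^{-1/2+c_\nu}$ and $n^{1/2+c_\nu}$ with $c_\nu = D(D+3)/(4\nu + D(D+5))$. The naive substitution of the Srinivas bound $\igain = \tOh{n^{D(D+1)/(2\nu + D(D+1))}}$ is loose; the sharper exponent $c_\nu$ must be obtained by exploiting the adaptive partition $\mc{P}_t$ that \lpgpucb produces. The plan is to decompose the cumulative posterior standard deviation $\sum_t \beta_n \sigma_t(x_t)$ as a sum over terminal cells of $\mc{P}_t$, and in each cell $E$ of side $r_E$ bound the local contribution using a \matern\ information-gain estimate restricted to that cell (where the effective scale $r_E$ enters) combined with the constraint coming from the Lipschitz-triggered expansion rules (lines~\ref{algline:cond1}--\ref{algline:cond2} of Algorithm~\ref{algo:algo1}). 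Solving the resulting optimization over the cell radii and the per-cell sample counts produces the exponent $c_\nu$.

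The main obstacle is this final step: obtaining the refined $\igain$ bound requires a careful counting of the partition against the expansion conditions for $k=0$, $\alpha_1=1$, since one must simultaneously track (i) the number of cells at each scale, (ii) the \matern\ information-gain rate rescaled to a cell of side $r_E$, and (iii) the fact that the algorithm only performs a function evaluation once the Lipschitz criterion is violated. The balance that yields $c_\nu = D(D+3)/(4\nu + D(D+5))$ is exactly the optimum of this three-way trade-off; I expect this to be the only delicate calculation, with the rest of the proof following routinely from the embedding and the already-established Corollary~\ref{corollary:regret2}.
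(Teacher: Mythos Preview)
Your proposal is correct and follows the paper's own argument almost exactly: invoke the embedding (via Remark~\ref{remark:embed1}) to get Lipschitz continuity for $\nu>1$, instantiate Corollary~\ref{corollary:regret2} with $k=0$, $\alpha=1$, and then replace the crude Srinivas bound on $\igain$ by a sharper one that exploits the hierarchical sampling structure of \lpgpucb. The one difference is that the ``delicate calculation'' you anticipate for the exponent $c_\nu$ is not carried out in the paper at all---the paper simply observes (from Lemma~\ref{lemma:smooth2} with $\alpha_1=1$) that each cell $E$ is evaluated at most $\tOh{r_E^{-2}}$ times, and then \emph{cites} an external result \citep[Theorem~3]{shekhar2019multiscale} on information gain for hierarchical sampling schemes to conclude $\igain = \tOh{n^{c_\nu}}$ directly. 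Your proposed decomposition of $\sum_t \sigma_t(x_t)$ over cells, with the local \matern\ information-gain scaled to side $r_E$ and the per-cell sample budget capped by the expansion rule, is effectively the content of that cited theorem; so you are planning to re-derive what the paper imports as a black box.
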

\begin{proof}
From Remark~\ref{remark:embed1}, we know that $f\in \rkhs[\kmat]$ with $\|f\|_{\kmat}\leq B$ for $\nu>1$ are Lipschitz continuous, and hence the bounds or Corollary~\ref{corollary:regret2} hold with $\alpha=1$. Furthermore, by Lemma~\ref{lemma:smooth2} we know that each cell $E$ is evaluated no more than $\tOh{r_E^2}$ times. This fact allows us to apply bounds on the information gain for \matern~kernels for hierarchical sampling algorithms derived in \citep[Theorem~3]{shekhar2019multiscale} to get $\igain = n^{c_\nu}$ with $c_\nu = \frac{ D(+3}{4\nu + D(D+5)}$. 
\end{proof}
Combined with the regret bounds for \matern~kernels derived in Theorem~\ref{theorem:regret1}, the above result implies that the \lpgpucb algorithm (with appropriate choice of $k$) achieves improved regret bounds for the \matern~family of kernels for all values of $\nu$ and $D$. Further discussion is in Sec.~\ref{subsec:improved}.

\subsubsection{Regret Bounds for other kernels}
\label{subsec:lp0}
We begin with the following embedding result. 
\begin{proposition}
\label{prop:embed2}
If $f \in \rkhs$ for $K \in  \mc{K} \coloneqq  \{\krq,\; \kgexp,\; \kpp \}$, then we have $\|f\|_{\mc{C}^{0, \alpha}} \leq \sqrt{2}C_K \|f\|_{K}$ with $\alpha \in \{1/2, 1\}$ and $C_K$ is a constant that can be computed from the knowledge of $K$. 
\end{proposition}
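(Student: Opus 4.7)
The plan is to reduce the Hölder bound to an elementary calculation on each kernel via the reproducing property. For any $x, y \in \domain$ and $f \in \rkhs$, Cauchy--Schwarz in the RKHS gives
\[
|f(x) - f(y)| = |\langle f,\; K(x,\cdot) - K(y,\cdot)\rangle_K| \;\leq\; \|f\|_K\, \|K(x,\cdot) - K(y,\cdot)\|_K.
\]
Each kernel in $\mc{K}$ is stationary, so writing $K(x,y) = \bar K(x-y)$ with $\bar K(0) = 1$ (after an innocuous rescaling), one has $\|K(x,\cdot) - K(y,\cdot)\|_K^2 = 2\bigl(1 - \bar K(x-y)\bigr)$. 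The task therefore reduces to proving, for each kernel, an inequality of the form $1 - \bar K(h) \leq C_K^2 \|h\|^{2\alpha}$ for some $\alpha \in \{1/2, 1\}$ and explicit $C_K$; the factor $\sqrt{2}$ in the statement is what appears after taking square roots.

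For $\krq(h) = (1 + \|h\|^2/(2\beta l^2))^{-\beta}$, the elementary inequality $1 - (1+u)^{-\beta} \leq \beta u$ for $u \geq 0$ immediately yields $1 - \krq(h) \leq \|h\|^2/(2 l^2)$, so $\alpha = 1$ with $C_K = 1/(2l)$. For $\kgexp(h) = \exp(-(\|h\|/l)^\gamma)$ with $\gamma \in (0,2]$, the inequality $1 - e^{-u} \leq u$ gives $1 - \kgexp(h) \leq (\|h\|/l)^\gamma$; combined with the diameter bound $\|h\| \leq \sqrt{D}$ on $[0,1]^D$, this produces a Hölder bound of exponent $\gamma/2$, which I would then relax to the appropriate $\alpha \in \{1/2, 1\}$ (depending on whether $\gamma \geq 1$), with the domain diameter absorbed into $C_K$. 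For $\kpp$, which is a compactly supported polynomial in $\|h\|$, I would expand $1 - \bar K(h)$ around the origin: depending on the piecewise-polynomial order $q$, the leading term is either linear or quadratic in $\|h\|$, so $\alpha \in \{1/2, 1\}$, and $C_K$ is read off from the coefficients.

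The routine work is kernel-by-kernel bookkeeping of constants; the only mildly subtle point is $\kpp$, which is not everywhere smooth because of its compact support. I would handle this by splitting cases: on the support, $\bar K$ is polynomial with explicit derivative, giving the desired local bound; off the support, $1 - \bar K(h) \equiv 1$, but then $\|h\|$ exceeds the support radius and the bound $1 \leq C_K^2 \|h\|^{2\alpha}$ holds trivially with $C_K$ enlarged by that radius. Collating the three cases yields the uniform statement with the claimed $\alpha$ and an explicit $C_K$ depending only on $K$.
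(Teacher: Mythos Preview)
Your approach is essentially the same as the paper's: both reduce via the reproducing property and Cauchy--Schwarz to the bound $|f(x)-f(y)|\le \|f\|_K\sqrt{2(K(0)-K(r))}$ and then verify $K(0)-K(r)\le C_K^2 r^{2\alpha}$ kernel by kernel. The only cosmetic differences are in the elementary inequalities chosen for each kernel (e.g., the paper bounds the RQ term via convexity of $u\mapsto(1+cu)^{-a}$ on the bounded domain rather than your Bernoulli-type inequality, and it settles for $\alpha=1/2$ uniformly for $\kgexp$ and $\kpp$ rather than case-splitting on the parameter), but the structure and conclusion are the same.
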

\begin{proofoutline}
The proof of this statement proceeds by first using the reproducing property of $\rkhs$ and the Cauchy-Schwarz inequality to obtain $|f(x_1)-f(x_2)| \leq \sqrt{2(K(0) - K(\|x_1-x_2\|) )}$ for $x_1,x_2 \in \domain$. To complete the proof we then have to perform some computations specific to each kernel in $\mc{K}$ to obtain the required values of $\alpha$ and $C_K$. The details are in Appendix~\ref{proof:prop_embed2}. 
\end{proofoutline}




By employing the result of Proposition~\ref{prop:embed2} with the regret bound in Corollary~\ref{corollary:regret2}, we get the following. 
\begin{proposition}
\label{prop:regret2}
Suppose Assumption~\ref{assump:main} holds, and Algorithm~\ref{algo:algo1} is run with budget $n$, $K \in \mc{K}$,  $L=\sqrt{2}C_KB$, $k=0$, $\alpha$ according to Proposition~\ref{prop:embed2} and other inputs as described in Sec.~\ref{sec:LPGPUCB}. Then we have with probability at least $1-\delta$, 
\begin{align*}
    \sreg = \tOh { \min \lp \gamma_n/\sqrt{n}, \; n^{-\alpha/(2\alpha+D)} \rp }, 
    \quad \text{and} \quad
    \creg &= \tOh{ \min \lp \sqrt{n} \gamma_n, \; n^{(\alpha+D)/(2\alpha+D)} \rp}.
\end{align*}
\end{proposition}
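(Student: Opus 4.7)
The plan is to establish Proposition~\ref{prop:regret2} as a direct corollary of the embedding result in Proposition~\ref{prop:embed2} combined with the general regret bound in Corollary~\ref{corollary:regret2}. The entire argument reduces to verifying that, under the hypotheses of the proposition, Assumption~\ref{assump:main} is satisfied with the parameter choices $k=0$ and $\alpha \in \{1/2, 1\}$ specified in Proposition~\ref{prop:embed2}, which then allows a one-line appeal to Corollary~\ref{corollary:regret2}.

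First, I would invoke Proposition~\ref{prop:embed2} for $K \in \mc{K} = \{\krq, \kgexp, \kpp\}$ to conclude that every $f \in \rkhs$ with $\|f\|_K \leq B$ satisfies $\|f\|_{\mc{C}^{0,\alpha}} \leq \sqrt{2}\,C_K\,\|f\|_K \leq \sqrt{2}\,C_K\,B = L$, with the corresponding $\alpha$. This verifies part \textbf{(A1.2)} of Assumption~\ref{assump:main} with $k=0$ and the prescribed $L$. Parts \textbf{(A1.1)} and \textbf{(A1.3)} are part of the standing assumption, so all preconditions of Corollary~\ref{corollary:regret2} are met.

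Second, I would apply Corollary~\ref{corollary:regret2} directly with the chosen values of $k=0$ and $\alpha$, yielding
\[
\sreg = \tOh{\min\!\lp \igain/\sqrt{n},\, n^{-\alpha/(2\alpha+D)} \rp} \quad\text{and}\quad \creg = \tOh{\min\!\lp \sqrt{n}\,\igain,\, n^{(\alpha+D)/(2\alpha+D)} \rp},
\]
which is exactly the claim. I would also briefly remark, as done in Remark~\ref{remark:rho0}, that although no explicit analytic bound on $\igain$ is available for RQ and GE kernels, a valid numerical value of the input $\rho_0$ can still be obtained by running $n$ steps of the purely greedy selection rule before initialization, so the algorithm is well-defined for these kernels.

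There is essentially no analytical obstacle here; the conceptual work has already been carried out in proving Proposition~\ref{prop:embed2} (the kernel-specific Hölder embedding) and in establishing the smoothness-dependent branch of the regret bound in Theorem~\ref{theorem:regret1}/Corollary~\ref{corollary:regret2}. The only point requiring a small amount of care is making sure the proof explicitly notes \emph{why} the smoothness-dependent branch of the min is the informative one: since no sublinear bound on $\igain$ is known for the kernels in $\mc{K}$, the $\igain$-dependent term in the min is vacuous without further work, and it is the second term $n^{-\alpha/(2\alpha+D)}$ (respectively $n^{(\alpha+D)/(2\alpha+D)}$) that delivers the first explicit-in-$n$ regret guarantee for these kernels.
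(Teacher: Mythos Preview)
Your proposal is correct and matches the paper's own argument exactly: the paper states Proposition~\ref{prop:regret2} as an immediate consequence of combining the embedding result of Proposition~\ref{prop:embed2} with the general bound of Corollary~\ref{corollary:regret2}, which is precisely the two-step verification you describe. Your additional remarks about $\rho_0$ and about the smoothness branch being the informative one are consistent with the paper's surrounding discussion but are not part of the proof itself.
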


Note that since $n$ dependent bounds on $\igain$ are not known for kernels in the set $\mc{K}$, the result of Proposition~\ref{prop:regret2} provides us with the first explicit regret bounds for these kernels. 


\subsection{Summary of Improvements}
\label{subsec:improved}
We now summarize the regret bounds achieved by Algorithm~\ref{algo:algo1} for different kernels. 

\vspace{0.5em}
\noindent\tbf{SE kernel.} The $\igain$ dependent bounds with the poly-logarithmic upper bounds on $\igain$ derived by \citep{srinivas2012information} are known to be near-optimal for SE kernel. The \lpgpucb algorithm also matches these results. 

\vspace{0.5em}
 \noindent\tbf{\matern~kernels.} To discuss the results for \matern~kernels, we introduce the notations  $\mc{I}_0 = (0,1]$, $\mc{I}_1 = (1, D(D+1)/2]$, $\mc{I}_2 = ( \small{D(D+1)/2}, \small{(D^2 + 5D+12)/4}]$ and $\mc{I}_3 = (e, \infty)$ where $e = \max\{ D(D+1)/2, (D^2 + 5D + 12)/4\}$. Note that $\mc{I}_2$ is non-empty only for $D \leq 5$. We have the following:

\begin{itemize}[leftmargin=*]
\item \emph{Simple Regret.} The best bounds on $\sreg$ is $\tOh{n^{-\nu/(2\nu+D)}}$ for $\nu \in \mc{I}_0\cup \mc{I}_1$ and $\tOh{ n^{-1/(D+2)}}$ for $\nu \in \mc{I}_2$, both of which are achieved by the \lpgpucb algorithm with $k = \lceil \nu \rceil-1$, i.e.,~\eqref{eq:reg_matern1}. 
For $\nu \in \mc{I}_3$, we have $\sreg = \tOh{n^{-\frac{1}{2} + c_\nu}}$ for $c_\nu = \frac{D(D+3)}{4\nu+D(D+5)}$ which is achieved by the algorithm with $k=0$. Note that for $\nu \in \mc{I}_0\cup \mc{I}_1$, the regret bound achieved by \lpgpucb algorithm is near-optimal, i.e., it matches the lower bound up to poly-log factors.  This closes the large gap between the upper and lower bounds for existing  algorithms, as in this parameter regime the existing bounds on $\sreg$ are $\Oh{1}$ due to $\igain$ being $\Omega(\sqrt{n})$. 

\item \emph{Cumulative Regret.} The \lpgpucb algorithm with $k=0$ achieves tighter control over $\creg$ for all values of $\nu$. In particular for $\nu \in \mc{I}_0$ we have $\creg = \tOh{n^{(\nu+D)/(2\nu+D)}}$, which matches the lower bound of \citep{scarlett2017lower} up to poly-log factors. For   $\nu \in \mc{I}_1 \cup \mc{I}_2$, $\creg$ is $\tOh{n^{(D+1)/(D+2)}}$  while for $\nu\in \mc{I}_3$ it is $\tOh{n^{1/2 + c_\nu}}$ with $c_{\nu}$ defined in Prop.~\ref{prop:matern_lp0}. 

The cumulative regret bounds discussed above improve upon the state-of-the-art for all values of $\nu, D$. More specifically, \tbf{(i)} since $\igain$ based bounds are known only for $\nu>1$, our results provide the first explicit (and near-optimal) regret bounds of \matern~kernels with $\nu\in \mc{I}_0=(0,1]$, and \tbf{(ii)} for $\nu>1$ our results improve upon the best known bounds of $\tOh{n^{e_{\nu}}}$ with $e_{\nu} = \frac{ D(2D+3) + 2\nu}{D(2D+4)+4\nu}$, recently derived by \citep{janz2020bandit}. 

\end{itemize}

 \noindent \tbf{Other kernels.} 
 For several other important kernels, including GE, RQ and PP, we derive the first explicit (in $n$) bounds on $\sreg = \tOh{n^{-\alpha/(D+2\alpha)}}$  and $\creg = \tOh{n^{(D+\alpha)/(D+2\alpha)}}$ with $\alpha \in \{1/2, 1\}$ depending on the kernel. Since there do not exist any algorithm-independent lower bounds for these kernels, it is not clear how sub-optimal these bounds are.

\section{Empirical Results}
\label{sec:empirical}
In this section, we empirically test the performance of our proposed algorithm \lpgpucb and \ttt{Heuristic} against some well known benchmarks on synthetic functions as well as a hyperparameter tuning task. In all the experiments, we use the \matern kernel $K_\nu$ with $\nu=2.5$, and set $B=1$ and $L=\sqrt{2}$~(where applicable), $\sigma=0.1$ and $\delta=0.001$. 

\tbf{Algorithms.} We used the following algorithms:
\tbf{(1)} \emph{LP0}: This is the simplest version of our \lpgpucb algorithm, which uses $k=0$ and $\alpha=1$, 
\tbf{(2)} \emph{Heuristic}, introduced in Sec.~\ref{subsec:practical} with details in Appendix~\ref{appendix:heuristic}, 
\tbf{(3)} \emph{IGPUCB}, the improved GP-UCB algorithm of \citep{chowdhury2017kernelized}, 
\tbf{(4)} \emph{EI}, Expected Improvement, and \tbf{(5)} \emph{PI}, Probability of Improvement. 

\tbf{Expt. 1: Synthetic functions.} The goal of the first experiment is to test if the algorithms can detect certain underlying structure in the objective functions. More specifically, we construct an $f:\reals^8 \mapsto \reals$ by using a benchmark function $g:\reals^2 \mapsto \reals$ as follows: $f(x) = \sum_{i=1}^4 c_i g(x[2i-1:2i])$ where $c_1=1$ and $c_i=0.1$ for $i=2,3,4$. For the 2-dimensional benchmark function $g$, we used Branin and Goldstein functions. Figure~\ref{fig:expt1} plots the best function value vs the number of evaluations for the five algorithms mentioned above. As we can see, due to the adaptive partitioning approach of  \heuristic and \lpgpucb, these algorithms were better able to exploit the simple structure in the objective function and find higher value points. 

\begin{figure}[t]
\captionsetup[subfigure]{labelformat=empty}
     \centering
    \subcaptionbox{
    \label{fig:branin}}{\includegraphics[width=0.48\columnwidth]{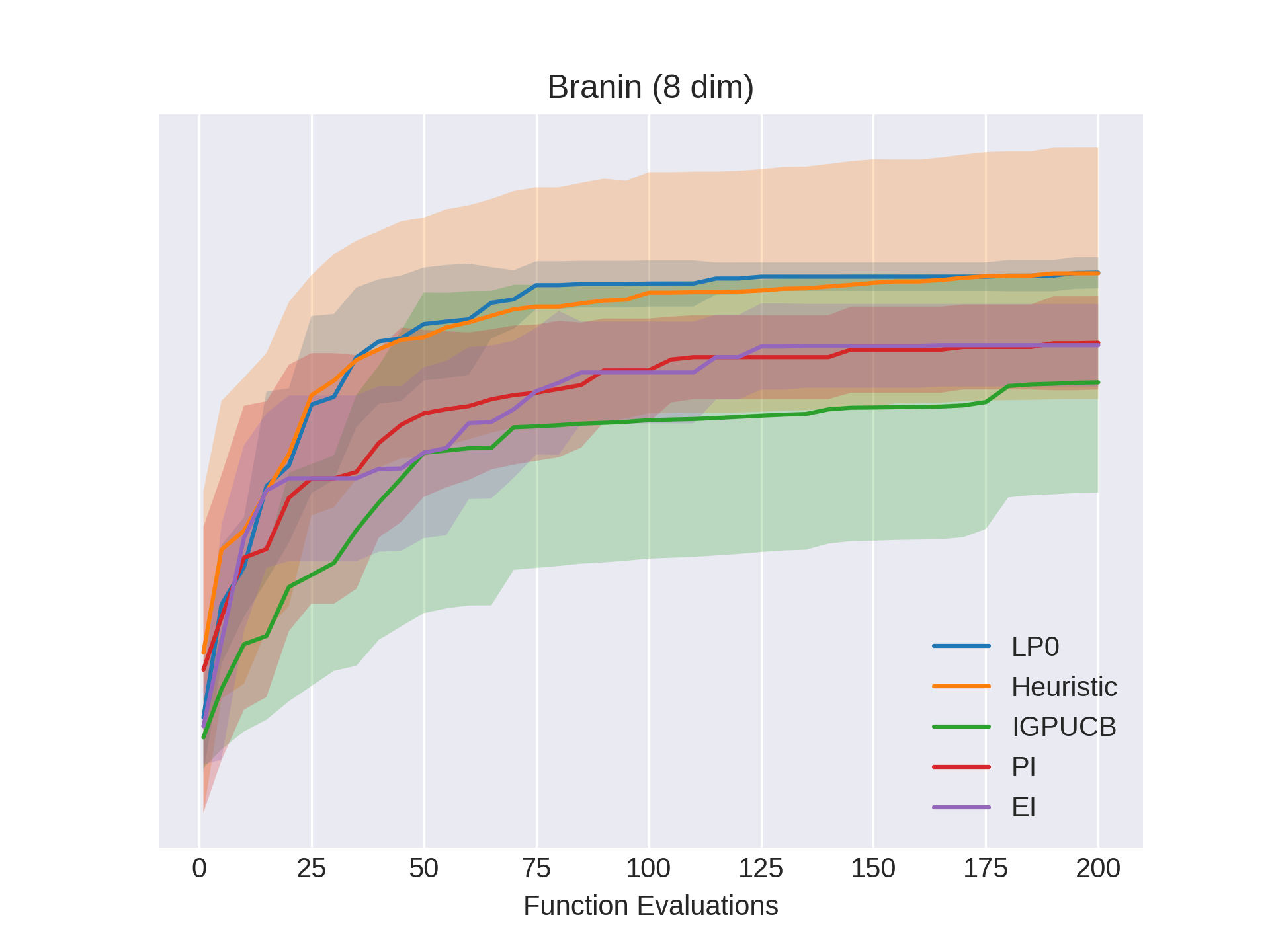}}
        \subcaptionbox{
        \label{fig:goldstein}}{\includegraphics[width=0.48\columnwidth]{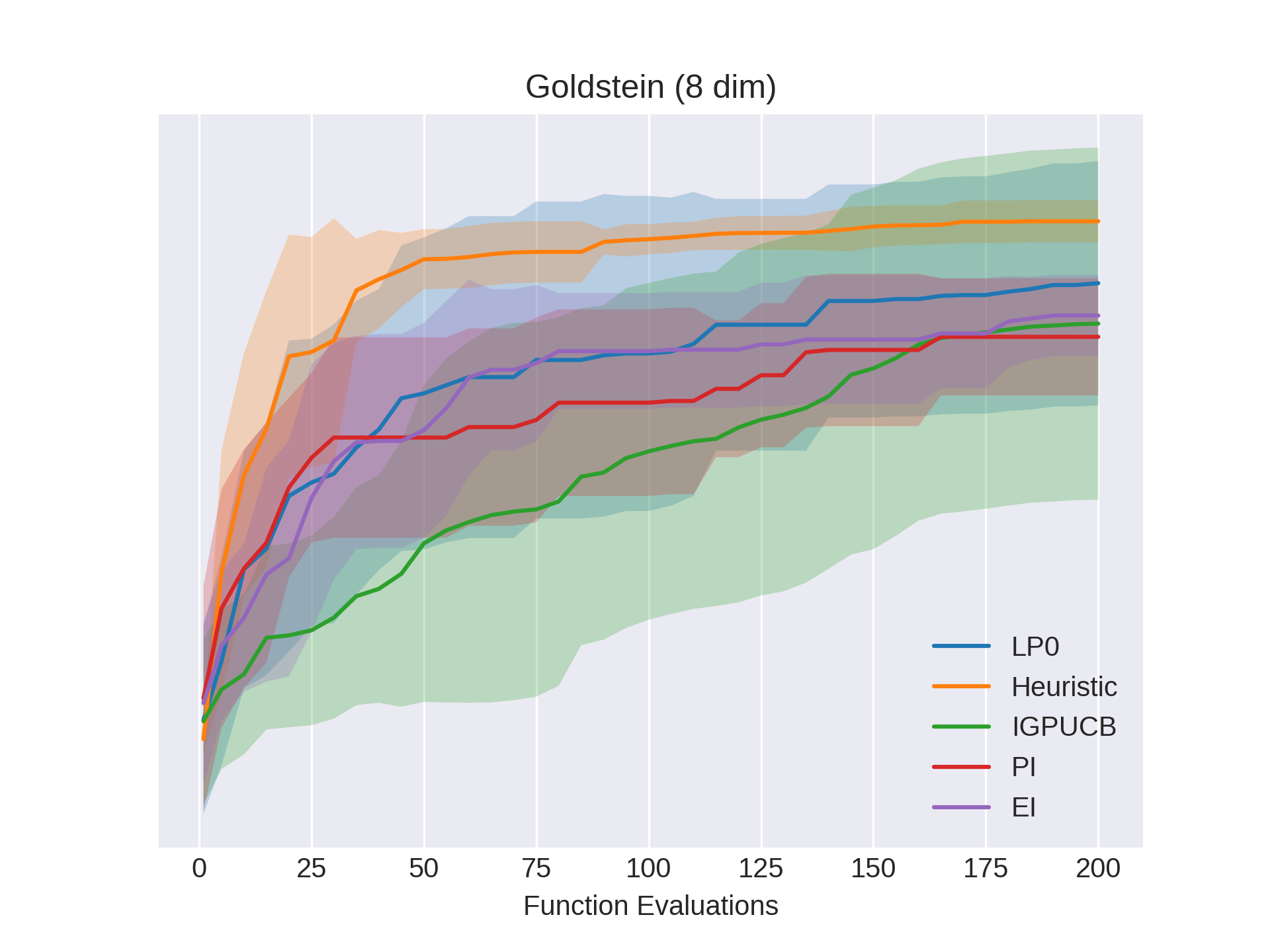}} 

    \vspace{-1em}
       \caption{Performance of the five algorithms on the task of optimizing synthetic 8-dim functions with underlying low-dimensional structure. }
        \label{fig:expt1}
\end{figure}

\tbf{Expt. 2: Hyperparameter Tuning.} In this experiment, we use the optimization algorithms for selecting the best hyperparameters of a Convolutional Neural Network (CNN) with two convolutional layers and two fully connected layers. The hyperparameter to be optimized were \ttt{batch\_size}, the \ttt{learning\_rate}, the \ttt{kernel\_size} of the two conv layers and the \ttt{hidden\_nodes} in the first fully-connected layer. The objective to be maximized was the accuracy on the test set. Further details of the experiment setup are given in Appendix~\ref{appendix:experiments}.  The mean accuracy of the point recommended by the algorithms over $10$ trials (each with a budget of $n=50$ evaluations) is shown in Table~\ref{table:mnist}. As indicated by the values, both LP0 and \heuristic achieve high classification accuracy with relatively small variability.

\begin{table}[!ht]
\centering
\begin{tabular}{| c c c |} \hline 
Method & Mean Accuracy & Std. Deviation \\ \hline 
LP0 & 90.117 & 1.479 \\ 
\heuristic & 90.777 & 1.711 \\
IGPUCB & 85.027 & 10.981 \\
EI & 88.892 & 1.679 \\
PI & 86.692 & 2.966 \\ \hline 
\end{tabular}
\caption{The table shows the performance of the five algorithms for the hyperparameter tuning task over 10 trials. Each trial consisted of $50$ iterations (i.e., $n=50$), and the table reports the mean and standard deviation of the test accuracy at the point (i.e., the configuration of hyperparmeters) recommended by the algorithms at the end of each trial.}
\label{table:mnist}
\end{table}



\section{Conclusion and Future Work}
\label{sec:conclusion}
In this paper, we proposed a new algorithm, \lpgpucb, for the problem of agnostic Gaussian Process bandits, and obtained high probability bounds on its simple and cumulative regret. For the practically useful \matern~family of kernels $(\kmat)_{\nu>0}$, we derive regret bounds which are tighter than existing bounds for all $\nu>0$, and is near-optimal in certain ranges of $\nu$. Furthermore, for this algorithm, we also obtained the first explicit regret bounds for some important kernels such as rational-quadratic, gamma-exponential and piecewise-polynomial kernels. Experimental evaluation on some benchmark functions as well as on a hyperparameter tuning task suggest that the proposed multi-scale partitioning approach may also be adapted for practical problems.

Our work opens several interesting directions for future research: 
\tbf{(1)} 
improving the cubic computational complexity associated with the exact GP inference in the implementation of \lpgpucb and \heuristic by using techniques such as adaptive sketching~\citep{calandriello2019gaussian}, 
\tbf{(2)} extension of the algorithmic techniques of this paper to related topics such as contextual GP bandits, GP level set estimation and parallel GP bandits. Since the existing theoretical results for these problems also depend on $\igain$, the methods of our paper may potentially lead to significant improvements in these problems as well. 


\newpage 
\bibliographystyle{apalike}
\bibliography{ref}

\newpage 
\onecolumn
\begin{appendix}
\section{Preliminaries}
\label{appendix:preliminaries}

A function $K: \domain \times \domain \mapsto \mbb{R}$ is called a positive-definite \emph{kernel} if for all $m \in \mbb{N}$ and $a_i \in \reals$ for $1 \leq i \leq m$, the following inequality holds: $\sum_{i=1}^m\sum_{j=1}^m a_i K(x_i, x_j) a_j \geq 0$. 
Every kernel function, $K$,  can be used to define a collection of functions $\{K(\cdot, x): \domain \mapsto \reals, \; x \in \domain\}$ and the corresponding linear span  $\mc{G}_K \coloneqq \{ \sum_{i=1}^m a_i K(\cdot, x_i)\; : \; m \in \mbb{N}, \; (x_i)_{i=1}^m \subset \domain, (a_i)_{i=1}^m \subset \reals\}$.  Using the positive-definiteness of $K$, the following inner-product can be defined on $\mc{G}_K$, $\langle f, g \rangle_{K} = \sum_{i=1}^{m_1} \sum_{j=1}^{m_2}a_i  b_j K(x_i, z_j)$ for $f=\sum_{i=1}^{m_1} a_i K(\cdot, x_i)$ and $g=\sum_{j=1}^{m_2} b_j K(\cdot, z_j)$. Thus $\lp \mc{G}_K, \langle \cdot, \cdot \rangle_{K} \rp$ is an inner-product space, which may not be complete. 
The \emph{Reproducing Kernel Hilbert Space} associated with kernel $K$, denoted by $\rkhs$,  is defined as the completion of the inner product space $\lp \mc{G}_K, \langle \cdot, \cdot \rangle_{K} \rp$. 

Two of the most commonly used kernels in machine learning are the Squared-Exponential~(SE) kernel $\kse$ and the \matern family of kernels, $(\kmat)_{\nu>0}$, parameterized by a smoothness parameter $\nu$. For $x, z \in \domain$ such that $\|x-z\|=r$, they are defined as 
\begin{align*}
    &\kse(x, z) = \kse (r) = \exp \lp - \frac{ r^2}{2 \theta_l^2} \rp,  \qquad 
    \kmat(x, z) = \kmat (r) = \frac{2^{1-\nu}}{\Gamma(\nu)} \lp \frac{ \sqrt{2 \nu }r}{\theta_l} \rp^{\nu} J_\nu \lp \frac{ \sqrt{2 \nu }r}{\theta_l} \rp, 
\end{align*}
where $\Gamma(\cdot)$ is the Gamma function,  $J_{\nu}$ represents the modified Bessel's function of the second kind, and $\theta_l>0$ is the length-scale parameter. 
In addition to the above two kernels, some other important kernels used in applications are Rational-Quadratic kernels $\krq$, Gamma-exponential kernels $\kgexp$ and piece-wise polynomial kernels. A detailed description of these kernel functions can be found in \citep[Chapter~4]{williams2006gaussian}.

For $0<\alpha\leq 1$, we use $\mc{C}^{0,\alpha}(\domain)$ to denote the the space of \holder~continuous functions $g:\domain \mapsto \mbb{R}$, which satisfy $[g]_\alpha < \infty$ where $[\cdot]_\alpha$ is defined in~\eqref{eq:holder}. Furthermore for $k \geq 1$, we can define higher order \holder~spaces $\mc{C}^{k,\alpha}$, as the collection of functions $g$ with $\|g\|_{\mc{C}^{k,\alpha}}<\infty$. In the definition of $\|\cdot\|_{\mc{C}^{k,\alpha}}$ in~\eqref{eq:holder},  we have $\|g\|_u \coloneqq \sup_{x\in \domain} |g(x)|$ and $\partial^a g$ denotes the derivative of $g$ with multi-index $a = (a_1, a_2, \ldots, a_D)\in\mbb{N}^D$ and $|a|\coloneqq \sum_{i=1}^Da_i$. 
\begin{align}
\label{eq:holder}
    [g]_\alpha \coloneqq \sup_{ x_1, x_2 \in \domain} \; \frac{ |g(x_1) - g(x_2)|}{\|x_1-x_2\|^\alpha}, \quad 
     \|g\|_{\mc{C}^{k,\alpha}} \coloneqq \sum_{|a|\leq k} \|\partial^ag\|_u +  \sum_{|a|=k} [\partial^ag]_{\alpha}.
\end{align}

\section{Proof of Propositions~\ref{prop:embed1} and~\ref{prop:embed2}}
\label{appendix:embedding} 
In this section, we provide the proofs of the  embedding results, Propositions~\ref{prop:embed1} and~\ref{prop:embed2}.

\subsection{Proof of Proposition~\ref{prop:embed1}}
\label{proof:prop_embed1}
 As mentioned in the proof outline, this result is obtained in two steps: (1) use the norm equivalence between the $\rkhs[\kmat]$ and fractional Sobolev spaces $\sobolev$, and (2) employ a Sobolev Embedding Theorem to then identify elements of $\sobolev$ with elements of \holder~space $\holderspk$. We elaborate on these steps next.  
 

First we recall the definition of Sobolev Spaces $W^{s,2}$ for $s>0$ as follows:
\begin{align*}
    W^{s,2} = \{f \in L^2(\domain)\;:\; \mc{F}f(\cdot)\lp 1 + \|\cdot\|^2 \rp^{s/2}  \in L^2(\domain)\}, 
\end{align*}
where $\mc{F}f$ is the Fourier Transform of $f$. Next, by Theorem~10.12 of \citep{wendland2004scattered}, we note that $\rkhs[\kmat]$  can also be defined as 
\begin{align*}
    \rkhs[\kmat] = \{f \in L^2(\domain)\;:\; \frac{ \mc{F}f}{\sqrt{\mc{F}\kmat}} \in L^2(\domain)\}.
\end{align*}
Finally, by noting that there exists a constant $C_1>0$ such that we have $\mc{F}\kmat(\omega) \geq \frac{1}{C_1 (1+\|\omega\|^2)^{\nu+D/2}}$, we complete the first step of the proof for \matern kernels, by showing that $\|f\|_{\sobolev} \leq C_1 \|f\|_{\rkhs[\kmat]}$. 

The final result then follows from an application of a Sobolev Embedding Theorem which says that there exists a constant $C_2 >0$, such that $\|f\|_{\holderspk} \leq C_2 \|f\|_{\sobolev}$ for $f \in \sobolev$. The specific form we use is given in Theorem~2 in Sec.~3.6 of~\citep{salo2008lecture}

Finally, for the case of SE kernel, we note  that for any $\nu>0$,  there exists a constant $C_{3,\nu}>0$ such that we have for all $\omega$ $\sqrt{\mc{F}\kmat(\omega)/\mc{F}\kse(\omega)} \geq 1/C_{3,\nu}$. This implies the following:
\begin{align*}
    \|f\|_{\kse} = \|\mc{F}f/\sqrt{\mc{F}\kse}\|_2 = \left \lVert   \frac{\mc{F}f}{\sqrt{\mc{F}\kmat}} \sqrt{\frac{\mc{F}\kmat}{\mc{F}\kse}}  \right\rVert_2 \geq \frac{1}{C_{3,\nu}} \|f\|_{\kmat}, 
\end{align*}
which implies the result for $\kse$.  \qed
\subsection{Proof of Proposition~\ref{prop:embed2}} 
\label{proof:prop_embed2}
First, we obtain a general bound on the term $|f(x) - f(z)|$ for any $x, z\in \domain$ with $\|x-z\|=r$ as follows:
\begin{align*}
     |f(x) - f(z)| & \stackrel{(a)}{=}  |\langle f, K_x - K_z \rangle| 
    \stackrel{(b)}{\leq}   \|f\|_K \|K_x - K_z\|_K \\
    & \stackrel{(c)}{\leq} B  \|K_x - K_z \|_K \stackrel{(d)}{=} \sqrt{2 \lp K(0) - K(r) \rp}. 
\end{align*}
In the above display, \tbf{(a)} uses the reproducing property of the RKHS, i.e., $f(x) = \langle f, K_x \rangle$, \tbf{(b)} uses the Cauchy-Schwarz inequality,  \tbf{(c)} follows from the bound on the norm of $\f$ according to Assumption~\ref{assump:main}, and \tbf{(d)} uses the fact that $K$ is isotropic for all $K \in \mc{K}$.


To complete the proof, we need to show that $\sqrt{K(0) - K(r)} \leq C_K r^{\alpha}$ for some $C_K >0$ and $\alpha \in \{1/2,1\}$ for all $K \in \mc{K}$. We consider the three kernels separately:

\tbf{RQ kernels.} For $x,z$ such that $\|x-z\| = r$, the RQ kernel with parameters $a,\theta_l$ is defined as 
\begin{align*}
    \krq(x,z) = \krq(r) = \lp 1 + \frac{r^2}{2a \theta_l} \rp^{-a}, \quad a,\theta_l >0.
\end{align*}
Now, using the fact that the function $g(x) \coloneqq (1+cx)^{-a}$ for $c>0$ is convex  in $x>0$, and the fact that for $x,z \in \domain$, we must have $\|x-z\|^2 \leq D$, we have:
\begin{align*}
    g(x) &\leq g(0) + \frac{g(D) - g(0)}{D-0} x     \Rightarrow g(x) -g(0)  \leq \frac{g(D)}{D}x 
    \Rightarrow \sqrt{g(x) - g(0)}  \leq \frac{r}{\sqrt{(1+cD)^{a}D}}. 
\end{align*}
Thus for the case of $\krq$, the required result holds with $\alpha=1$ and $C_{\krq} = \sqrt{\frac{1}{D (1+1/(2a\theta_l))^a}}$. 

\tbf{GE Kernels.} For $\theta_l>0$ and $0<a\leq 2$, and for $x, y \in \domain$ with $\|x-y\|=r$, the GE kernel is defined as 
\begin{align*}
    \kgexp(x,z) = \kgexp(r) = \exp\lp  - \lp\frac{r}{\theta_l}\rp^a \rp. 
\end{align*}
For this kernel, we can proceed as follows: 
\begin{align*}
    \kgexp(0) - \kgexp(r) &= 1 -  \lp e^{-r/\theta_l}\rp^a \leq 1 - \lp 1 - r/\theta_l\rp^a \leq 1 - \lp 1 - \max\{a,1\} \frac{r}{\theta_l} \rp. 
\end{align*}
The last inequality in the above display follows by considering the two cases: $0<a\leq 1$ and $1<a\leq 2$. Finally, this implies that we have 
\begin{align*}
\sqrt{K(0)-K(r)} \leq \sqrt{ \frac{\max\{a,1\} r }{\theta_l}},
\end{align*}
which implies the result with $\alpha = 1/2$ and $C_{\kgexp} = \sqrt{ \frac{\max\{a,1\}}{\theta_l}}$.

\tbf{PP Kernels.} Finally, we consider the piecewise-polynomial kernels $\kpp$ which are defined for $q=0$ and $1$ in the display below.
For the expression of $\kpp$ for other values of $q$, see \citep[\S~4.2]{williams2006gaussian} and \citep[Ch.~9]{wendland2004scattered}. 
\begin{align*}
    K_{\text{pp}, 0}(r) &= (1-r)_+^j; \qquad \qquad j = \left \lfloor \frac{D}{2} \right \rfloor + q + 1, \\ 
    K_{\text{pp}, 1}(r) &= (1-r)_+^{j+1}\big( (j+1)r + 1 \big).
\end{align*}
Since $j \geq 1$ for all choices of $D$ and $q$, we can show that 
\begin{align*}
    \sqrt{\kpp(0) - \kpp(r)} \leq \sqrt{ (j+q)r}, 
\end{align*}
which implies the required result with $\alpha=1/2$ and $C_{\kpp} = \sqrt{j+q}$. This completes the proof of \propref{prop:embed2}. \qed
\section{Proof of Theorem~\ref{theorem:regret1}}
\label{appendix:regret}
\subsection{Some concentration results}
\label{appendix:concentration1}
We collect the required concentration events required for the proof of Theorem~\ref{theorem:regret1} here. 
The first event that we require is the deviation bounds between the posterior mean $\mu_t$ and the function value, in terms of the posterior standard deviation. \citep{srinivas2012information} proved one version of this result, and it was improved upon by \citep{chowdhury2017kernelized}. Here we restate the version presented in \citep[Thm.~2]{chowdhury2017kernelized}

\begin{lemma}
\label{lemma:concentration1}
For a given $\delta>0$, under Assumption~\ref{assump:main}, the event $\mc{E}_1 = \cap_{t \geq 1} \{ |\mu_t(x) - f(x)| \leq \beta_t \sigma_t(x), \; \forall x \in \domain\}$ occurs with probability at least $1-\delta/3$, where the term $\beta_t$ is defined as $\beta_t = B + \sigma\sqrt{ 2(\gamma_t + 1 \log(3/\delta))}$. 
\end{lemma}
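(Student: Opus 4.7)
The plan is to invoke the self-normalized concentration framework for agnostic GP bandits, essentially restating the argument of Chowdhury and Gopalan with the confidence budget reduced to $\delta/3$ (since this is one of three concentration events that will be used elsewhere in the regret analysis). The core tool is the self-normalized vector-valued martingale bound of Abbasi-Yadkori et al., lifted to the RKHS $\rkhs$ via the canonical feature map $\phi : \domain \to \rkhs$ defined by $\phi(x) = K(\cdot,x)$. First I would note that by the reproducing property $f(x) = \langle f, \phi(x) \rangle_K$, and by (A1.1) we have $\|f\|_K \leq B$, so $f$ can be viewed as an unknown bounded-norm element of $\rkhs$.

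The key step is to decompose the pointwise error $\mu_t(x) - f(x)$ into a bias component and a noise-driven component. Working with the surrogate GP that uses noise variance $\sigma^2$, the posterior mean admits the representation $\mu_t(x) = k_t(x)^\top (K_t + \sigma^2 I)^{-1} y_t$ where $k_t(x), K_t$ are the kernel cross-vector and Gram matrix on the observed points. A direct algebraic manipulation gives
\begin{equation*}
\mu_t(x) - f(x) = \underbrace{\langle f, \phi(x) - K_t^{\sigma}(\cdot) \rangle_K}_{\text{bias}} + \underbrace{k_t(x)^\top (K_t + \sigma^2 I)^{-1} \eta_{1:t-1}}_{\text{noise}},
\end{equation*}
where the bias is controlled by Cauchy--Schwarz in $\rkhs$ to give $\|f\|_K \sigma_t(x) \le B\sigma_t(x)$, accounting for the additive $B$ in $\beta_t$.

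For the noise component, I would apply the self-normalized bound in its RKHS form (Theorem~1 of Chowdhury--Gopalan). This gives, simultaneously for all $t \ge 1$ with probability $1 - \delta'$, a bound of the form $\sigma_t(x)\cdot\sigma\sqrt{2\log\bigl(\det(I + \sigma^{-2}K_t)^{1/2}/\delta'\bigr)}$. Using the identity $\tfrac{1}{2}\log\det(I + \sigma^{-2}K_t) = I(y_{1:t}; f_{1:t})$ for the Gaussian surrogate and the definition of $\gamma_t$ in \eqref{eq:igain0}, the log-determinant is upper bounded by $\gamma_t$. Setting $\delta' = \delta/3$ then delivers precisely the stated $\beta_t = B + \sigma\sqrt{2(\gamma_t + 1 + \log(3/\delta))}$, with the extra $+1$ absorbing constants from the $\log(\det(\cdot)^{1/2}/\delta')$ manipulation.

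The main subtlety is establishing that the bound holds uniformly over both $x \in \domain$ and $t \ge 1$ (not just pointwise and for a fixed horizon). Uniformity in $x$ is automatic once the Cauchy--Schwarz step has been performed, because the remaining stochastic factor $\|\sum_s \phi(x_s)\eta_s\|_{V_t^{-1}}$ does not depend on $x$; uniformity in $t$ is the content of the self-normalized inequality, which is proved by the standard stopping-time/mixture-of-martingales argument and avoids any union bound in $t$. Since (A1.3) supplies the $\sigma^2$-sub-Gaussian noise required by that inequality, all hypotheses are satisfied and the lemma follows.
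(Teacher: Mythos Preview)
Your proposal is correct and matches the paper's approach: the paper does not give an independent proof of this lemma but simply restates \citep[Thm.~2]{chowdhury2017kernelized} with confidence level $\delta/3$, which is precisely the self-normalized RKHS concentration argument you have sketched. Your bias/noise decomposition, the Cauchy--Schwarz step yielding the additive $B$, and the $\log\det$ bound via $\gamma_t$ are exactly the ingredients of that theorem, so nothing further is required.
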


Next, we require a concentration result for the empirical mean $\hat{\mu}_t$ around the average function value in a cell $E$. 

\begin{lemma}
\label{lemma:concentration2}
Given a $\delta>0$, define the term $\delta_t = \frac{2\delta}{n^D \pi^2 t^2}$. Then the event $\mc{E}_2$ defined below occurs with probability at least $1-\delta/3$. 
\begin{align*}
    \mc{E}_2 = \cap_{t \geq 1} \cap_{E \in \mc{P}_t} \{ |\hat{\mu}_t(E) - \tilde{f}_E| \leq b_t(E)\}, \quad b_t(E) \coloneqq \sigma \sqrt{ \frac{ 2 \log(1/\delta_t)}{n_{t,E}}}, 
\end{align*}
where $\tilde{f}_E \coloneqq \int_E f(x)dx$ denotes the average $f$ value in the cell $E$. 
\end{lemma}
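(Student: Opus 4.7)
The plan is to combine a Hoeffding-style tail bound at each fixed $(t,E)$ with a two-level union bound over cells and time steps. Fix $t \geq 1$ and a cell $E \in \mc{P}_t$, and let $k = n_{t,E}$ and $y_1,\dots,y_k$ denote the observations that contributed to $\hat{\mu}_t(E)$. By construction each $y_i = f(x_i) + \eta_i$ with $x_i$ drawn uniformly from $E$ (as in the sampling step of Algorithm~\ref{algo:algo1}) and $\eta_i$ an independent $\sigma$-sub-Gaussian noise. The centered increments $y_i - \tilde{f}_E = (f(x_i) - \tilde{f}_E) + \eta_i$ therefore have conditional mean zero (interpreting $\tilde{f}_E$ as the mean of $f$ under a uniform draw from $E$, i.e.\ $\int_E f\,dx/\mathrm{vol}(E)$), and are sub-Gaussian: the sampling contribution $f(x_i) - \tilde{f}_E$ is bounded because $f\in\rkhs$ is bounded on $[0,1]^D$, and combines with the $\sigma$-sub-Gaussian noise into a single parameter that I continue to denote $\sigma$.

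Viewing $S_k = \sum_{i=1}^k (y_i - \tilde{f}_E)$ as a sub-Gaussian martingale with respect to the filtration generated by the algorithm's history, a standard Azuma--Hoeffding-type bound gives
\[
\mbb{P}\bigl(|\hat{\mu}_t(E) - \tilde{f}_E| > u \mid n_{t,E} = k\bigr) \;\leq\; 2\exp\bigl(-k u^2 / (2\sigma^2)\bigr),
\]
for any $u > 0$. Choosing $u = b_t(E) = \sigma\sqrt{2\log(1/\delta_t)/k}$ makes the right-hand side at most $\delta_t$ (absorbing the factor $2$ into $\delta_t$ if needed).

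To lift this pointwise statement to the event $\mc{E}_2$, I apply two union bounds. First, at each round $t$, every cell that could conceivably belong to $\mc{P}_t$ is produced by the \partition operation inside $[0,1]^D$ with side length at least $1/n$; hence the total number of admissible cells is at most $n^D$, and union-bounding the tail event above within a round costs a factor $n^D$. Second, summing over $t\geq 1$ using $\sum_{t\geq 1} 1/t^2 = \pi^2/6$ gives a total failure probability of
\[
\sum_{t\geq 1} n^D \cdot \delta_t \;=\; \sum_{t\geq 1} \frac{2\delta}{\pi^2 t^2} \;=\; \frac{\delta}{3},
\]
so the specific choice $\delta_t = 2\delta/(n^D \pi^2 t^2)$ is precisely tuned for this telescoping.

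The main obstacle is adaptivity: both the cell family $\mc{P}_t$ and the counts $n_{t,E}$ are random and depend on the algorithm's history. I overcome this by (i) union-bounding over the deterministic super-set of at most $n^D$ admissible cells rather than over the random $\mc{P}_t$, and (ii) invoking a martingale inequality that only needs conditional sub-Gaussianity of each new observation, which accommodates both the adaptive cell selection and the random times at which each cell accumulates its observations. A minor care is needed in reading $\tilde{f}_E$ as the uniform-mean of $f$ on $E$ (normalizing by $\mathrm{vol}(E)$), so that the centered increments truly have zero conditional mean; otherwise an explicit $\mathrm{vol}(E)$ factor must be tracked through the Hoeffding bound.
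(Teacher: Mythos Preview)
Your proposal is correct and follows essentially the same route as the paper: a sub-Gaussian concentration bound for fixed $(t,E)$ with failure probability $\delta_t$, followed by a union bound over the at most $n^D$ admissible cells and over $t\geq 1$ using $\sum_{t\geq 1}1/t^2=\pi^2/6$. You are in fact more careful than the paper on two points the paper glosses over---handling adaptivity by union-bounding over a deterministic superset of cells (the paper just writes $|\mc{P}_t|\leq n^D$ without addressing that $\mc{P}_t$ is random) and explicitly treating the bounded sampling contribution $f(x_i)-\tilde f_E$ alongside the noise---while the paper instead decomposes over the possible values $n_{t,E}=m$ rather than invoking a martingale inequality.
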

\begin{proof}
For a fixed $t$ and a fixed $E \in \mc{P}_t$, we have the following:
\begin{align*}
    P\lp |\hat{\mu}_t(E) - \tilde{f}_E| > b_t(E) \rp &= \sum_{m=1}^n P \lp n_{t,E}=m, \; |\hat{\mu}_t(E) - \tilde{f}_E| > b_t(E)\rp  \\
    & \stackrel{(a)}{\leq} \sum_{m=1}^n P\lp n_{t,E}=m\rp 2 \exp\lp - \frac{m b_t(E)^2}{2 \sigma^2} \rp = 
     \sum_{m=1}^n P\lp n_{t,E}=m \rp \delta_t = \delta_t. 
\end{align*}
The inequality \tbf{(a)} follows from an application of Chernoff's bound and the $\sigma-$subgaussian assumption on observation noise. 

Next, we observe that due to the condition in Line~\ref{algline:cond3},  Algorithm~\ref{algo:algo1} only expands cells of radius greater than $1/n$. Thus at any time $t$, we must have $|\mc{P}_t| \leq n^D$. 
This   observation allows us to apply the union bound twice as follows: 
\begin{align*}
    P\lp \mc{E}_2^c\rp & \leq \sum_{t \geq 1} \sum_{E \in \mc{P}_t}P\lp |\hat{\mu}_t(E) - \tilde{f}_E| > b_t(E) \rp \leq \sum_{t \geq 1} \sum_{E \in \mc{P}_t} \delta_t 
     \leq \sum_{t \geq 1} n^D \delta_t  = \sum_{t \geq 1} n^D \frac{2 \delta }{n^D \pi^2 t^2}
    = \frac{2\delta}{\pi^2} \sum_{t \geq 1} \frac{1}{t^2} = \delta/3. 
\end{align*}
This completes the proof.  
\end{proof}

Next, we present a lemma which formalizes the intuitive statement  that points drawn uniformly at random approximate a uniform grid. 
First we need some definitions. For any $\epsilon>0$, let $\domain_\epsilon$ represent an $\epsilon-$ cover of $\domain$ in terms of the Euclidean metric. We know that there exists a constant $0<C_4<\infty$ such that $|\domain_\epsilon| \leq C_4\epsilon^{-D}$. 
\begin{lemma}
\label{lemma:concentration3}
Introduce the event $\mc{E}_3 = \cap_{t\geq 1} \cap_{E \in \mc{P}_t} \mc{E}_3^{(t,E)}$, where we define $\mc{E}_3^{(t,E)}$ as the following event: if $n_{t,E} \geq \tilde{N}_k$ then $\DXE$ is a $2\sqrt{D}\epsilon_{t,E}$ cover of $\domain$, where 
\begin{align*}
    \epsilon_{t,E} = \lp \frac{  \log\lp C_4 n_{t,E}^D/\delta_t\rp}{n_{t,E}} \rp^{1/D}, \quad \delta_t = \frac{2\delta}{n^D \pi^2 t^2}.
\end{align*}
Then we have $P(\mc{E}_3) \geq 1-\delta/3$ for a given $\delta\in (0,1)$. 
\end{lemma}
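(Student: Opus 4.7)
The plan is to reduce the statement to a standard random covering argument for i.i.d.\ uniform samples and then pay the union bound over $t$ and over cells in $\mc{P}_t$. Fix $t \geq 1$ and $E \in \mc{P}_t$ (I read the statement as asserting a cover of $E$, since $\DXE \subset E \subset \domain$). The key probabilistic observation is that $\DXE$ can be viewed as an i.i.d.\ uniform sample from $E$: whenever a point was queried, it was drawn uniformly from the cell active at that moment, and since later refinements only restrict to sub-cells, conditioning a uniformly drawn point on the event that it lands in $E$ still produces a uniform distribution on $E$. Hence, conditional on $n_{t,E} = m$, I may treat $\DXE = \{X_1, \ldots, X_m\}$ as i.i.d.\ uniform on $E$.

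The main step is a textbook covering argument. Let $E_\epsilon$ be a minimal Euclidean $\epsilon$-cover of $E$, so $|E_\epsilon| \leq C_4 \epsilon^{-D}$ by the constant $C_4$ introduced before the statement. For each $z \in E_\epsilon$, the intersection $B(z,\epsilon) \cap E$ has Lebesgue measure at least a constant multiple $c$ of $\epsilon^D$, so
\[
P\lp \forall i,\; X_i \notin B(z,\epsilon) \rp \leq \lp 1 - c \epsilon^D \rp^m \leq e^{-c m \epsilon^D}.
\]
A union bound over $z \in E_\epsilon$ gives that the probability some center is uncovered is at most $C_4 \epsilon^{-D} e^{-c m \epsilon^D}$. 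Choosing $\epsilon = \epsilon_{t,E}$ as in the statement makes $m \epsilon_{t,E}^D = \log(C_4 m^D/\delta_t)$, and a direct algebraic check shows that the resulting bound is at most $\delta_t$ for $D \geq 1$ (absorbing $c$ into $C_4$). If every center of $E_\epsilon$ is within Euclidean distance $\epsilon$ of some $X_i$, the triangle inequality yields that every point of $E$ lies within $2\epsilon$ of a sample; an additional $\sqrt{D}$ factor arises when translating between the Euclidean cover and the $\ell_\infty$ geometry of the cells used by the algorithm, producing the radius $2\sqrt{D}\epsilon_{t,E}$ in the statement.

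Finally, to extend from a fixed $(t,E)$ to the full intersection defining $\mc{E}_3$, I union-bound twice exactly as in the proof of Lemma~\ref{lemma:concentration2}. The condition in Line~\ref{algline:cond3} of Algorithm~\ref{algo:algo1} ensures that only cells of radius at least $1/n$ are ever expanded, which gives $|\mc{P}_t| \leq n^D$ at every time $t$. With $\delta_t = 2\delta/(n^D \pi^2 t^2)$, summing over $E \in \mc{P}_t$ contributes at most a factor of $n^D$, and summing over $t \geq 1$ yields
\[
P(\mc{E}_3^c) \leq \sum_{t \geq 1} \sum_{E \in \mc{P}_t} \delta_t \leq \sum_{t \geq 1} n^D \cdot \frac{2\delta}{n^D \pi^2 t^2} = \frac{2\delta}{\pi^2} \cdot \frac{\pi^2}{6} = \frac{\delta}{3},
\]
as required.

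The main obstacle is the first step: justifying the i.i.d.\ uniform reduction in the presence of data-dependent cell refinement. This requires the fact that the uniform law is preserved under conditioning on sub-cell membership, and extra care because $n_{t,E}$ is itself a random, data-dependent quantity — a stratification over the possible values $m$ of $n_{t,E}$ (exactly as in the proof of Lemma~\ref{lemma:concentration2}) is needed so that a single $\delta_t$ budget suffices per $(t,E)$ pair, and one should also check that the introduction of the threshold $\tilde{N}_k$ guards against the degenerate small-$m$ regime in which $\epsilon_{t,E}$ would be too large to be informative.
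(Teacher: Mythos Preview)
Your proposal is correct and follows essentially the same route as the paper: a random-covering argument for i.i.d.\ uniform samples over an $\epsilon$-net of the cell, followed by the same double union bound over $t\geq 1$ and $E\in\mc{P}_t$ using $|\mc{P}_t|\leq n^D$ and $\sum_t 1/t^2=\pi^2/6$. The paper's proof is terser---it simply asserts that the points in $\DXE$ are uniform draws and invokes ``as earlier'' for the stratification over $n_{t,E}$---whereas you explicitly flag the conditioning-on-sub-cells justification for uniformity, the randomness of $n_{t,E}$, and the role of the threshold $\tilde N_k$; these are exactly the points the paper leaves implicit, so your version is if anything more careful.
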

\begin{proof}
As earlier, it suffices to show that $P(\mc{E}_3^{(t,E)}) \geq 1-\delta_t$. Now, if for some $t$ and $E$, the term $n_{t,E} <\tilde{N}_k$, then the event $\mc{E}_3^{(t,E)}$ holds by definition. So we focus on the case where $n_{t,E} > \tilde{N}_k$. 

Suppose $\epsilon_{t,E}>0$. For some $E \in \mc{P}_t$, let $z \in E_{\epsilon_{t,E}}$. Suppose the $n_{t,E}$ uniform draws from the uniform distribution over $E$ are represented by $X_1, X_2, \ldots, X_{n_{t,E}}$. Then we have the following:
\begin{align*}
    P\lp \lp \mc{E}_3^{(t,E)} \rp^c \rp &= P\lp \cup_{z \in E_{\epsilon_{t,E}}} \cap_{i=1}^{n_{t,E}} \{\|X_i - z\|_{\infty} > \epsilon_{t,E}\} \rp \leq \lp C_4 \epsilon_{t,E}^{D} \lp 1 - \epsilon_{t,E} \rp^{n_{t,E}}  \rp \\
    & \leq \exp \lp - \epsilon_{t,E}^D n_{t,E}  + \log(C_4)  + D \log(1/\epsilon_{t,E}) \rp = \delta_t. 
\end{align*}
This implies that with probability at least $1-\delta$, for every $t, E$ such that $n_{t,E} > \tilde{N}_k$, the set $\DXE$ is a $\sqrt{D}2\epsilon_{t,E}$ cover for the cell $E$.
\end{proof}

\subsection{Proof of the regret bound}
\label{proof:igain_bound}
Throughout this section, we will assume that the event $\mc{E}$ defined as  $\mc{E} \coloneqq \mc{E}_1 \cap \mc{E}_2 \cap \mc{E}_3$ holds. Note that by the concentration results of Appendix~\ref{appendix:concentration1}, we know that $\mbb{P}(\mc{E}) \geq 1 -  \delta$.

\subsubsection{Derivation of \texorpdfstring{$\igain$}{gn}  dependent bounds of Theorem~\ref{theorem:regret1}}
\label{proof:bound_cumulative}

To obtain the bound on the cumulative regret, we first derive bounds on the instantaneous regret, i.e., $\ireg(x_t) \coloneqq f(x^*) - f(x_t)$, of the points evaluated by the algorithm. 

\begin{lemma}
\label{lemma:creg1}
Suppose the algorithm evaluates a point $x_t$ at time $t$. Then, under event $\mc{E}$,  we can bound the instantaneous regret $\ireg(x_t) \coloneqq f(x^*) - f(x_t)$ as follows:
\begin{align}
   \ireg(x_t) & \leq 2 \beta_t  \sigma_t(x_t) + L(\sqrt{D}r_{E_t})^{\alpha_1} \label{eq:ireg_bound1} \quad \text{and}\\
   \ireg(x_t) & \leq 4 b_t(E_t). \label{eq:ireg_bound2}
\end{align}
\end{lemma}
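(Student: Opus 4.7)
The plan is to derive both bounds from a common structural observation: under the high-probability event $\mc{E}$, the quantity $U_{t,E_t}$ is a valid upper confidence bound on $f(x^*)$. First I would verify that for every cell $E\in\mc{P}_t$, the two candidate UCBs satisfy $\utone\geq\max_{x\in E}f(x)$ and $\uttwo\geq\max_{x\in E}f(x)$. For $\utone$, I would combine $\mc{E}_1$, which gives $\mu_t(x_{t,E})+\beta_n\sigma_t(x_{t,E})\geq f(x_{t,E})$, with the H\"older variation bound $|f(z)-f(x_{t,E})|\leq L(\sqrt D\, r_E)^{\alpha_1}$ valid for every $z\in E$ (which uses Assumption~(A1.2) together with $\alpha_1=\max\{\alpha,\min\{1,k\}\}$, so $f\in \mc{C}^{0,\alpha_1}$ with norm at most $L$). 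For $\uttwo$, I would similarly combine $\mc{E}_2$, giving $\hat\mu_t(E)+b_t(E)\geq \tilde f_E$, with $|\tilde f_E - f(z)|\leq L(\sqrt D\, r_E)^{\alpha_1}$ obtained by averaging the H\"older bound over $E$. Since $E_t\in\arg\max_{E\in\mc{P}_t}U_{t,E}$ and some cell $E^*\in\mc{P}_t$ contains $x^*$, this yields $U_{t,E_t}\geq U_{t,E^*}\geq f(x^*)$.

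For bound \eqref{eq:ireg_bound1}, I would use $U_{t,E_t}\leq \utone$ to write
\begin{align*}
f(x^*)-f(x_t)\;\leq\; \mu_t(x_t)+\beta_n\sigma_t(x_t)+L(\sqrt D\, r_{E_t})^{\alpha_1}-f(x_t)\;\leq\; 2\beta_t\sigma_t(x_t)+L(\sqrt D\, r_{E_t})^{\alpha_1},
\end{align*}
where the last step uses the lower deviation $\mu_t(x_t)-f(x_t)\leq \beta_t\sigma_t(x_t)$ from $\mc{E}_1$ together with $\beta_n\leq\beta_t$ (up to the normalization used in the algorithm).

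For bound \eqref{eq:ireg_bound2}, I would instead use $U_{t,E_t}\leq\uttwo$. Combining $\mc{E}_2$ with the H\"older variation over $E_t$ gives a lower bound $f(x_t)\geq \hat\mu_t(E_t)-b_t(E_t)-L(\sqrt D\, r_{E_t})^{\alpha_1}$, hence
\begin{align*}
f(x^*)-f(x_t)\;\leq\; \uttwo - \hat\mu_t(E_t)+b_t(E_t)+L(\sqrt D\, r_{E_t})^{\alpha_1}\;=\;2b_t(E_t)+2L(\sqrt D\, r_{E_t})^{\alpha_1}.
\end{align*}
To pass from this to the stated $4b_t(E_t)$ bound I have to show $L(\sqrt D\, r_{E_t})^{\alpha_1}\leq b_t(E_t)$. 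This is forced by the fact that the algorithm did not take the \texttt{flag}=1 branch on Line~\ref{algline:cond2}: otherwise the cell would have been expanded rather than evaluated.

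The main obstacle is this last inequality in the regime $r_{E_t}<\rho_0$, where the Line~\ref{algline:cond2} guard becomes inactive and instead Line~\ref{algline:cond3} governs expansion using the stronger exponent $k+\alpha\geq\alpha_1$. In that regime the bound $L(\sqrt D\, r_{E_t})^{\alpha_1}\leq b_t(E_t)$ does not follow from failure of Line~\ref{algline:cond3} alone, so I would need to argue it holds via the choice of $\rho_0\geq (\gamma_n/\sqrt{L n D^{\alpha_1}})^{1/\alpha_1}$ supplied as input, which calibrates the minimum cell side so that the H\"older term is dominated by the empirical confidence width $b_t(E_t)=\Theta(\sqrt{\log(n/\delta)/n_{t,E_t}})$ for every cell actually chosen for evaluation. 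Once this calibration is in place (and because $x_t$ is evaluated only when neither flag branch fires), the resulting strictness $b_t(E_t)\geq L(\sqrt D\, r_{E_t})^{\alpha_1}$ yields $\ireg(x_t)\leq 4b_t(E_t)$ and completes the lemma.
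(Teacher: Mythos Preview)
Your overall structure---$f(x^*)\leq U_{t,E^*_t}\leq U_{t,E_t}\leq u_{t,E_t}^{(i)}$, then subtracting $f(x_t)$ using the appropriate concentration event---is exactly the paper's argument. Two points deserve comment.

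First, a small slip: you write $\beta_n\leq\beta_t$, but $\beta_t$ is nondecreasing in $t$ so the inequality goes the other way, and with $u_{t,E}^{(1)}$ defined via $\beta_n$ the step $\mu_t(x_t)+\beta_n\sigma_t(x_t)-f(x_t)\leq 2\beta_t\sigma_t(x_t)$ actually needs $\beta_n\leq\beta_t$, which is false. The paper's own proof simply writes $\beta_t$ in place of $\beta_n$ in the expression for $u_{t,E_t}^{(1)}$ and so sidesteps this; the clean fix is to state and use the bound with $2\beta_n\sigma_t(x_t)$ throughout (which is in any case how it is consumed downstream).

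Second, and more substantively: you are right that the step $L(\sqrt{D}\,r_{E_t})^{\alpha_1}\leq b_t(E_t)$ follows from failure of the Line~\ref{algline:cond2} branch \emph{only when} $r_{E_t}\geq\rho_0$. The paper's proof invokes exactly this implication without mentioning the range restriction, so you have actually spotted something the paper glosses over. Your proposed remedy---arguing the same inequality via the calibration of $\rho_0$---does not work as written: for $r_{E_t}<\rho_0$ the evaluation is triggered by failure of the Line~\ref{algline:cond3} test, which only gives $b_t(E_t)> L(\sqrt{D}\,r_{E_t})^{k+\alpha}$, and since $k+\alpha\geq\alpha_1$ and $\sqrt{D}\,r_{E_t}<1$ this is a \emph{weaker} lower bound on $b_t(E_t)$ than what you need; the definition of $\rho_0$ controls $L(\sqrt{D}\rho_0)^{\alpha_1}$ relative to $\gamma_n/\sqrt{n}$, not relative to $b_t(E_t)$. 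In the paper, bound~\eqref{eq:ireg_bound2} is in fact not used in the subsequent regret derivations (only~\eqref{eq:ireg_bound1} is), so this gap is harmless for the main theorem; but as a standalone proof of the lemma the $r_{E_t}<\rho_0$ case for~\eqref{eq:ireg_bound2} remains unresolved both in your attempt and in the paper's.
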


\begin{proof}
Suppose at a time $t$ at which the Algorithm performed function evaluation, the algorithm selects a cell $E_t$ and let $x_t$ denote the representative point of that cell. Furthermore, let $E^*_t$ denote the cell in the partition $\mc{P}_t$ which contains the optimizer $x^*$. Then we have the following:
\begin{align*}
    f(x^*) \stackrel{(a)}{\leq} U_{t,E^*_t} \stackrel{(b)}{\leq} U_{t,E_t} \stackrel{(c)}{\leq} u_{t,E_t}^{(1)} = \mu_t(x_t) + \beta_t \sigma_t(x_t) + L(r_{E_t}\sqrt{D})^{\alpha_1}  \stackrel{(d)}{\leq}
    f(x_t) + 2\beta_t \sigma_t(x_t) + L(r_{E_t}\sqrt{D})^{\alpha_1}.
\end{align*}
In the above display, \tbf{(a)} follows from the fact that $U_{t,E_t^*}$ is a high probability upper bound on the maximum function value in the cell $E_t^*$,  \tbf{(b)} follows from the  candidate point selection rule,  \tbf{(c)} is a result of the $\min$ in the definition of $U_{t,E}$ and \tbf{(d)} follows from the definition of the event $\mc{E}_2$ in Lemma~\ref{lemma:concentration1} in which we have $|f(x_t)-\mu_t(x_t)|\leq \beta_t\sigma_t(x_t)$. 
Using this we get the following bound on the instantaneous regret: 
\begin{align}
    \label{eq:ireg1}
\ireg(x_t) = f(x^*) - f(x_t) \leq 2 \beta_t \sigma_t(x_t) + L(r_{E_t}\sqrt{D})^{\alpha_1}
\end{align}

Similarly, we can also get the following sequence of inequalities:
\begin{align*}
   f(x^*) \leq U_{t,E_t^*} \leq u_{t,E_t} \leq u_{t,E_t}^{(2)} \leq \hat{\mu}_t(E_t) + b_t(E_t) + L(r_{E_t}\sqrt{D})^{\alpha_1}
\end{align*}
Now, under the event $\mc{E}_2$, we know that $|\hat{\mu}_t(E_t) - \tilde{f}_E| \leq b_t(E_t)$, and also that $|\tilde{f}_{E_t} - f(x_t)| \leq \sup_{z_1, z_1 \in E_t}\,f(z_1) - f(z_2) \leq L (\sqrt{D}r_E)^{\alpha_1}$.
 Furthermore, if the point $x_t$ was evaluated by the algorithm at time $t$, it means that the condition on Line~\ref{algline:cond2} was not satisfied, and hence we must have $L(\sqrt{D}r_E)^{\alpha_1} \leq b_t(E_t)$.  These three facts, together with the previous display imply:
\begin{align}
   f(x^*) & \leq \tilde{f}_{E_t} + 2b_t(E_t) + L(\sqrt{D}r_E)^{\alpha_1} \leq f(x_t) + 2b_t(E_t) + 2 L(\sqrt{D}r_E)^{\alpha_1} \nonumber \\ 
   \Rightarrow \ireg(x_t) & \leq 2b_t(E_t) + 2L(\sqrt{D}r_E)^{\alpha_1} \leq 4b_t(E_t). \label{eq:ireg2}
\end{align}
This completes the proof of Lemma~\ref{lemma:creg1}. 
\end{proof}

Before proceeding, we recall   that $\Tau$ denotes the set of times $t$ at which the algorithm performed function evaluation. Furthermore, define $\Tau_1 \coloneqq \{t \in \Tau\;:\; r_{E_t} \geq \rho_0\}$ and $\Tau_2 \coloneqq \Tau \setminus \Tau_1$. Clearly by definition, we have $|\Tau|=n$ and let $|\Tau_1| = n_1$ for some $n_1 \leq n$. 

Now, under the $1-\delta$ probability event $\mc{E} = \cap_{i=1}^3\mc{E}_i$,  we can obtain the $\igain$ dependent regret bound as follows:
\begin{align*}
    \mc{R}_n &= \sum_{t\in \Tau} \ireg(x_t) = \sum_{t \in \Tau_1} \ireg(x_t) + \sum_{t\in \Tau_2} \ireg(x_t)  
     \stackrel{(a)}{\leq} \sum_{t\in \Tau_1} 3\beta_t \sigma_t(x_t) + \sum_{t \in \Tau_2}\ireg(x_t)  \\
    & \stackrel{(b)}{\leq} \sum_{t \in \Tau_1} 3\beta_t \sigma_t(x_t) + \sum_{t \in \Tau_2} 2\beta_t\sigma_t(x_t) + L(\sqrt{D}r_{E_t})^{\alpha_1} 
     \leq \sum_{t \in \Tau} 3 \beta_t \sigma_t(x_t) + \sum_{t \in \Tau_2} L(\sqrt{D}r_{E_t})^{\alpha_1} \\
    &\stackrel{(c)}{\leq} 3\beta_n \sqrt{n \igain}  + \sum_{t \in \Tau_2} L(\sqrt{D}r_{E_t})^{\alpha_1}  
    \stackrel{(d)}{\leq} 3\beta_n \sqrt{n \igain} + (n-n_1) \igain/\sqrt{n} = \tilde{\mc{O}}\lp \igain \sqrt{n} \rp.
\end{align*}
In the above display, \\
\tbf{(a)} uses the fact that for $t \in \Tau_1$, since $r_{E_t} \geq \rho_0$, as the condition on Line~\ref{algline:cond1} of Algorithm~\ref{algo:algo1} was not satisfied, we must have $\beta_t \sigma_t(x_t)\geq  L(\sqrt{D}r_{E_t})^{\alpha_1}$, \\
\tbf{(b)} uses Eq.~\ref{eq:ireg1} for the sum over $\Tau_2$, \\
\tbf{(c)} follows the approach of \citep{srinivas2012information} to bound $\sum_{t\in \Tau_2} \beta_n \sigma_t(x_t) \leq \beta_n \sum_{t} \sigma_t(x_t) \leq \beta_n \sqrt{n \igain}$, and 
\tbf{(d)} uses the fact that by definition of the input parameter $\rho_0$, for all $t \in \Tau_2$, we must have $ L(\sqrt{D}r_{E_t})^{\alpha_1} \leq  \igain/\sqrt{n}$
Since $n- n_1 \leq n$ the final statement follows. 


\vspace{1em}
Next, we obtain the bound on the bound on simple regret in terms of the information gain. 

\begin{lemma}
\label{lemma:simple_regret1}
Under the event $\mc{E}$, for the point $\recc$ returned by \algoref{algo:algo1}, we have $\sreg(z_n) = f(x*) - f(z_n) = \tOh{\igain/\sqrt{n}}$. 
\end{lemma}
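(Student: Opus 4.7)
The plan is to combine the pointwise regret bound from Lemma~\ref{lemma:creg1} (Eq.~\eqref{eq:ireg_bound1}) with a Cauchy--Schwarz argument bounding the smallest UCB half-width observed along the evaluation trajectory, and then to case-split on the two branches of the \recommend~rule.

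First, I would establish the key inequality $\xi := \min_{t\in\Tau}\beta_t\sigma_t(x_t) \leq \beta_n\sqrt{\igain/n}$. Since the minimum is at most the average, $\xi \leq \tfrac{1}{n}\sum_{t\in\Tau}\beta_t\sigma_t(x_t) \leq \tfrac{\beta_n}{n}\sum_{t\in\Tau}\sigma_t(x_t) \leq \beta_n\sqrt{\igain/n}$, where the last step follows from Cauchy--Schwarz together with the standard information-gain inequality $\sum_t\sigma_t(x_t)^2 = \mathcal{O}(\igain)$ already invoked in Appendix~\ref{proof:bound_cumulative}. Since $\beta_n = \tOh{\sqrt{\igain}}$, this gives $\xi = \tOh{\igain/\sqrt{n}}$.

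Next, I would case-split on the \recommend~rule. When $z_n=x_\tau$, applying Eq.~\eqref{eq:ireg_bound1} at $t=\tau$ yields $f(x^*)-f(x_\tau) \leq 2\xi + L(\sqrt{D}r_{E_\tau})^{\alpha_1}$, and it remains to control the smoothness term. If $r_{E_\tau}<\rho_0$, the defining choice of $\rho_0$ immediately gives $L(\sqrt{D}r_{E_\tau})^{\alpha_1} \leq L(\sqrt{D}\rho_0)^{\alpha_1} = \mathcal{O}(\igain/\sqrt{n})$. If instead $r_{E_\tau}\geq \rho_0$, the fact that round $\tau$ was an evaluation (rather than an expansion) means the condition on Line~\ref{algline:cond1} of Algorithm~\ref{algo:algo1} failed, so $L(\sqrt{D}r_{E_\tau})^{\alpha_1} \leq \beta_n\sigma_\tau(x_\tau)$; combined with $\sigma_\tau(x_\tau) = \xi/\beta_\tau \leq \xi/B$, this is of the required order modulo factors of $\beta_n$ absorbed into $\tOh{\cdot}$. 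When instead $z_n=x_{E_n}$, the branching condition gives $L(\sqrt{D}r_{E_n})^{\alpha_1}\leq\xi=\tOh{\igain/\sqrt{n}}$, and I would transfer a UCB bound from an evaluation point inside (or adjacent to) $E_n$ to the cell center $x_{E_n}$ via \holder~continuity across $E_n$.

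The main obstacle is the $z_n=x_{E_n}$ branch: since $x_{E_n}$ need not itself have been queried, no posterior-based concentration interval is available at $x_{E_n}$ directly. The argument therefore requires exhibiting some $t\in\Tau$ with $x_t$ lying in $E_n$ or a neighbouring cell at scale $r_{E_n}$, and with $\beta_t\sigma_t(x_t)$ small; this should follow from the fact that \expand~only subdivides $E_n$ after it has been repeatedly selected as the UCB maximizer and hence sampled sufficiently often. Once such an $x_t$ is identified, applying Eq.~\eqref{eq:ireg_bound1} at $t$ and paying the \holder~transfer cost $L(\sqrt{D}r_{E_n})^{\alpha_1}\leq\xi$ to move the bound from $f(x_t)$ to $f(x_{E_n})$ completes the argument. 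All remaining pieces are standard combinations of the concentration event $\mc{E}_1$, the pointwise bound of Lemma~\ref{lemma:creg1}, and the Cauchy--Schwarz argument on $\sum_t\sigma_t(x_t)$ used in Step~1.
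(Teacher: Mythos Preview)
Your overall plan coincides with the paper's: split on the two branches of \recommend, establish $\xi=\min_{t\in\Tau}\beta_t\sigma_t(x_t)=\tOh{\igain/\sqrt{n}}$ via min-$\leq$-average together with the Cauchy--Schwarz bound $\sum_t\sigma_t(x_t)\leq\sqrt{Cn\,\igain}$ already used in the cumulative-regret analysis, and reduce both branches to $\tOh{\xi}$. The paper's write-up is in fact terser and looser than yours---it simply asserts $f(x^*)-f(x_\tau)\leq\beta_\tau\sigma_\tau(x_\tau)$ in the first branch and $f(x^*)-f(x_{E_n})\leq L(\sqrt{D}r_{E_n})^{\alpha_1}$ in the second, without separately justifying either the \holder remainder or the near-optimality of the cell $E_n$.

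That said, your sub-case $r_{E_\tau}\geq\rho_0$ contains a genuine error: $\beta_n=B+\sigma\sqrt{2(\igain+\ldots)}=\Theta(\sqrt{\igain})$ is \emph{not} polylogarithmic (e.g.\ for $\kmat$), so it cannot be ``absorbed into $\tOh{\cdot}$.'' Your chain $L(\sqrt{D}r_{E_\tau})^{\alpha_1}\leq\beta_n\sigma_\tau(x_\tau)=(\beta_n/\beta_\tau)\xi\leq(\beta_n/B)\xi$ therefore only yields $\tOh{\igain^{3/2}/\sqrt{n}}$. The repair is to avoid the detour through $\xi/\beta_\tau$: the failed expansion condition already gives $L(\sqrt{D}r_{E_t})^{\alpha_1}\leq\beta_n\sigma_t(x_t)$ for \emph{every} $t\in\Tau_1$, whence $\ireg(x_t)\leq 3\beta_n\sigma_t(x_t)$ there (treating $\beta_t$ and $\beta_n$ as interchangeable, as the paper does throughout); one then bounds $\min_t\sigma_t(x_t)\leq\sqrt{C\igain/n}$ directly rather than passing through $\xi$. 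For the $z_n=x_{E_n}$ branch, your concern is legitimate but the remedy is simpler than hunting for a nearby queried point: the cell $E_n$ was created when its parent $F$ was selected as the UCB argmax and then expanded, so the chain $f(x^*)\leq U_{s,E_s^*}\leq U_{s,F}$ applies at that expansion time $s$, and the triggered expansion condition then shows every point of $F\supset E_n$ is within $\tOh{r_F^{\alpha_1}}=\tOh{r_{E_n}^{\alpha_1}}$ of optimal---exactly the mechanism formalized later in Lemma~\ref{lemma:smooth3} and Remark~\ref{remark:ireg_bound}.
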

\begin{proof}
The proof follows from the bound on cumulative regret derived in the previous section, and the \recommend~function used for returning the point $z_n$. More specifically, we consider the two possible cases:
\begin{itemize}
    \item Suppose that the \recommend~function returns a point $x_\tau$ where $\tau = \argmin_{t \in \Tau}\beta_t \sigma_t(x_t)$. In this case we have 
    \begin{align*}
        f(x^*) - f(\recc) & \leq \beta_{\tau}\sigma_\tau(x_\tau) \stackrel{(a)}{\leq} \frac{1}{n} \sum_{t \in \Tau}\beta_t\sigma_t(x_t) \stackrel{(b)}{=} \tOh{ \frac{\igain}{\sqrt{n}}}. 
    \end{align*}
    In the above display, \tbf{(a)} follows from the definition of $\tau$ and that $\min$ is smaller than average, while \tbf{(b)} uses the corresponding bound on the cumulative regret.

    \item Suppose the \recommend~function returns a point $\recc = x_{E_n}$ where $E_n = \argmin_{E \in \mc{P}_{t_n}} r_E$ and $t_n$ is the time at which \algoref{algo:algo1} stops, i.e., performs the $n^{th}$ function evaluation. Again, in this case, we have the following:
    \begin{align*}
        f(x^*) - f(z_n) \leq L(\sqrt{D}r_{E_n})^{\alpha_1} \leq \min_{t \in \Tau} \beta_t \sigma_t(x_t). 
    \end{align*}
    The rest of the proof proceeds according to the steps used in the previous case, and we again get that $\sreg(\recc) = \tOh{\igain/\sqrt{n}}$. 
\end{itemize}
\end{proof}
This completes the proof of the first part of Theorem~\ref{theorem:regret1}.

\subsubsection{Derivation of smoothness dependent bounds of Theorem~\ref{theorem:regret1}}
\label{proof:bound_simple}
In this section, we derive the bounds on the simple and cumulative  regret in terms of the smoothness parameters of the kernels. 
We begin with a loose upper bound on $t_n$, the number of rounds after which \algoref{algo:algo1} stops. 
\begin{lemma}
\label{lemma:tn_bound}
We have $t_n \leq T_n \coloneqq ((n/2)^D + n)$. 
\end{lemma}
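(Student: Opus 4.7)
The plan is to decompose $t_n$ by the type of round executed at each iteration of the outer \texttt{while} loop of Algorithm~\ref{algo:algo1}. Every iteration either triggers one of the \expand~calls on Lines~\ref{algline:flag1},~\ref{algline:flag2},~\ref{algline:flag3} (an \emph{expansion round}) or enters the final \texttt{Else} branch and increments $n_e$ (an \emph{evaluation round}). Since the \texttt{while} condition is $n_e<n$, the algorithm performs exactly $n$ evaluation rounds, and it remains to upper bound the number of expansion rounds.

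For the expansion rounds, the plan is to exploit the tree structure of the nested partitions. Starting from $|\mc{P}_1|=1$, every expansion round replaces a single cell $E_t\in\mc{P}_t$ with a finer subpartition returned by \partition: the flag$=1$ branch produces exactly $2^D$ subcells of side $r_{E_t}/2$, while the flag$=2$ branch produces $\lceil r_{E_t}/\tilde{r}\rceil^D\ge 2^D$ subcells since Line~\ref{algline:length} forces $\tilde{r}\le r_{E_t}/2$. Hence each expansion round strictly grows $|\mc{P}_t|$ by at least $2^D-1\ge 1$, so the total number of expansion rounds is bounded by $|\mc{P}_{t_n}|-1$.

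The last step is to bound $|\mc{P}_{t_n}|\le (n/2)^D$ via volume packing in $[0,1]^D$. The three conditions guarding \expand~on Lines~\ref{algline:cond1}--\ref{algline:cond3} all require $r_{E_t}\ge 1/n$, so cells smaller than this threshold never get split further, which in turn provides a uniform lower bound on the side of every leaf cell in $\mc{P}_{t_n}$. Since a partition of $[0,1]^D$ into cells of side at least $2/n$ has cardinality at most $(n/2)^D$, combining the two counts yields $t_n\le n+(n/2)^D=T_n$. The main obstacle is making the lower bound on leaf cell side rigorous for the flag$=2$ branch, whose subcell side $\tilde{r}$ is chosen adaptively on Line~\ref{algline:length}; to close this gap one has to combine the triggering inequality $b_t(E_t)\le L(\sqrt{D}r_{E_t})^{k+\alpha}$ of Line~\ref{algline:cond3} with the explicit form of $\ttt{err}$ from Definition~\ref{def:maxerr} to certify that $\tilde{r}$ remains $\Omega(1/n)$ whenever the parent satisfies $r_{E_t}\ge 1/n$.
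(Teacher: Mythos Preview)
Your overall strategy---splitting $t_n$ into $n$ evaluation rounds plus the expansion rounds, and bounding the latter via the growth of $|\mc{P}_t|$---is exactly the paper's approach. However, the way you allocate the factor $2^D$ between steps~2 and~3 is wrong and leads to an unjustified claim.

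In step~2 you note that each expansion increases $|\mc{P}_t|$ by at least $2^D-1$, but then you use only the weak consequence ``$\ge 1$'' and conclude that the number of expansions is at most $|\mc{P}_{t_n}|-1$. To reach $(n/2)^D$ you then need $|\mc{P}_{t_n}|\le (n/2)^D$, which you try to obtain in step~3 via the assertion that every leaf has side at least $2/n$. That assertion is false: the threshold $r_{E_t}\ge 1/n$ in Lines~\ref{algline:cond1}--\ref{algline:cond3} constrains the side of the \emph{parent} cell being expanded, and its children have side at \emph{most} $r_{E_t}/2$. Even in the flag$=1$ branch a parent of side exactly $1/n$ would produce leaves of side $1/(2n)$, not $2/n$; so the packing bound $|\mc{P}_{t_n}|\le (n/2)^D$ does not follow from what you have.

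The paper instead keeps the factor $2^D$ on the expansion side: it argues (somewhat informally) that since each expansion contributes at least $2^D$ new disjoint cells to the partition and the partition size is governed by the $1/n$ threshold, the number of expansion rounds is at most $n^D/2^D=(n/2)^D$. In other words, the correct move is to divide the leaf count by $2^D-1$ in step~2, not to inflate the minimum leaf side to $2/n$ in step~3. As for the flag$=2$ obstacle you identify---the adaptive choice of $\tilde r$ on Line~\ref{algline:length}---the paper's proof does not address it explicitly either; since $T_n$ only enters the downstream analysis inside logarithms (through $\delta_{T_n}$ in Lemmas~\ref{lemma:concentration2}--\ref{lemma:concentration3}), the precise constant is not essential there.
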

\begin{proof}
The algorithm expands cells only with radius greater than or equal to $1/n$ (Line~\ref{algline:cond3} of \algoref{algo:algo1}). Furthermore, every time a cell is expanded, at least $2^D$ new cells are added to the partition $\mc{P}_t$. Thus the number of rounds in which cell expansion took place, can be upper bounded by $\frac{(1/n)^{-D}}{(1/2)^{-D}} = (n/2)^D$. Furthermore, since the algorithm stops after making $n$ function evaluations, we get the required bound $t_n \leq (n/2)^D + n$. 
\end{proof}

Next, we present a bound on the number of times the algorithm must evaluate a cell before expanding it. 
\begin{lemma}
\label{lemma:smooth2}
Let $m_E$ denote the total number of times a cell $E$ is evaluated by the algorithm before being expanded. Then we have $m_E \leq 2\log(1/\delta_{T_n})/\big( LD r_E^{2\alpha_1} \big)$ if $r_E \geq \rho_0$ and $m_E \leq 2\log(1/\delta_{T_n})/\big( LD r_E^{2(k+\alpha)} \big)$ if $r_E < \rho_0$. Recall that $T_n$ was introduced in Lemma~\ref{lemma:tn_bound} and $\delta_t$ was defined in Lemma~\ref{lemma:concentration2}. 
\end{lemma}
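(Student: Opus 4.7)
The plan is to argue that each time a cell $E$ is selected and then evaluated (rather than expanded), the width $b_t(E)$ of its mean-confidence interval must strictly exceed the smoothness threshold that appears in the relevant expansion condition of \algoref{algo:algo1}. Specifically, if $r_E \geq \rho_0$, the clause $b_t(E_t) \leq L(\sqrt{D}r_E)^{\alpha_1}$ on Line~\ref{algline:cond2} must fail at every time $t$ for which $E_t = E$ and a function evaluation occurs, since otherwise the algorithm would have branched into \expand~with \texttt{flag}=1 instead. If $r_E < \rho_0$, the analogous clause on Line~\ref{algline:cond3} with the larger exponent $k+\alpha$ governs expansion. The $\beta_n\sigma_t(x_t)$ clause of Line~\ref{algline:cond1} is also active when $r_E \geq \rho_0$, but using it can only tighten the conclusion, so I would ignore it and work with the $b_t$-based clause to extract the worst-case bound on $m_E$.

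Next, I would invoke Lemma~\ref{lemma:concentration2}: under the event $\mc{E}_2$ (which is assumed throughout the analysis), $b_t(E) = \sigma\sqrt{2\log(1/\delta_t)/n_{t,E}}$. Substituting this into the failure of the expansion condition and squaring yields, for $r_E \geq \rho_0$, an inequality of the form $n_{t,E} \leq c\,\log(1/\delta_t)/\bigl(L\, D\, r_E^{2\alpha_1}\bigr)$ for an absolute constant $c$ depending on $\sigma$, and analogously with $r_E^{2(k+\alpha)}$ in place of $r_E^{2\alpha_1}$ when $r_E < \rho_0$. Since $b_t(E)$ is monotonically decreasing in $n_{t,E}$ while the RHS threshold is fixed (depending only on $r_E$, not on $t$), any further evaluation would already trigger expansion; hence $m_E$ inherits the same bound.

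The final step is to make the logarithmic factor uniform in $t$. Since $\delta_t = 2\delta/(n^D\pi^2 t^2)$ is decreasing in $t$, and Lemma~\ref{lemma:tn_bound} guarantees $t \leq T_n = (n/2)^D + n$ for every round in which the algorithm acts, I would replace $\log(1/\delta_t)$ by the larger quantity $\log(1/\delta_{T_n})$. This substitution, together with the previous display and an absorption of constants into the stated form, produces exactly the two claimed inequalities.

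The main technical nuance, rather than a real obstacle, is to track which of Lines~\ref{algline:cond1}, \ref{algline:cond2}, \ref{algline:cond3} is the binding condition in each regime, and to verify that $m_E$ (as defined at the instant just before $E$ is expanded, or at termination if $E$ is never expanded) is indeed the correct quantity on which the non-expansion inequality is enforced. The strict-versus-non-strict nature of the conditionals contributes at most an off-by-one that is absorbed into the constants and does not affect the stated rate in $r_E$.
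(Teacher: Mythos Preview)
Your proposal is correct and follows essentially the same route as the paper: argue that whenever $E_t=E$ and an evaluation occurs the relevant expansion clause (Line~\ref{algline:cond2} for $r_E\geq\rho_0$, Line~\ref{algline:cond3} for $r_E<\rho_0$) must fail, substitute the explicit form $b_t(E)=\sigma\sqrt{2\log(1/\delta_t)/n_{t,E}}$, solve for $n_{t,E}$, and bound $\log(1/\delta_t)\leq\log(1/\delta_{T_n})$ via Lemma~\ref{lemma:tn_bound}. One minor remark: the formula for $b_t(E)$ is its definition in Lemma~\ref{lemma:concentration2}, not a consequence of the event $\mc{E}_2$, so no probabilistic assumption is actually needed here.
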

\begin{proof}
The result follows  from the cell expansion conditions given in Lines~\ref{algline:cond1}~and~\ref{algline:cond1} (corresponding to $r_E \geq \rho_0$) and Line~\ref{algline:cond3} (corresponding to $r_E<\rho_0$) in \algoref{algo:algo1},  i.e., $b_t(E) \leq L(\sqrt{D}r_E)^{\alpha_1}$ and $b_t(E) \leq L(\sqrt{D}r_E)^{(k+\alpha)}$ respectively.

Suppose at some time $t$, we have $E=E_t$ and $r_E \geq \rho_0$. Then if the algorithm evaluates the function $f$ in round $t$, we must have $b_t(E) \geq L(\sqrt{D}r_E)^{\alpha_1}$.  Now, since $b_t = \sqrt{2 \log(1/\delta_t)/n_{t,E}}$, this implies $n_{t,E} \leq \frac{2\log(1/\delta_t)}{L^2 D r_E^{2\alpha_1}} \leq \frac{2\log(1/\delta_{T_n})}{L^2 D r_E^{2\alpha_1}}$.  This gives us an upper bound on the number of times the algorithm evaluates points in a  cell $E$ before expanding. The result for the case $r_E \leq \rho_0$ follows in a similar manner. 
\end{proof}

\begin{lemma}
\label{lemma:smooth3}
Suppose \algoref{algo:algo1} selects a cell $E \neq \domain$ at any time $t$ according to Line~\ref{algoline:candidate} of \algoref{algo:algo1}, and further assume that the cell $E$ was added to the partition by expanding a cell $F \supset E$ at some time $s<t$. 
Then  $\sup_{x \in E} f(x^*) - f(x) = \tOh{r_F^{\alpha_1}}$ if $r_F \geq \rho_0$ and $\sup_{x \in E} f(x^*) - f(x) = \tOh{r_F^{(k+\alpha)}}$ if $r_F \leq \rho_0$. Recall that $\alpha_1 \coloneqq \max \{\alpha, \min\{1, k\}\}$. 
\end{lemma}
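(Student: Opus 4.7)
The plan is to combine three ingredients: (i) the fact that $E$ being selected at time $t$ via Line~\ref{algoline:candidate} implies $U_{t,E} \geq U_{t,E^*_t} \geq f(x^*)$ under the good event $\mc{E}$, where $E^*_t \in \mc{P}_t$ is the cell containing the maximizer $x^*$; (ii) the relation $U_{t,E} = \min\{u_E^{(0)}, u_{t,E}^{(1)}, u_{t,E}^{(2)}\} \leq u_E^{(0)}$, so that $f(x^*) \leq u_E^{(0)}$; and (iii) the fact that $u_E^{(0)}$ was assigned at the time $s<t$ when the parent cell $F$ was expanded, and hence inherits one of three explicit forms determined by which of Lines~\ref{algline:cond1}--\ref{algline:cond3} of \algoref{algo:algo1} triggered the expansion. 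Putting these together reduces the problem to showing that $u_E^{(0)} - f(x) = \tOh{r_F^{\alpha_1}}$ (resp. $\tOh{r_F^{k+\alpha}}$) uniformly over $x \in E$, for the two regimes of $r_F$.

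For the regime $r_F \geq \rho_0$, the expansion of $F$ at time $s$ must have been triggered by Line~\ref{algline:cond1} or Line~\ref{algline:cond2}, so that $u_E^{(0)}$ equals $u_{s,F}^{(1)}$ or $u_{s,F}^{(2)}$ respectively. In the first sub-case I would use \lemmaref{lemma:concentration1} to write $\mu_s(x_{s,F}) \leq f(x_{s,F}) + \beta_n\sigma_s(x_{s,F})$, combine with the triggering bound $\beta_n\sigma_s(x_{s,F}) \leq L(\sqrt{D}r_F)^{\alpha_1}$, and then invoke the \holder~estimate $|f(x) - f(x_{s,F})| \leq L(\sqrt{D}r_F)^{\alpha_1}$ valid for any $x \in E \subset F$, yielding $u_E^{(0)} - f(x) \leq 4L(\sqrt{D}r_F)^{\alpha_1}$. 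The second sub-case $u_E^{(0)} = u_{s,F}^{(2)}$ is handled analogously, using \lemmaref{lemma:concentration2} to control $\hat{\mu}_s(F) - \tilde{f}_F$, the condition $b_s(F) \leq L(\sqrt{D}r_F)^{\alpha_1}$, and the bound $|\tilde{f}_F - f(x)| \leq L(\sqrt{D}r_F)^{\alpha_1}$ on the average.

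For the regime $r_F < \rho_0$ the expansion was triggered by Line~\ref{algline:cond3} and $u_E^{(0)} = \hat{f}_E(x_E) + 2\texttt{err}$, with $\texttt{err}$ computed from \maxerr applied to the parent $F$. \lemmaref{lemma:LP1} gives $\hat{f}_E(x_E) \leq f(x_E) + \texttt{err}$ on the good event, and the crucial observation is that the side length $\tilde{r}$ chosen in Line~\ref{algline:length} of \algoref{algo:expand} satisfies $L(\sqrt{D}\tilde{r})^{\alpha_1} \leq \texttt{err}$ by construction, so that the \holder~variation of $f$ within $E$ is itself bounded by $\texttt{err}$. It therefore remains to show $\texttt{err} = \tOh{L r_F^{k+\alpha}}$: the deterministic component $e_D = (1+\|\vec{w}_{F,x}\|_1) L(\sqrt{D}r_F)^{k+\alpha}$ contributes $\Oh{L r_F^{k+\alpha}}$ provided $\|\vec{w}_{F,x}\|_1 = \Oh{1}$, while the stochastic component $e_S = \sigma\|\vec{w}_{F,x}\|_2\sqrt{2\log(2/\delta)}$ can be tied back to the triggering condition $b_s(F) \leq L(\sqrt{D}r_F)^{k+\alpha}$ via a bound of the form $\|\vec{w}_{F,x}\|_2 = \Oh{1/\sqrt{n_{s,F}}}$.

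I expect the main obstacle to be this last sub-step, namely the uniform (in $x \in F$) control of the LP interpolation weights $\vec{w}_{F,x}$. Establishing $\|\vec{w}_{F,x}\|_1 = \Oh{1}$ and $\|\vec{w}_{F,x}\|_2 = \Oh{1/\sqrt{n_{s,F}}}$ requires that the sampled points inside $F$ form a sufficiently regular (near-uniform) design, which is precisely what \lemmaref{lemma:concentration3} delivers on the event $\mc{E}_3$, together with classical quadrature estimates for local polynomial interpolation on approximate grids (cf. \citep{nemirovski2000topics}). This is also where the ``$n$ large enough'' caveat of Remark~\ref{remark:n_large_enough} intervenes, since it is needed to guarantee both $n_{s,F} \geq (k+2)^D$ (so that \eqref{LP_opt} admits a unique minimizer) and that the cover radius $\epsilon_{s,F}$ from \lemmaref{lemma:concentration3} is small compared with $r_F$.
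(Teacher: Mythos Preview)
Your proposal is correct and follows essentially the same three-case argument as the paper's proof, with the same technical ingredients: the selection inequality $f(x^*)\leq U_{t,E}\leq u_E^{(0)}$, the triggering conditions of Lines~\ref{algline:cond1}--\ref{algline:cond3}, the concentration events of Lemmas~\ref{lemma:concentration1}--\ref{lemma:concentration3}, and the Nemirovski weight-norm bounds $\|\vec{w}_{F,x}\|_1=\Oh{1}$, $\|\vec{w}_{F,x}\|_2=\Oh{1/\sqrt{n_{s,F}}}$ for the LP case. The only cosmetic difference is that in the first two cases the paper routes the bound through the instantaneous regret $\ireg(x_s)$ at the expansion time $s$ (via~\eqref{eq:ireg1}--\eqref{eq:ireg2}) rather than directly through $u_E^{(0)}$, but this amounts to the same chain of inequalities.
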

\begin{proof}

We consider three possible ways in which the cell $F$ was expanded at some time $s<t$ to introduce $E$ to the partition $\mc{P}_s$. The first two cases correspond to the condition $r_F \geq \rho_0$ while the third cases corresponds to the condition $r_F \leq \rho_0$. 

\emph{Case~1:} $E$ was added by expanding its parent cell $F$ by a call to \expand~algorithm  from Line~\ref{algline:cond1} at some time $s$. 
Then we have by construction, $u_E^{(0)} = \mu_s(x_s) + \beta_s \sigma_s(x_s) + L(\sqrt{D}r_F)^{\alpha_1}$. Now, for any $x \in E$, we have $|f(x) - f(x_s)| \leq L(\sqrt{D}r_F)^{\alpha_1}$ since $x \in E \subset F$. Then  by~\eqref{eq:ireg1}, we have 
\begin{align*}
    \ireg(x_s)  &= f(x^*) - f(x_s) \leq 2\beta_s \sigma_s(x_s) + L(\sqrt{D}r_F)^{\alpha_1}  \\
    \Rightarrow f(x^*) - f(x) & \leq 2 \beta_s \sigma_s(x_s) +2 L(\sqrt{D}r_F)^{\alpha_1} 
     \leq 4L(\sqrt{D}r_F)^{\alpha_1} 
\end{align*}

\emph{Case~2:} $E$ was added by expanding its parent cell $F$  by a call to \expand~algorithm  from Line~\ref{algline:cond2} at  time $s<t$. 

In this case, we have the following sequence (here $x_s$ is the candidate point in $F$ which was selected by the algorithm at time $s$, and $x$ is any point in $E$). 
\begin{align*}
    f(x^*) - f(x) & \leq f(x^*) - f(x_s) + f(x_s) - f(x) \leq 2b_s(F) + 2L(\sqrt{D}r_F)^{\alpha_1}   + f(x_s) - f(x) \\
    f(x^*) - f(x)  &\leq 4L(\sqrt{D}r_F)^{\alpha_1}   + f(x_s) - f(x) 
     \leq 5L(\sqrt{D}r_F)^{\alpha_1}.
\end{align*}

\emph{Case~3:} $E$ was added by calling \expand~from Line~\ref{algline:cond2} from some ancestor cell $F \supset E$ at  time $s$. 

Let $x$ denote any point in the cell $E$. Since the cell $E$ was selected by the algorithm at time $t$, we must have the following: 
\begin{align*}
        f(x^*) &\leq u_{t,E}^{(0)}  \stackrel{(a)}{=} (\hat{f}_E + \ttt{err}) + \ttt{err} \stackrel{(b)}{\leq} f(x_E) + \ttt{err} \\
    & \stackrel{(c)}{\leq} f(x) + L (\sqrt{D}r_E)^{\alpha_1} + \ttt{err} \stackrel{(d)}{=}  f(x) +  2\ttt{err},
\end{align*}
where we use \ttt{err} to denote the output of the \maxerr~subroutine called by the \expand~algorithm from the cell $F$ which was refined to introduce $E$ into the partition $\mc{P}_s$ at some time $s < t$. The equality \tbf{(a)} in the above display follows from the assignment of $u_{t,E}^{(0)}$ in Line~\ref{algoline:expand1} of \algoref{algo:expand}, while \tbf{(b)} follows from the error bound of the local-polynomial estimator, \tbf{(c)} uses the fact that $|f(x)-f(x_E)| \leq L(\sqrt{D}r_E)^{\alpha_1}$ and \tbf{(d)} uses the relation between $r_E$ and \ttt{err} according to Line~\ref{algoline:expand2} of \algoref{algo:expand}. 

To complete the proof it remains to show that this term \ttt{err} is also $\mc{O} \lp r_F^{k+\alpha}\rp$. 
First we note that  due to the event $\mc{E}_3$, we know from Lemma~\ref{lemma:concentration3} that the elements of $\mc{D}_{\domain}^{(F)}$ form a $2\sqrt{D}\epsilon_{s,F}$ cover for $F$.
For an $x \in F$ and $h>0$, let $e_S$ and $e_D$ denote the two terms in the error bound for LP estimators introduced in Lemma~\ref{lemma:LP1}. 
Combining the  existing analysis of LP estimators over uniform grids in \citep{nemirovski2000topics} with Lemma~\ref{lemma:concentration3} presented earlier, we know that  there exist constants $C_5, C_6>0$ such that 
\begin{align*}
    e_D &\leq \max_{z \in F}\;(1 + \|\vec{w}_z\|_1) \Phi_D^k \leq C_5 L (r_F)^{k+\alpha}, \quad \text{and} \\
    e_S &\leq  \max_{z \in F} \|\vec{w}_z\|_2 \sigma \sqrt{\log(n_{s,F}/\delta_{T_n})} \leq 
    C_6 \sigma  \sqrt{ \frac{ \log(n_{s,F}/\delta_t)}{n_{s,F}}}. 
\end{align*}
The first inequality uses \holder~assumption on $f$ along with \citep[Eq.~1.45]{nemirovski2000topics}, while the second result follows from an application of Lemma~1.3.1 of \citep{nemirovski2000topics} along with  Lemma~\ref{lemma:concentration3}. 

From the condition used for expanding the cell we have that $ \sigma \sqrt{ \frac{ \log(n_{s,F}/\delta_t)}{n_{s,F}}} \leq L (r_F)^{k+\alpha}$ which implies that $\ttt{err} \leq (C_5 + C_6) L (r_F)^{k+\alpha} \coloneqq C_7 (r_F)^{k + \alpha}$. This completes the proof.

\end{proof}
\begin{remark}
\label{remark:ireg_bound}
Note that in the first two cases in the proof of Lemma~\ref{lemma:smooth3} above, we have $r_F = 2r_E$. Furthermore, in the third case, due to the choice of $\tilde{r}$   in the \expand~algorithm, we have that  $ (1/2)(r_F)^{(k+\alpha)/\alpha_1} \leq r_E \leq (r_F)^{(k+\alpha)/\alpha_1}$. 
As a result, in all these cases, if a cell $E$ is selected by the algorithm at time $t$, then we must also have $\sup_{x \in E}\; f(x^*) - f(x) \leq \tilde{\mc{O}}\lp r_E^{\alpha_1} \rp$. 
\end{remark}

Next, we obtain upper bounds on the radius of the smallest cell activated by the algorithm. Before stating the result, we recall that $\Tau \subset \{1, 2, \ldots, t_n\}$ denotes the times at which the algorithm evaluates $f$. $\Tau$ is further partitioned into  $\Tau_1 = \{t \in \Tau: r_{E_t}\geq \rho_0\}$ and $\Tau_2 = \Tau \setminus \Tau_1$. Furthermore, we have $|\Tau|=n$, $|\Tau_1|=n_1$ and $|\Tau_2|=n_2 = n-n_1$.

\begin{lemma}
\label{lemma:smooth5}
Let $2^{-j^*}$ be the radius of the smallest active cell in the partition $\mc{P}_{t_n}$ where $t_n$ denotes the time at which the algorithm stops. Then we have $2^{-j^*} = \tOh{n^{-1/(D+2\alpha_1)}}$   if $n_2 \leq n/2$ and 
$2^{-j^*} = \tOh{n^{-1/(D+2\nu)}}$ if $n_2 \geq n/2$. 
\end{lemma}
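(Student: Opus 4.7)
The plan is a volumetric counting argument: write $n = \sum_{E \in \mc{P}_{t_n}} n_{t_n, E}$, upper-bound each term using an extension of Lemma~\ref{lemma:smooth2}, upper-bound the number of cells at each dyadic level by a volume argument, and invert the resulting inequality to obtain an upper bound on the smallest cell radius $r^\star = 2^{-j^\star}$.

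As a first step, I would extend Lemma~\ref{lemma:smooth2} from expanded cells to cells still present in the final partition. If $E \in \mc{P}_{t_n}$ was repeatedly selected for evaluation without triggering expansion, then on each such round the expansion conditions in Lines~\ref{algline:cond1}--\ref{algline:cond3} failed; in particular $b_t(E) > L(\sqrt{D}r_E)^{\alpha_1}$ when $r_E \geq \rho_0$ (or with exponent $k+\alpha$ when $r_E < \rho_0$). Rearranging as in the proof of Lemma~\ref{lemma:smooth2} yields $n_{t_n, E} = \tOh{r_E^{-2\alpha_1}}$ or $\tOh{r_E^{-2(k+\alpha)}}$. Combined with the fact that $\mc{P}_{t_n}$ tiles $[0,1]^D$, so that at most $2^{jD}$ cells at dyadic depth $j$ can appear, the contribution of level $j$ to $n$ is at most $\tOh{2^{j(D+2\alpha_1)}}$ or $\tOh{2^{j(D+2(k+\alpha))}}$, and a geometric sum over $0 \leq j \leq j^\star$ is dominated by its deepest term.

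The case split follows. When $n_2 \leq n/2$, so $n_1 \geq n/2$, I would restrict the sum to cells with $r_E \geq \rho_0$ to obtain $n/2 \leq n_1 \leq \tOh{(1/r^\star_{(1)})^{D+2\alpha_1}}$, where $r^\star_{(1)}$ is the smallest radius among such cells; inversion gives $r^\star_{(1)} \leq \tOh{n^{-1/(D+2\alpha_1)}}$. If the overall smallest cell coincides with $r^\star_{(1)}$ we are done; otherwise $r^\star < \rho_0$, and the input specification $\rho_0 \geq (\gamma_n/\sqrt{L n D^{\alpha_1}})^{1/\alpha_1}$ can be checked to already force $\rho_0 \leq \tOh{n^{-1/(D+2\alpha_1)}}$, so the bound holds a fortiori. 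The case $n_2 \geq n/2$ is symmetric with the exponent $k+\alpha$ in place of $\alpha_1$; here the existence of any cell with $r_E < \rho_0$ automatically puts the overall minimum in the deep regime, which is guaranteed whenever $n_2 \geq 1$.

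The main technical obstacle is this cross-regime reconciliation: verifying that when the smallest cell happens to sit on the ``wrong'' side of $\rho_0$ relative to the case being analyzed, the coupling between the two regimes provided by the defining inequality for $\rho_0$ is enough to recover the desired bound. A secondary wrinkle is checking that the per-cell bound from Lemma~\ref{lemma:smooth2} genuinely applies to unexpanded cells of the final partition, but this is essentially immediate from reading off the negated expansion triggers in Algorithm~\ref{algo:algo1}. Once these points are handled, the remainder of the proof reduces to the geometric-sum manipulation sketched above.
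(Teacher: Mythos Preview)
Your volumetric counting strategy is exactly what the paper uses, but the specific decomposition you propose breaks down precisely at the ``cross-regime reconciliation'' step. Writing $n = \sum_{E \in \mc{P}_{t_n}} n_{t_n,E}$ and then restricting to final-partition cells with $r_E \geq \rho_0$ does \emph{not} upper-bound $n_1$: an evaluation counted in $\Tau_1$ (performed while the selected cell had radius $\geq \rho_0$) may land in a region that is subsequently refined below $\rho_0$, so its data point ends up in a final-partition cell with $r_E < \rho_0$ and is dropped from your restricted sum. In fact the restricted sum is a \emph{lower} bound on $n_1$, so the chain $n/2 \leq n_1 \leq \tOh{(1/r^\star_{(1)})^{D+2\alpha_1}}$ is in the wrong direction. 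Your proposed patch---deducing an upper bound on $\rho_0$ from the input specification $\rho_0 \geq (\igain/\sqrt{LnD^{\alpha_1}})^{1/\alpha_1}$---cannot work either, since that specification is purely a lower bound on $\rho_0$.

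The paper sidesteps this by summing not over the final partition but over all cells that were ever selected, grouped by depth: each evaluation $t\in\Tau$ is tagged by the cell $E_t$ active at that moment, so the split $\Tau=\Tau_1\cup\Tau_2$ aligns exactly with the split into depths $j$ with $2^{-j}\geq \rho_0$ versus $2^{-j}<\rho_0$. Lemma~\ref{lemma:smooth2} then gives $n_1 \leq \sum_{j=0}^{j_{\max}} 2^{jD}\cdot C_8\, 2^{2j\alpha_1}$ where $j_{\max}$ is the deepest level ever evaluated in the $r\geq\rho_0$ regime; inverting yields $2^{-j_{\max}} = \tOh{n^{-1/(D+2\alpha_1)}}$, and the smallest active cell is automatically at depth $\geq j_{\max}$ regardless of which side of $\rho_0$ it lies on---so no cross-regime reconciliation is needed at all. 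The $n_2\geq n/2$ case is handled the same way with exponent $k+\alpha$, with one extra wrinkle you gloss over: flag-$2$ expansions in \expand produce radii $\tilde r \approx r_F^{(k+\alpha)/\alpha_1}$ rather than $r_F/2$, so the depths are not dyadic; the paper tracks this via the parameter $\zeta=(k+\alpha)/\alpha_1$ and sums over levels $j_0\zeta^l$.
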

\begin{proof}

Suppose $2^{-j^*}$ is the radius of the smallest active cell when the algorithm halts. Clearly, this radius must be smaller than the radius of the smallest cell that has been expanded by the algorithm, denoted by $2^{-j_1^*}$.  We introduce the notation $j_0 \coloneqq \lceil \log(\sqrt{n}/\igain)/\log(2) \rceil$, and also recall from Lemma~\ref{lemma:smooth2} that for $E$ with $r_E \geq \rho_0$, the number of times the algorithm evaluates the cell $E$ is upper bounded by $m_E \coloneqq C_8 r_E^{-2\alpha_1}$ where $C_8 = 2\log(1/\delta_{T_n})/(L^2D)$. 

First we consider the case when $n_2 \leq n/2$. 
 To bound $2^{-j_1^*}$ we use the following two observations:(a) each cell of radius $2^{-j}, j \leq j_0$ is evaluated no more than $C_82^{2j\alpha_1}$ times, and (b) there are $2^{jD}$ cells of radius $2^{-j}$. Thus, in order for the algorithm to expand a cell of radius $2^{-j_2}$ for some $j_2 \geq 0$ the number of evaluations can be upper bounded by 
\begin{align*}
    \mc{B}_1(j_2) = \sum_{j=0}^{j_2-1} C_82^{j(2\alpha_1 + D)} = C_8\lp 2^{j_2(2\alpha_1 + D)} - 1 \rp. 
\end{align*}
Let $j_2^*$ denote the largest value of $j_2$ for which  $\mc{B}_1(j_2) \leq n_1$. Then we have \begin{align*}
    \mc{B}_1(j_2^*+1) &= C_8\lp 2^{(j_2^*+1)(2\alpha_1+D)} - 1 \rp  > n_1 \geq n/2 \\
    \Rightarrow 2^{j_2^*(2\alpha_1+D)}& \geq \lp \frac{n}{C_8 2^{(2\alpha_1+D+1)}} \rp 
    \Rightarrow 2^{-j_2^*}  \leq \lp \frac{n}{C_8 2^{2\alpha_1+D+1}}\rp^{1/(2\alpha_1+D)} = \Oh{\lp \frac{n}{\log n} \rp^{\frac{1}{2\alpha_1+D}}}. 
\end{align*}
Since $2^{-j_2^*}$ is an upper bound on $2^{-j_1^*}$, this proves the required result. 

Next, we consider the cases in which we have $n_2 \geq n/2$ and $\igain = \Omega(\sqrt{n})$. 
 Introduce the notation $\zeta = (k+\alpha)/\alpha_1$. Note that due to the assumption on $\igain$, we must have that $j_0= \Oh{1}$ which implies that $n_1$ is also $\Oh{1}$.   Similar to the previous case, we will obtain a further upper bound on the radius $2^{-j_1^*}$ using the following two observations: (a) each cell of radius $2^{-j}$ for $j \geq j_0$ is evaluated no more than $C_8 2^{2j(k+\alpha)}$ times, and (b) there are $2^{jD}$ cells of radius $2^{-j}$. Using these two observations, and for any integer $l_0>0$, the total number of observations required before the algorithm expands all cells of radius $2^{-j_0 \zeta^{l_0}}$ can be upper bounded by 
\[
\mc{B}_2(l_0) \coloneqq \sum_{l=0}^{l_0} C_8 2^{j_0 \zeta^l(2(k+\alpha) + D)} \leq C_8(l_0+1) 2^{j_0\zeta^{l_0}(2(k+\alpha) + D)} \coloneqq C_9 2^{j_0 \zeta^{l_0} (2(k+\alpha)+D)}, 
\]
where the inequality holds due to the fact that the last term dominates the summation.
To find an upper bound on $2^{-j_1^*}$, we find the largest value of $l_0$ for which the quantity $C_92^{j_0\zeta^{l_0}(2(k+\alpha)+D)}$ is smaller than $n_2$. For this we assume that $n$ (and hence $n_2$) is large enough that there exists a constant $0<c<1$ such that $c n_2 \leq C_9 2^{j_0\zeta^{l_0}(2(k+\alpha)+D)} \leq n_2$ (see  Remark~\ref{remark:n_large_enough}). From this we get the important inequality 
\begin{equation}
 2^{-j_0\zeta^{l_0}} = \mc{O} \lp \lp \frac{n_2}{\log n_2}\rp ^{-1/(2(k+\alpha)+D)} \rp = \mc{O} \lp \lp \frac{n}{\log n} \rp^{-1/(2(k+\alpha)+D)} \rp,
\end{equation}
where the second inclusion in the above display uses the assumption that $n_2 \geq n/2$. 
\end{proof}

Finally, we have all the ingredients to present the bounds on the simple and cumulative regret. 
\begin{lemma}
\label{lemma:smooth6}
Under the event $\mc{E}$, if $n_2 \leq n/2$, we have the following bounds: 
\begin{align*}
    \sreg & = \tOh{n^{-\frac{\alpha_1}{D + 2\alpha_1}}}, \quad \creg = \tOh{ n^{ \frac{ D + \alpha_1}{D + 2\alpha_1}}}, 
\end{align*}
while if $n_2 \geq n/2$ and $\igain = \Omega(\sqrt{n})$, we have the following:
\begin{align*} 
\sreg & = \tOh{ n^{-\frac{(k+\alpha)}{D + 2(k+\alpha)}}}, \quad \creg = \tOh{ n^{ \frac{ 2(k+\alpha) - \alpha_1 + D}{2(k+\alpha) + D}}}. 
\end{align*}
\end{lemma}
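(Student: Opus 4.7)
My plan is to translate the per-round regret control from Remark~\ref{remark:ireg_bound} and Lemma~\ref{lemma:smooth3} into aggregate bounds by grouping evaluations according to the radius of the cell in which they occur, using Lemma~\ref{lemma:smooth2} for the per-cell evaluation budget and Lemma~\ref{lemma:smooth5} for the radius of the smallest active cell. Both parts of the claim (simple and cumulative regret) are handled separately in each of the two regimes $n_2\le n/2$ and $n_2\ge n/2$.

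\textbf{Simple regret.} In either regime, \recommend~returns either the centre $x_{E_n}$ of the smallest cell $E_n\in\mc{P}_{t_n}$ or the point $x_\tau$ minimising $\beta_t\sigma_t(x_t)$. In the latter case, the averaging argument used in Lemma~\ref{lemma:simple_regret1} gives $\sreg = \tOh{\igain/\sqrt n}$, which is dominated by the smoothness bound whenever $\igain=\Omega(\sqrt n)$, so it is enough to control $\sup_{x\in E_n}(f(x^*)-f(x))$. When $n_2\le n/2$, the parent $F\supset E_n$ has $r_F=2r_{E_n}\ge\rho_0$, and Remark~\ref{remark:ireg_bound} yields $\sup_{x\in E_n}(f(x^*)-f(x)) = \tOh{r_{E_n}^{\alpha_1}}$; plugging in $r_{E_n} = \tOh{n^{-1/(D+2\alpha_1)}}$ from Lemma~\ref{lemma:smooth5} gives the first claim. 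When $n_2\ge n/2$ and $\igain=\Omega(\sqrt n)$, every expansion giving rise to $E_n$ goes through Line~\ref{algline:cond3}, so case~3 of Lemma~\ref{lemma:smooth3} applies with $F$ being the most recently expanded cell containing $E_n$; since $r_F$ is exactly the deepest-expanded-level radius identified in the proof of Lemma~\ref{lemma:smooth5}, one has $r_F = \tOh{n^{-1/(D+2(k+\alpha))}}$, and hence $\sreg = \tOh{r_F^{k+\alpha}} = \tOh{n^{-(k+\alpha)/(D+2(k+\alpha))}}$.

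\textbf{Cumulative regret.} I would aggregate $\ireg(x_t) = \tOh{r_{E_t}^{\alpha_1}}$ from Remark~\ref{remark:ireg_bound} by grouping evaluations according to the radius $r$ of the cell they hit. At scale $r$ the number of cells is at most $\Oh{r^{-D}}$ by volume, and each is evaluated at most $\tOh{r^{-2\alpha_1}}$ times if $r\ge\rho_0$ or at most $\tOh{r^{-2(k+\alpha)}}$ times if $r<\rho_0$ by Lemma~\ref{lemma:smooth2}. Multiplying by the per-evaluation regret $\tOh{r^{\alpha_1}}$ yields contributions $\tOh{r^{-(D+\alpha_1)}}$ and $\tOh{r^{-(D+2(k+\alpha)-\alpha_1)}}$ respectively. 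Since both exponents on $r^{-1}$ are strictly positive, the sum over the geometric hierarchy of scales is dominated by the deepest level, whose radius is supplied by Lemma~\ref{lemma:smooth5}: $\tOh{n^{-1/(D+2\alpha_1)}}$ in the first regime and $\tOh{n^{-1/(D+2(k+\alpha))}}$ in the second. Substituting recovers $\tOh{n^{(D+\alpha_1)/(D+2\alpha_1)}}$ and $\tOh{n^{(2(k+\alpha)-\alpha_1+D)/(2(k+\alpha)+D)}}$ respectively.

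The main obstacle is the multi-scale bookkeeping in the second regime, where the partition refines non-dyadically: once $r_F<\rho_0$, Line~\ref{algline:length} of \expand~forces child radii $r_E\asymp r_F^{(k+\alpha)/\alpha_1}$, so cell radii shrink at the geometric rate $\zeta = (k+\alpha)/\alpha_1$ per level, and counts, budgets, and per-step regret bounds all scale with different powers of $r$ that must be reconciled level by level. The key algebraic identity $r_l^{\alpha_1} = r_{l-1}^{k+\alpha}$ between a cell and its parent keeps the coarse Remark~\ref{remark:ireg_bound} bound aligned with the finer case-3 bound of Lemma~\ref{lemma:smooth3}, and is what makes the geometric sums collapse cleanly to the deepest-level contribution supplied by Lemma~\ref{lemma:smooth5}.
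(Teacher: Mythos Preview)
Your plan is essentially the paper's own proof: simple regret via the per-cell suboptimality of Lemma~\ref{lemma:smooth3}/Remark~\ref{remark:ireg_bound} together with the deepest-radius bound of Lemma~\ref{lemma:smooth5}; cumulative regret via a level-by-level sum combining the per-cell evaluation budget (Lemma~\ref{lemma:smooth2}), the volumetric cell count $r^{-D}$, and the per-step regret $r^{\alpha_1}$, with the geometric sum dominated by the deepest level (the paper also carries an explicit tail term $n\,\tOh{2^{-j_1^*\alpha_1}}$ for evaluations at the last level, which lands on the same exponent). One misstatement to fix: in the $x_\tau$ branch of \recommend, $\igain/\sqrt{n}$ is $\Omega(1)$ when $\igain=\Omega(\sqrt{n})$, so it is \emph{not} dominated by the smoothness bound; the paper's proof of this lemma simply does not treat that branch separately (it implicitly takes $z_n=x_{E_n}$), so this is a shared loose end rather than a flaw in your overall strategy.
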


\begin{proof}
The bounds on the simple regret follow by a combination of the results derived in Lemma~\ref{lemma:smooth3} and Lemma~\ref{lemma:smooth5}. More specifically, if $n_1 \geq n/2$, we cannot guarantee that the smallest active cell in $\mc{P}_{t_n}$ has a radius smaller than $\rho_0$. Thus,  we get the following by using the first part of  Lemma~\ref{lemma:smooth2}:
\begin{equation}
    \sreg(z_n) \leq \tOh{2^{-j_1^*\alpha}} \leq \tOh{n^{-\alpha/(D + 2\alpha)}}. 
\end{equation}
Next, for the second case, we know that the smallest active cell must have radius smaller than $\rho_0$. Thus by another application of Lemma~\ref{lemma:smooth2} and by using the bound on $2^{-j^*}$ derived in Lemma~\ref{lemma:smooth6}, we have 
\begin{align*}
    \sreg(z_n) \leq \tOh{ 2^{-j_1^* (k+\alpha) }} \leq \tOh{2^{-j_0\zeta^{l_0} (k+\alpha)}} \leq \tOh{n^{-(k+\alpha)/(2(k+\alpha)+D)}}. 
\end{align*}
This completes the proof of the bounds on simple regret.

Next, we proceed towards the bounds on cumulative regret. 
For the first case where $n_1 \geq n/2$,  we have the following (the term $j_1^*$ was introduced in the proof of Lemma~\ref{lemma:smooth5}): 
\begin{align*}
    \creg &= \sum_{t \in \Tau} f(x^*) - f(x_t) = \sum_{t\in \Tau_1} f(x^*) - f(x_t) + \sum_{t \in \Tau_2} f(x^*) - f(x_t) \\ 
    &  \stackrel{(a)}{\leq} \tOh{\sum_{j=1}^{j_1^*} 2^{2j\alpha_1} 2^{jD} 2^{-j\alpha_1} } + n \tOh{2^{-j_1^*\alpha_1}} \\
    & = \tOh{ 2^{j_1^*(D+\alpha_1)}} + \tOh{ n 2^{-j_1^*\alpha_1}} \stackrel{(b)}= \tOh{ n^{ \frac{D+\alpha_1}{D+2\alpha_1}}}. 
\end{align*}
In the above display, \tbf{(a)} uses the fact that the number of times the algorithm evaluates a cell of radius $2^{-j}$ is upper bounded by $m_E = \tOh{2^{2j\alpha_1}}$ for $j \leq j_0$, and \tbf{(b)} uses the fact that $2^{-j_1^*} = \tOh{n^{-1/(D+2\alpha_1)}}$ as derived in Lemma~\ref{lemma:smooth5}.

Next,  we consider the case where $n_2 \geq n/2$ and $\igain = \Omega(\sqrt{n})$. 
  Since we know from the proof of Lemma~\ref{lemma:smooth5} that in this case $j_0 = \Oh{1}$, this implies that we must have $\sum_{t \in \Tau_1} f(x^*) - f(x_t)  = \mc{O}(1)$ as well.  To complete the proof, we will show that the remaining term $\sum_{t\in \Tau_2} f(x^*) - f(x_t) = \tOh{n^{(D+2(k+\alpha)-\alpha_1)/(D+2(k+\alpha))}}$ which dominates the $\tOh{1}$ term. Again, we proceed as follows (the terms $l_0$, $\zeta$ and $j_0$ were introduced in the proof of Lemma~\ref{lemma:smooth5}):
\begin{align*}
    \sum_{t \in \Tau_2} f(x^*) - f(x_t) & \stackrel{(a)}{\leq} \tOh{\sum_{l=0}^{l_0} 2^{j_0\zeta^l2(k+\alpha)} 2^{j_0\zeta^l D} 2^{-j_0\zeta^l\alpha_1}} + n\tOh{2^{-j_0 \zeta^{l_0} (k+\alpha)}} \\
    & = \tOh{ 2^{j_0\zeta^{l_0}(D+2(k+\alpha)-\alpha_1) }} + \tOh{ n 2^{-j_0 \zeta^{l_0}(k+\alpha) } } \\
    & \stackrel{(b)}{=} \tOh{ n^{(D+2(k+\alpha)-\alpha_1)/(D+2(k+\alpha))}} + \tOh{ n^{1 - (k+\alpha)/(D + 2(k+\alpha))}} \\
    & = \tOh{n^{(D+2(k+\alpha)-\alpha_1)/(D+2(k+\alpha))}}. 
\end{align*}
The inequality \tbf{(a)} in the display above uses the fact that a cell of radius $2^{-j_0\zeta^l}$ is evaluated at most $\tOh{2^{2(k+\alpha) j_0\zeta^l}}$ times from Lemma~\ref{lemma:smooth2}, and that every evaluated point in such as cell is at most $\tOh{2^{-j_0\zeta^l \alpha_1}}$ sub-optimal. The relation \tbf{(b)} uses the result from Lemma~\ref{lemma:smooth5} that $2^{-j_0\zeta^{l_0}} = \tOh{n^{-1/(D + 2(k+\alpha))}}$.

\end{proof}

\begin{remark}
\label{remark:n_large_enough}
During the proof of Theorem~\ref{theorem:regret1}, we made certain assumptions on the budget $n$ being \emph{large enough}. Here we collect those assumptions. 
\begin{itemize}
\item First, as stated in Assumption~\ref{assump:embed}, we assume that $n$ is large enough to ensure that $C_2 \leq \log(n)$ where the constant $C_2$ was introduced in the proof outline of Proposition~\ref{prop:embed1}. 
    \item In Lemma~\ref{lemma:smooth5}, for value of $\nu$ such that  $\igain = \Omega(\sqrt{n})$, we have $j_0 =\Oh{1}$ which implies that the term $n_1 = |\Tau_1|$ is also $\Oh{1}$. Since we assume $n_2 \geq n/2$, this implicitly requires that $n$ is large enough such that $n/2 \geq n_1$. 

    \item In the same derivation, we assume that there exists a universal constant $c\in(0,1/2)$ such that $c n \leq C_9 2^{j_0\zeta^{l_0}(D+2\alpha)} \leq n/2$. The term $j_0$ is $\Oh{1}$ and $C_9$ is $\Oh{\log n \log\log n}$. Thus this assumption essentially requires that $n$ is of the order $C_{10}^{\zeta^{l_0}}$ for some integer $l_0 \geq 1$ and the constant $C_{10}= 2^{j_0(D+2(k+\alpha))}$. 
\end{itemize}
\end{remark}


\newpage 
\section{Pseudo-Code of Heuristic Algorithm from Section~\ref{subsec:practical}}
\label{appendix:heuristic}
We now present the steps of the \ttt{Heuristic} algorithm described in Section~\ref{subsec:practical}. This algorithm proceeds in a manner similar to the LP-GP-UCB algorithm with $k=0$, with two changes:
\begin{enumerate}
    \item The first change is in the definition of the upper confidence bound for a cell $E$ in Line~5 of Algorithm~\ref{algo:heuristic}. Here $U_{t,E}$ is the minimum of only two terms $u_{t,E}^{(i)}$ for $i=1,2$ and we do not have the term $u_{t,E}^{(0)}$ which was obtained by a call to \expand algorithm. 
    
    \item The partition $\mc{P}_t$ is updated at every step of the algorithm by calling a standard decision tree regression function with MSE criterion (in the experiments we used the implementation from \ttt{sklearn} package in python). 
\end{enumerate}

\begin{algorithm}[H]
\SetAlgoLined
\SetKwInOut{Input}{Input}\SetKwInOut{Output}{Output}
\SetKwFunction{LocalPoly}{LocalPoly}
\SetKwFunction{Expand}{Expand}
\newcommand\mycommfont[1]{\ttfamily\textcolor{blue}{#1}}
\SetCommentSty{mycommfont}

\KwIn{$n$, $K$, $B$, $\alpha$, $L$.}
 \tbf{Initialize:}~ $t=1$, $n_e=0$, $\mc{P}_t = \{ \domain \}$,  $\mc{D}_t = \emptyset$\;
 \While{$n_e < n$}{
\For{$E \in \mc{P}_t$}
{  
Draw $x_{t,E} \sim \texttt{Unif}(E)$ \\
$U_{t,E} = \min\{u_{t,E}^{(1)},\; u_{t,E}^{(2)}\}$\\
} 
$E_t \in \argmax_{E \in \mc{P}_t}\; U_{t,E}$, $\;$  $x_t = x_{t,E_t}$\;
\BlankLine
{Observe} $y_t = f(x_t) + \eta_t$\; 
{Update} $\mu_t, \sigma_t$\; 
$\mc{D}_t \leftarrow \mc{D}_t \cup \{(x_t, y_t)\}$ \\
$n_e \leftarrow n_e + 1$ \\ 
$t \leftarrow t+1$ \\
Update $\mc{P}_t$ using Regression Tree on $\mc{D}_t$
 }
 \caption{\ttt{Heuristic} Algorithm}
 \label{algo:heuristic}
\end{algorithm}

\section{Details of Experiments}
\label{appendix:experiments}
For both experiments, we used the values of $B=1.0$ and $L=\sqrt{2}$~(for \lpgpucb and \heuristic), and used the \matern kernel with $\nu=2.5$. The length-scale of the kernel was chosen by maximizing the ML estimate with the first $5$ samples (which were drawn uniformly over $\domain$). For maximizing the acquisition function (in IGPUCB, EI and PI), we used the L-BFGS optimizer of \ttt{scipy.optimize.minimize} function with $10$ random-restarts.

\paragraph{Details of Hyperparameter Tuning Task.} In this task, we studied the performance of the algorithms in tuning the hyperparameters of a Convolutional Neural Network~(CNN) with 2 conv layers and 2 fully connected layers. The input space was five dimensional with the following hyperparameters: 
\begin{itemize} \itemsep 0em
\item \ttt{batch\_size} with values in $\{10, 11, \ldots, 500\}$. 
\item \ttt{kernel\_1} and \ttt{kernel\_2} with values in $\{1, 5, 9\}$. 
\item \ttt{hidden\_nodes} in first linear layer, with values in $\{2,3,\ldots, 40\}$
\item \ttt{log\_learning\_rate} with values in the set $[-6,0]$
\end{itemize}

All the hyperparemters were encoded with values in the unit interval $[0,1]$. For the integer valued arguments, we discretized the unit interval $[0,1]$ while for the real valued arguments we scaled and shifted the values to map it to $[0,1]$. 

The MNIST dataset was used for training and testing the CNN. To make the problem harder, we used only $10000$ randomly selected training samples out of the available $50000$, and also only ran the training loop for $5$ epochs. The entire test set of $10000$ samples was used for computing the test accuracy of the learned model. 

\end{appendix}

\end{document}